\documentclass[10pt]{article}

\usepackage[margin=1in]{geometry}
\usepackage{amsmath,amssymb,amsfonts,amsthm,mathtools}
\usepackage{enumitem}
\usepackage{array}
\usepackage{float}
\usepackage{placeins}
\usepackage{algpseudocode}
\usepackage{xcolor}
\usepackage{graphicx}
\usepackage{subfig}
\usepackage{capt-of}
\usepackage{hyperref}


\floatstyle{ruled}
\newfloat{algorithm}{t}{loa}
\floatname{algorithm}{Algorithm}

\hypersetup{
  colorlinks=true,
  linkcolor=blue,
  citecolor=blue,
  urlcolor=blue
}

\newtheorem{theorem}{Theorem}[section]
\newtheorem{corollary}[theorem]{Corollary}
\newtheorem{lemma}[theorem]{Lemma}
\newtheorem{proposition}[theorem]{Proposition}
\newtheorem{assumption}{Assumption}[section]
\newtheorem{condition}{Condition}[section]
\newtheorem{remark}{Remark}[section]
\newtheorem{definition}{Definition}[section]

\DeclareMathOperator{\msign}{msign}
\DeclareMathOperator{\rank}{rank}
\DeclareMathOperator{\Tr}{Tr}

\newcommand{\R}{\mathbb{R}}
\newcommand{\E}{\mathbb{E}}
\newcommand{\Prb}{\mathbb{P}}
\newcommand{\cD}{\mathcal{D}}
\newcommand{\cI}{\mathcal{I}}
\newcommand{\Orth}{\mathrm{Orth}}
\newcommand{\wh}{\widehat}
\newcommand{\bX}{\mathbf X}
\newcommand{\bY}{\mathbf Y}
\newcommand{\bM}{\mathbf M}
\newcommand{\bO}{\mathbf O}
\newcommand{\bG}{\mathbf G}
\newcommand{\bU}{\mathbf U}
\newcommand{\bV}{\mathbf V}
\newcommand{\bSigma}{\boldsymbol{\Sigma}}
\newcommand{\ip}[2]{\langle #1,#2\rangle}
\newcommand{\norm}[1]{\left\lVert #1\right\rVert}
\newcolumntype{P}[1]{>{\raggedright\arraybackslash}p{#1}}

\title{A Note on Stability for Orthogonalized Matrix Momentum with Client Sampling}
\author{
Da Chang$^{1}$,
Qiankun Shi$^{2}$,
Lvgang Zhang$^{3}$, 
Yu Li$^{4}$, 
Ruijie Zhang$^{1}$ \\[2mm]
{\small $^{1}$University of Chinese Academy of Sciences} 
{\small $^{2}$Sun Yat-sen University}\\
{\small $^{3}$Southern University of Science and Technology} 
{\small $^{4}$George Washington University}
}
\date{}

\begin{document}
\maketitle

\begin{abstract}
We study finite-sample generalization for a client-sampled distributed optimization scheme with matrix-valued parameters and orthogonalized momentum updates. The central quantity is the gap between the population and empirical objectives at the returned model when only a subset of clients participates in each round. Under independent heterogeneous client data, unequal local sample counts, and fixed aggregation weights, we derive a finite-round upper-tail guarantee from a coupled-neighbor stability recursion and a weighted concentration step. The bound keeps the client-selection counts through the amplification factor \(Y_i(\mathcal C)\); in the uniform full-participation full-batch regime, it yields \(\widetilde{\mathcal O}(n^{-1}+n^{-1/2})\) scaling whenever the horizon-dependent amplification terms are controlled. The matrix-orthogonalization rule is required to be Lipschitz along paired trajectories, a condition satisfied by regularized polar-type maps and normalized finite-step Newton--Schulz orthogonalizers. For the unregularized matrix sign, the same argument requires coupled spectral separation, whereas Gaussian smoothing gives a finite-round smoothed variant. A one-dimensional counterexample shows why a gap, smoothing, or regularity condition is necessary.
\end{abstract}

\section{Introduction}

Existing FedMuon analyses focus on how Muon-style matrix orthogonalization changes optimization in federated learning. Their convergence statements typically control averaged stationarity measures or related high-probability analogues for momentum SGD-type methods. We study a complementary finite-sample question: how to control the population-risk gap \(F(\bX_{\rm out})-\wh F_S(\bX_{\rm out})\) at the data-dependent FedMuon output.

The bound is derived by algorithmic stability, using the stability-to-generalization framework of Bousquet and Elisseeff~\cite{bousquet2002stability} and its later use for stochastic gradient methods~\cite{hardt2016train}. The high-probability conversion uses a weighted sharp-stability argument, with a weighted bounded-differences bound retained as a fallback.

The proof uses two facts about Muon~\cite{jordan2024muon,pethick2025training,chang2025convergence}. For \(\bM=\bU\bSigma\bV^\top\), the matrix-sign direction \(\msign(\bM)=\bU\bV^\top\) satisfies \(\norm{\msign(\bM)}_F\le\sqrt q\), where \(q:=\min\{d_1,d_2\}\). FedMuon convergence proofs use this bounded update property to control consensus and client drift~\cite{zhang2025provable,liu2025fedmuon}. However, the exact matrix-sign map is nonlinear and not globally Lipschitz, consistent with classical sensitivity results for polar decompositions~\cite{higham1986computing,li1995new}. A stability proof for a risk-gap bound therefore requires a smoothed or regularized orthogonalization map, a Newton--Schulz map with an explicit Lipschitz bound~\cite{schulz1933iterative,bernstein2024old,amsel2025polar}, or a neighboring-trajectory spectral-gap condition.

The main result is a conditional finite-horizon stability bound for FedMuon with Lipschitz-controlled orthogonalization. It gives a schedule-conditional one-sided high-probability bound for \(F(\bX_{\rm out})-\wh F_S(\bX_{\rm out})\) in terms of client participation, replacement sensitivity, and the Lipschitz constant of the orthogonalization map. The theorem separates the stability recursion from the verification of the orthogonalization condition: smoothed polar maps and normalized finite-step Newton--Schulz maps give global Lipschitz instantiations, while exact matrix sign is handled either through a deterministic neighboring-tube spectral gap or by Gaussian smoothing. A scalar obstruction shows that the deterministic exact-sign gap requirement is substantive.

\medskip\noindent\textbf{Contributions.}
The paper makes the following contributions.
\begin{itemize}[leftmargin=*]
\item We derive a schedule-conditional neighboring-dataset stability recursion for FedMuon with partial client participation, heterogeneous local sample sizes, and weighted aggregation. The recursion keeps the client-level exposure factor \(Y_i(\mathcal C)\) explicit.
\item We convert the stability recursion into a one-sided high-probability bound for \(F(\bX_{\rm out})-\wh F_S(\bX_{\rm out})\) by using a weighted sharp-stability argument, with a weighted bounded-differences branch retained for unbalanced client weights.
\item We separate the FedMuon stability recursion from the regularity of the orthogonalization map. Smoothed polar maps and normalized finite-step Newton--Schulz maps satisfy the required Lipschitz condition globally, while exact matrix sign requires either a neighboring-tube spectral gap or Gaussian smoothing; the scalar obstruction shows that this regularity condition cannot be omitted.
\end{itemize}

\section{Related Work}
\label{sec:main-related-work}
FedMuon analyses study Muon-style matrix orthogonalization in federated optimization, including periodic averaging, partial participation, momentum aggregation, local--global alignment, and bias-corrected LMO variants~\cite{zhang2025provable,takezawa2025fedmuon,liu2025fedmuon}. These works provide optimization and stationarity guarantees, whereas we study the finite-sample risk gap of the FedMuon output.

Muon can be viewed as a matrix-sign or polar-factor update, closely related to norm-constrained linear minimization oracles and update-norm geometry~\cite{jordan2024muon,pethick2025training,bernstein2024old}. Since exact polar factors are not globally Lipschitz near rank changes~\cite{higham1986computing}, the stability proof requires a local Lipschitz condition, a smoothed polar map, a finite-step Newton--Schulz map, or a spectral-gap event~\cite{schulz1933iterative,amsel2025polar}.

MiMuon studies Muon generalization through algorithmic stability and proposes a mixed Muon and momentum SGD optimizer to reduce sensitivity to exact orthogonalization~\cite{huang2026mimuon}. Our analysis applies the same stability perspective to a federated algorithm with partial participation, heterogeneous sample sizes and weights, and schedule-dependent amplification. It also combines a one-sided high-probability risk-gap conversion with Lipschitz control of orthogonalization. The resulting theorem concerns finite-sample generalization, with unregularized exact matrix sign handled under a neighboring trajectory condition.

The proof follows the algorithmic-stability route from neighboring datasets to generalization~\cite{bousquet2002stability,hardt2016train}. For the high-probability step, we use sharp-stability techniques~\cite{feldman2019high,bousquet2020sharper}: the main bound contains a weighted analogue of the logarithmic stability overhead, while a McDiarmid-style weighted bounded-differences inequality~\cite{mcdiarmid1989method} is retained as a fallback for highly unbalanced client weights.

\section{Setup}
\label{sec:setup}

There are \(N\) clients. Client \(i\) has distribution \(\cD_i\) over a common sample space \(\mathcal Z\) and an independent sample set \(S_i=\{z_{i,1},\ldots,z_{i,n_i}\}\), with \(z_{i,j}\sim\cD_i\). Let
\[
n:=\sum_{i=1}^N n_i,
\qquad
\cI:=\{(i,j):i\in[N],\,j\in[n_i]\}.
\]
For a matrix parameter \(\bX\in\R^{d_1\times d_2}\), let the per-example loss be \(\ell:\R^{d_1\times d_2}\times \mathcal Z\to\R_+\). Fix client weights \(p_i>0\), \(\sum_{i=1}^Np_i=1\); the common choices are \(p_i=n_i/n\) and \(p_i=1/N\).

Define
\[
F_i(\bX):=\E_{z\sim\cD_i}[\ell(\bX,z)],
\qquad
\wh F_i(\bX):=\frac1{n_i}\sum_{j=1}^{n_i}\ell(\bX,z_{i,j}).
\]
The weighted global risks are
\[
F(\bX):=\sum_{i=1}^Np_iF_i(\bX),
\qquad
\wh F_S(\bX):=\sum_{i=1}^Np_i\wh F_i(\bX)
=
\sum_{i=1}^N\sum_{j=1}^{n_i}w_{i,j}\ell(\bX,z_{i,j}),
\quad
w_{i,j}:=\frac{p_i}{n_i}.
\]
The effective sample size associated with the weights is
\[
n_{\rm eff}:=
\frac{1}{\sum_{i=1}^N p_i^2/n_i}.
\]
If \(p_i=n_i/n\), then \(n_{\rm eff}=n\). If \(p_i=1/N\), then \(n_{\rm eff}=N^2/(\sum_{i=1}^N 1/n_i)\).

The theorem allows arbitrary fixed client distributions \(\cD_i\), provided that samples are independent across and within clients and the loss, replacement-sensitivity, and orthogonalization conditions stated below hold. Dirichlet non-IID sampling is represented by the choice of different \(\cD_i\)'s; no bounded-heterogeneity parameter such as \(\frac1N\sum_{i=1}^N\norm{\nabla F_i(\bX)-\nabla F(\bX)}_F^2\le \sigma_g^2\) enters the stability-to-generalization argument.

\subsection{Algorithm and Orthogonalization Maps}
\label{sec:algorithm}

Let \(q:=\min\{d_1,d_2\}\). For a matrix \(\bM\in\R^{d_1\times d_2}\) with compact SVD \(\bM=\bU\bSigma\bV^\top\), define the exact Muon matrix sign~\cite{jordan2024muon,pethick2025training} by \(\msign(\bM):=\bU\bV^\top\) and \(\msign(0):=0\). Then \(\norm{\msign(\bM)}_F=\sqrt{\rank(\bM)}\le\sqrt q\) and \(\ip{\bM}{\msign(\bM)}=\norm{\bM}_*\).
A Newton--Schulz implementation uses \(\Orth\) as an approximate orthogonalization map~\cite{schulz1933iterative,bernstein2024old,amsel2025polar}. The theorem analyzes a generic orthogonalization rule satisfying the Lipschitz condition in Condition~\ref{ass:polar}. This condition is global for smoothed polar maps and, by Proposition~\ref{prop:ns-finite-step-main}, for \(\epsilon_{\rm ns}\)-normalized finite-step Newton--Schulz maps with fixed coefficients. Deterministic exact matrix sign is covered under a joint full-rank spectral-gap tube containing all coupled neighboring momenta. The Gaussian-smoothed exact-sign variant in Corollary~\ref{cor:gaussian-smoothed-exact-sign-gen} replaces this deterministic tube assumption by a finite-horizon randomized bound.

Algorithm~\ref{alg:fedmuon} summarizes the finite-horizon FedMuon procedure analyzed here.

\begin{algorithm}[t]
\caption{FedMuon with partial participation and momentum aggregation}
\label{alg:fedmuon}
\begin{algorithmic}[1]
\State \textbf{Input:} initial model \(\bX^0\), initial momentum \(\bM^0\), rounds \(R\), local steps \(E\), participating clients \(K\), stepsize \(\eta\), memory coefficient \(\beta\), orthogonalization rule \(\Orth\), possibly as a call-dependent family \(\{\Orth_\tau\}\).
\For{\(r=0,\ldots,R-1\)}
\State Server samples \(\mathcal C_r\subset[N]\) with \(|\mathcal C_r|=K\) uniformly without replacement.
\For{each client \(i\in\mathcal C_r\) in parallel}
\State Initialize \(\bX_i^{r,0}\leftarrow\bX^r\) and \(\bM_i^{r,0}\leftarrow\bM^r\).
\For{\(k=0,\ldots,E-1\)}
\State Form a full-batch empirical gradient or mini-batch gradient \(\bG_i^{r,k}\).
\State Update \(\bM_i^{r,k+1}\leftarrow \beta\bM_i^{r,k}+(1-\beta)\bG_i^{r,k}\).
\State Set \(\bO_i^{r,k+1}\leftarrow\Orth(\bM_i^{r,k+1})\).
\State Update \(\bX_i^{r,k+1}\leftarrow\bX_i^{r,k}-\eta\bO_i^{r,k+1}\).
\EndFor
\EndFor
\State Server averages \(\bX^{r+1}\leftarrow K^{-1}\sum_{i\in\mathcal C_r}\bX_i^{r,E}\) and \(\bM^{r+1}\leftarrow K^{-1}\sum_{i\in\mathcal C_r}\bM_i^{r,E}\).
\EndFor
\State \Return \(\bX_{\rm out}\leftarrow\bX^R\).
\end{algorithmic}
\end{algorithm}

At communication round \(r\in\{0,\ldots,R-1\}\), the server has a global model \(\bX^r\) and global momentum \(\bM^r\). It samples a subset \(\mathcal C_r\subset[N]\) of size \(K\) uniformly without replacement. Each selected client initializes \(\bX_i^{r,0}=\bX^r\) and \(\bM_i^{r,0}=\bM^r\).
For local step \(k=0,\ldots,E-1\), client \(i\in\mathcal C_r\) forms a full-batch empirical gradient or a mini-batch gradient \(\bG_i^{r,k}\) and performs
\begin{align}
\bM_i^{r,k+1}
&=
\beta \bM_i^{r,k}+(1-\beta)\bG_i^{r,k},
\label{eq:momentum-update}\\
\bO_i^{r,k+1}
&=
\Orth(\bM_i^{r,k+1}),
\label{eq:orth-update}\\
\bX_i^{r,k+1}
&=
\bX_i^{r,k}-\eta \bO_i^{r,k+1}.
\label{eq:model-update}
\end{align}
The server aggregates
\begin{align}
\bX^{r+1}
&=
\frac1K\sum_{i\in\mathcal C_r}\bX_i^{r,E},
\label{eq:server-x}\\
\bM^{r+1}
&=
\frac1K\sum_{i\in\mathcal C_r}\bM_i^{r,E}.
\label{eq:server-m}
\end{align}
The output is the last global model \(\bX_{\rm out}:=\bX^R\).
When the orthogonalization rule is call-dependent, \(\Orth\) in \eqref{eq:orth-update} denotes the call-\((r,k,i)\) map.

\begin{remark}[Relation to FedMuon variants]
The update above follows the momentum-aggregation form of Liu et al.~\cite{liu2025fedmuon}, with matrix orthogonalization and a different momentum-coefficient convention. Zhang and Gao~\cite{zhang2025provable} use the convention in which the gradient has weight \(\beta\). Here, \(\beta\) is the memory coefficient and \(1-\beta\) is the new-gradient weight. Translating between the two conventions is immediate.
\end{remark}

\begin{remark}[Local--global alignment]
\label{rem:alignment}
For the alignment step of Liu et al.~\cite{liu2025fedmuon},
\[
\bX_i^{r,k+1}
=
\bX_i^{r,k}
-\eta\{(1-\alpha)\Orth(\bM_i^{r,k+1})+\alpha\boldsymbol{\Delta}_G^r\},
\]
an analogous extension starts from a separate recursion for the alignment direction \(\boldsymbol{\Delta}_G^r\). If that recursion supplies a Lipschitz constant \(L_\Delta\) on the coupled neighboring trajectory tube, the direction map can be bounded with \((1-\alpha)L_{\Orth}+\alpha L_\Delta\). The main theorem focuses on the momentum-aggregation form \eqref{eq:momentum-update}--\eqref{eq:server-m}.
\end{remark}

\section{Theory}
\label{sec:main-theorem}

\subsection{Assumptions}
\label{sec:assumptions}

The main theorem uses the assumptions explicitly cited in its statement. Assumption~\ref{ass:grad} instantiates the replacement-sensitivity constants in full-batch and mini-batch specializations.

\begin{assumption}
\label{ass:data}
The sample sets \(S_i\) are mutually independent, and samples inside each \(S_i\) are independent draws from \(\cD_i\).
\end{assumption}

\begin{assumption}
\label{ass:randomness}
All data-independent algorithmic randomness is independent of the training sample. This includes \((\bX^0,\bM^0)\), the client-participation schedule, mini-batch index randomness, Gaussian smoothing matrices used before matrix sign, and any other algorithmic randomness. Neighboring runs use the same realization of this randomness.

For a schedule-conditional statement, the participation schedule may be arbitrary after conditioning on it. After conditioning on any selected data-independent randomness, the remaining randomness is still independent of the training sample. When all data-independent randomness is conditioned on, the algorithm is deterministic as a function of the training sample.

For the deterministic random-participation bound using \(\Psi_{R,E,K,N}\), the sets \(\mathcal C_0,\ldots,\mathcal C_{R-1}\) are independent uniformly sampled subsets of \([N]\) of size \(K\). The constants used to define the amplification weights \(a_s\) are fixed independently of this random schedule.
\end{assumption}

\begin{assumption}
\label{ass:loss}
There exist constants \(B_\ell,L_\ell>0\) such that, for every \(z\) and all \(\bX,\bY\in\R^{d_1\times d_2}\),
\[
0\le \ell(\bX,z)\le B_\ell,
\qquad
|\ell(\bX,z)-\ell(\bY,z)|
\le
L_\ell\norm{\bX-\bY}_F .
\]
\end{assumption}

\begin{assumption}
\label{ass:grad}
There exist constants \(L_g,G_{\max}>0\) such that, for every \(z\) and all \(\bX,\bY\),
\[
\norm{\nabla\ell(\bX,z)-\nabla\ell(\bY,z)}_F
\le
L_g\norm{\bX-\bY}_F,
\qquad
\norm{\nabla\ell(\bX,z)}_F\le G_{\max}.
\]
\end{assumption}

\begin{assumption}
\label{ass:sampling}
After conditioning on all data-independent algorithmic randomness relevant to the gradient oracle, there are deterministic constants \(L_g\ge0\) and \(\Delta_c\ge0\), \(c\in[N]\). These constants satisfy
\[
\norm{\bG_{u,S}(\bX)-\bG_{u,S'}(\bY)}_F
\le
L_g\norm{\bX-\bY}_F+\Delta_c\,\mathbf 1\{u=c\}
\]
uniformly over all neighboring datasets \(S,S'\) differing only in one sample of client \(c\), all oracle calls, all clients \(u\in[N]\), and all \(\bX,\bY\).
The constants \(L_g\) and \(\Delta_c\) may depend on the conditioned data-independent randomness, but not on the training samples or on the particular neighboring pair.
\end{assumption}

\begin{remark}
For full-batch empirical gradients, Assumption~\ref{ass:grad} gives \(\Delta_c=2G_{\max}/n_c\). For a mini-batch of size \(b\), the uniform pathwise bound is \(\Delta_c=2G_{\max}/b\). The sharper \(2G_{\max}/n_c\) scale holds only after averaging over the mini-batch index randomness, unless one conditions on realized exposure counts as in Appendix~\ref{app:minibatch}.
\end{remark}

\begin{condition}
\label{ass:polar}
After conditioning on the data-independent algorithmic randomness used to define the orthogonalization rule, each orthogonalization call \(\tau\) uses a deterministic map \(\Orth_\tau\) on \(\R^{d_1\times d_2}\). When the map is call-independent, we write \(\Orth\).

There exist sets \(\mathcal T_\tau\subseteq\R^{d_1\times d_2}\) and a common constant \(L_{\Orth}<\infty\), fixed after this conditioning and before the training sample is drawn, such that the following hold. For every training sample \(S\), every one-sample replacement \(S'\), and every coupled pair of runs on \(S,S'\), all momentum matrices at call \(\tau\) belong to \(\mathcal T_\tau\). Moreover,
\[
\norm{\Orth_\tau(\bM)-\Orth_\tau(\bM')}_F
\le
L_{\Orth}\norm{\bM-\bM'}_F
\qquad
\text{for all }\bM,\bM'\in\mathcal T_\tau
\]
and for every call \(\tau\).
For globally Lipschitz call-independent maps one may take \(\mathcal T_\tau=\R^{d_1\times d_2}\) for all \(\tau\). For deterministic exact matrix sign, the sets \(\mathcal T_\tau\) must form a joint full-rank spectral-gap tube for all coupled neighboring trajectories.
\end{condition}

\begin{remark}
The proof compares \(S\) with every one-sample replacement \(S'\), so Condition~\ref{ass:polar} requires a Lipschitz tube containing both trajectories, unless one uses a globally Lipschitz orthogonalization map. Appendix~\ref{app:polar} gives a global Lipschitz bound for smoothed polar maps, a finite-step normalized Newton--Schulz bound with an explicit recurrence for the Lipschitz constant, a joint-gap condition for deterministic exact sign, and a finite-horizon smoothed-analysis bound for Gaussian-smoothed exact sign. For the smoothed exact-sign variant, fresh independent Gaussian perturbations produce a high-probability no-clipping event, on which the exact smoothed-sign trajectory coincides with a globally Lipschitz clipped-sign trajectory.
\end{remark}

\begin{remark}
The Frobenius-norm bound for \(\msign(\bM)\) is useful in optimization and drift-control arguments. The stability proof also uses sensitivity of directions. The one-sided risk-gap bound depends on the Lipschitz constant \(L_{\Orth}\). Deterministic exact sign requires a spectral-gap condition, while the Gaussian-smoothed exact-sign variant uses the randomized bound in Corollary~\ref{cor:gaussian-smoothed-exact-sign-gen}.
\end{remark}

\subsection{Finite-Horizon Stability Bound}
\label{sec:finite-horizon-stability}

Throughout this section assume \(R,E\ge1\), \(1\le K\le N\), \(\eta\ge0\), and \(\beta\in[0,1]\). These restrictions make \(\mathbf A\) and \(\mathbf b\) entrywise nonnegative in the stability recursion. If \(\beta\notin[0,1]\), the recursion must be rewritten with \(|\beta|\) and \(|1-\beta|\) in place of \(\beta\) and \(1-\beta\).

The proof parallels the MiMuon stability analysis, which studies generalization of single-machine Muon and a mixed Muon--momentum-SGD update~\cite{huang2026mimuon}. The stability principle is similar, but the controlled perturbation is different. Here, the perturbation comes from replacing one sample inside one client, and its effect passes through local steps, server averaging, partial participation, and heterogeneous client weights. These effects are represented by the two-state recursion below and by the realized exposure factor \(Y_i(\mathcal C)\), rather than by a single iteration count and a single training-sample size.

Define the nonnegative \(2\times2\) matrix and vector
\begin{align}
\mathbf A
&:=
\begin{pmatrix}
1+\eta L_{\Orth}(1-\beta)L_g & \eta L_{\Orth}\beta\\
(1-\beta)L_g & \beta
\end{pmatrix},
\label{eq:A-matrix}\\
\mathbf b
&:=
\begin{pmatrix}
\eta L_{\Orth}(1-\beta)\\
1-\beta
\end{pmatrix}.
\label{eq:b-vector}
\end{align}
Let \(\mathbf e_1=(1,0)^\top\), and for \(s=0,\ldots,R-1\) define
\begin{equation}
a_s
:=
\mathbf e_1^\top
\mathbf A^{E(R-1-s)}
\left(\sum_{k=0}^{E-1}\mathbf A^k\right)
\mathbf b.
\label{eq:as-def}
\end{equation}
These constants measure how a one-round perturbation in a selected client propagates to the final model \(\bX^R\); larger \(a_s\) corresponds to larger influence from round \(s\).

For a fixed participation schedule \(\mathcal C=(\mathcal C_0,\ldots,\mathcal C_{R-1})\), define the realized participation amplification for client \(i\) by
\begin{equation}
Y_i(\mathcal C)
:=
\frac1K
\sum_{s=0}^{R-1}
a_s\,\mathbf 1\{i\in\mathcal C_s\}.
\label{eq:Yi-def}
\end{equation}

For \(\delta\in(0,1)\) and \(\varepsilon=(\varepsilon_1,\ldots,\varepsilon_N)\in[0,B_\ell]^N\), define \(n_{\rm eff}^{-1}:=\sum_{i=1}^N p_i^2/n_i\).
Let \(C_{\rm hp}>0\) be a universal numerical constant. The weighted sharp-stability branch is
\begin{equation}
\mathcal H_\delta(\varepsilon)
:=
2\sum_{i=1}^Np_i\varepsilon_i
+
C_{\rm hp}
\left[
\left(
\frac{\sum_{i=1}^N n_i\varepsilon_i^2}{n_{\rm eff}}
\right)^{1/2}
\log(2n)\log\frac{2}{\delta}
+
B_\ell
\sqrt{\frac{\log(2/\delta)}{n_{\rm eff}}}
\right].
\label{eq:Hdelta}
\end{equation}
The weighted bounded-differences fallback is
\begin{equation}
\mathcal G_\delta(\varepsilon)
:=
\sum_{i=1}^Np_i\varepsilon_i
+
\sqrt{
\frac{\log(1/\delta)}{2}
\sum_{i=1}^N n_i
\left[
\left(2-\frac{p_i}{n_i}\right)\varepsilon_i
+
B_\ell\frac{p_i}{n_i}
\right]^2
}.
\label{eq:Gdelta}
\end{equation}
Finally set
\begin{equation}
\mathcal B_\delta(\varepsilon)
:=
\min\left\{
\mathcal H_{\delta/2}(\varepsilon),
\mathcal G_{\delta/2}(\varepsilon)
\right\}.
\label{eq:Bdelta}
\end{equation}
Since
\[
\left(\frac{\sum_{i=1}^N n_i\varepsilon_i^2}{n_{\rm eff}}\right)^{1/2}
\le
\left(\max_i\varepsilon_i\right)\sqrt{\frac{n}{n_{\rm eff}}},
\]
\(\mathcal H_\delta\) also implies the corresponding coarser max-stability branch.

\begin{theorem}
\label{thm:hp-gen}
Assume Assumptions~\ref{ass:data}, \ref{ass:randomness}, \ref{ass:loss}, and \ref{ass:sampling}, together with Condition~\ref{ass:polar}.

Fix a participation schedule \(\mathcal C=(\mathcal C_0,\ldots,\mathcal C_{R-1})\) with \(|\mathcal C_s|=K\).
Condition on all remaining data-independent algorithmic randomness needed for the pathwise statement, including random initialization, mini-batch index randomness, and any randomized implementation details that determine the call-dependent maps \(\Orth_\tau\). Assume that Assumption~\ref{ass:sampling} and Condition~\ref{ass:polar} hold pathwise along all coupled neighboring runs under this conditioning.

Then the constants \(\Delta_i\), \(L_{\Orth}\), and the derived quantities \(\mathbf A\), \(\mathbf b\), \(a_s\), and \(Y_i(\mathcal C)\) are deterministic and do not depend on the training sample. Let \(\bX_{\rm out}=\bX^R\), and set
\begin{equation}
\bar\varepsilon_i(\mathcal C)
:=
\min\{B_\ell,\,L_\ell\Delta_iY_i(\mathcal C)\}.
\label{eq:eps-schedule}
\end{equation}
Then, for every \(\delta\in(0,1)\),
\begin{equation}
\Pr_S\!\left\{
F(\bX_{\rm out})-\wh F_S(\bX_{\rm out})
\le
\mathcal B_\delta\bigl(
\bar\varepsilon_1(\mathcal C),\ldots,\bar\varepsilon_N(\mathcal C)
\bigr)
\right\}
\ge
1-\delta .
\label{eq:main-bound}
\end{equation}
\end{theorem}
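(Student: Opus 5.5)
The plan has two stages. First we prove a deterministic per-client uniform stability bound. Then we feed it into a weighted high-probability stability-to-generalization conversion, which gives each of the two branches \(\mathcal H\) and \(\mathcal G\); a union bound at level \(\delta/2\) each yields the minimum \(\mathcal B_\delta\).

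\textbf{Stage 1 (stability recursion).} Fix the schedule \(\mathcal C\) and all data-independent randomness, so the algorithm is a deterministic map \(S\mapsto\bX^R\). Take neighboring \(S,S'\) differing in one sample of client \(c\), and run coupled trajectories. Along any selected client \(u\) at round \(r\), write \(x_k:=\norm{\bX_u^{r,k}-\bX_u'^{r,k}}_F\) and \(m_k:=\norm{\bM_u^{r,k}-\bM_u'^{r,k}}_F\). Assumption~\ref{ass:sampling} and the momentum update give
\[
m_{k+1}\le\beta m_k+(1-\beta)\bigl(L_g x_k+\Delta_c\mathbf 1\{u=c\}\bigr).
\]
Condition~\ref{ass:polar} applies because both momenta lie in \(\mathcal T_\tau\), so \(x_{k+1}\le x_k+\eta L_{\Orth}m_{k+1}\). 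Substituting the first bound into the second gives exactly
\[
(x_{k+1},m_{k+1})^\top\le\mathbf A(x_k,m_k)^\top+\Delta_c\mathbf 1\{u=c\}\mathbf b
\]
entrywise, and \(\mathbf A,\mathbf b\ge0\). Iterating over \(E\) local steps from the common server state yields
\[
v_u^{r,E}\le\mathbf A^E v^r+\mathbf 1\{u=c\}\Delta_c\Bigl(\sum_{k<E}\mathbf A^k\Bigr)\mathbf b .
\]
Server averaging is a convex combination, so the triangle inequality gives
\[
v^{r+1}\le\mathbf A^E v^r+\frac{\mathbf 1\{c\in\mathcal C_r\}}{K}\Delta_c\Bigl(\sum_{k<E}\mathbf A^k\Bigr)\mathbf b .
\]
Unrolling from \(v^0=0\) and taking the first coordinate gives \(\norm{\bX^R-\bX'^R}_F\le\Delta_cY_c(\mathcal C)\). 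By Assumption~\ref{ass:loss}, \(\sup_z|\ell(\bX^R,z)-\ell(\bX'^R,z)|\le\min\{B_\ell,L_\ell\Delta_cY_c\}=\bar\varepsilon_c\). All constants are fixed before \(S\) is drawn, so they are deterministic.

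\textbf{Stage 2a (bounded differences).} Let \(\Phi(S):=F(\bX^R)-\wh F_S(\bX^R)\). Replacing \(z_{i,j}\) changes \(F(\bX^R)\) by at most \(\sum_u p_u\bar\varepsilon_i\le\bar\varepsilon_i\). It changes \(\wh F_S(\bX^R)\) by at most \((1-w_{i,j})\bar\varepsilon_i+w_{i,j}B_\ell\): the loss at the replaced point moves by at most \(B_\ell\), and the other terms move by at most \(\bar\varepsilon_i\). So the per-coordinate difference is \((2-p_i/n_i)\bar\varepsilon_i+B_\ell p_i/n_i\). McDiarmid over independent coordinates (Assumption~\ref{ass:data}) gives deviation \(\sqrt{\tfrac{\log(1/\delta)}{2}\sum_i n_i[\cdot]^2}\). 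For the mean, the standard replace-one swap gives \(\E\Phi\le\sum_i p_i\bar\varepsilon_i\): write \(\E\wh F_S=\sum w_{i,j}\E\ell(\bX^{R}(S^{(i,j)}),z_{i,j})\) with ghost samples. This is \(\mathcal G_\delta\).

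\textbf{Stage 2b (sharp branch), the main obstacle.} Adapt Bousquet--Klochkov--Zhivotovskiy. Define
\[
g_{i,j}(S):=w_{i,j}\bigl(\E_{z\sim\cD_i}\ell(\bX^R,z)-\ell(\bX^R,z_{i,j})\bigr),
\]
so \(\Phi=\sum_{i}\bigl(p_iF_i-\ldots\bigr)\) decomposes into these terms (up to regrouping \(F=\sum_i p_iF_i=\sum_{i,j}w_{i,j}F_i\)). Each \(g_{i,j}\) has approximately zero conditional mean given \(z_{i,j}\), is bounded by \(w_{i,j}B_\ell\), and has bounded difference \(2w_{i,j}\bar\varepsilon_k\) in coordinates of client \(k\).

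The weighted version of their moment inequality, with the dyadic-partition proof redone for nonuniform weights over the \(n\) indices, should give
\[
\norm{\Phi-\E\Phi}_p\lesssim p\log n\Bigl(\sum_{i}n_i\bar\varepsilon_i^2\sum_i p_i^2/n_i\Bigr)^{1/2}+B_\ell\sqrt{p/n_{\rm eff}} .
\]
Here the Efron--Stein/Marcinkiewicz step supplies \(\sqrt{\sum w^2}=n_{\rm eff}^{-1/2}\). The delicate point is that the conditional means of \(g_{i,j}\) are only zero up to \(2w\bar\varepsilon\); this extra bias is what produces the factor \(2\) in \(2\sum p_i\bar\varepsilon_i\). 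Converting moments to a tail with \(\log(2/\delta)\) gives \(\mathcal H_\delta\). I expect this weighted generalization of the sharp moment bound, and the verification that the logarithmic factor is \(\log(2n)\), to be the hardest part.

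A union bound over the two events at level \(\delta/2\) then yields \eqref{eq:main-bound}.
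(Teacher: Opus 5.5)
Your proposal is correct and follows the paper's proof essentially step for step. It uses the same two-state recursion with $\mathbf A,\mathbf b$, unrolled through server averaging to $\Delta_cY_c(\mathcal C)$; the expected-stability plus weighted McDiarmid argument for $\mathcal G$; a weighted dyadic Bousquet--Klochkov--Zhivotovskiy moment bound with $\sum_a w_a^2=n_{\rm eff}^{-1}$ for $\mathcal H$; and a union bound at level $\delta/2$. The one difference is in how the moment lemma's centering requirement $\E[g_a\mid S_{-a}]=0$ is met. This is centering given all other coordinates, not given $z_{i,j}$ as you wrote. The paper defines $g_a$ through the ghost-replaced dataset $S^a$, so this identity holds exactly, and then charges $2\sum_ip_i\bar\varepsilon_i$ in a one-sided comparison. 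Your plug-in $g_a$ must instead first be recentered by its conditional mean given $S_{-a}$, and that recentering produces the same factor.
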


\begin{remark}
Theorem~\ref{thm:hp-gen} controls the population-minus-empirical gap \(F(\bX_{\rm out})-\wh F_S(\bX_{\rm out})\). An absolute-deviation bound for \(|F(\bX_{\rm out})-\wh F_S(\bX_{\rm out})|\) would require a corresponding lower-tail argument.
\end{remark}

\begin{remark}
If the participation schedule and other algorithmic randomness are random but independent of the training sample, the theorem holds conditionally on every realization for which the pathwise assumptions hold. If these assumptions hold on a data-independent event of probability at least \(1-\tau\), then the same bound holds jointly over the training sample and algorithmic randomness with probability at least \(1-\tau-\delta\), with the right-hand side evaluated using the realized constants. A deterministic random-participation bound such as Corollary~\ref{cor:hp-gen-random} requires deterministic upper bounds on the amplification weights that are fixed before the random participation schedule is drawn.
\end{remark}

The orthogonalization condition is comparable to the separation requirement in earlier Muon stability work. MiMuon makes the sensitivity of exact Muon explicit through a singular-value separation parameter and then reduces the use of orthogonalization by mixing Muon with momentum SGD~\cite{huang2026mimuon}. Here we keep the FedMuon update fixed and state the generic theorem conditionally on a Lipschitz tube for the orthogonalization map. Smoothed polar maps, normalized finite-step Newton--Schulz maps, and Gaussian-smoothed exact sign are three ways to verify this condition; their formal statements are collected in Appendix~\ref{app:polar}.

\begin{corollary}
\label{cor:hp-gen-fullbatch}
Assume Assumptions~\ref{ass:data}, \ref{ass:randomness}, \ref{ass:loss}, and \ref{ass:grad}, together with Condition~\ref{ass:polar}. If each local gradient is the full-batch empirical gradient, then Assumption~\ref{ass:sampling} holds with \(\Delta_i=2G_{\max}/n_i\). For any fixed participation schedule \(\mathcal C\), set
\[
\bar\varepsilon_i^{\rm fb}(\mathcal C)
:=
\min\{B_\ell,(2G_{\max} L_\ell/n_i)Y_i(\mathcal C)\}.
\]
Then, for every \(\delta\in(0,1)\),
\[
\Pr_S\!\left\{
F(\bX_{\rm out})-\wh F_S(\bX_{\rm out})
\le
\mathcal B_\delta\bigl(
\bar\varepsilon^{\rm fb}_1(\mathcal C),\ldots,
\bar\varepsilon^{\rm fb}_N(\mathcal C)
\bigr)
\right\}
\ge
1-\delta .
\]
\end{corollary}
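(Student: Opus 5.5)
The corollary is a direct specialization of Theorem~\ref{thm:hp-gen}. The plan is to verify Assumption~\ref{ass:sampling} with \(\Delta_i=2G_{\max}/n_i\), substitute into \eqref{eq:eps-schedule}, and invoke the theorem unchanged.

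\emph{Full-batch oracle.} For full-batch gradients the oracle is
\[
\bG_{u,S}(\bX)=\frac1{n_u}\sum_{j=1}^{n_u}\nabla\ell(\bX,z_{u,j}).
\]
This oracle uses no mini-batch randomness, so the conditioning in Assumption~\ref{ass:sampling} is vacuous and the constants are automatically deterministic and sample-independent. Fix neighboring datasets \(S,S'\) that differ only in sample \((c,j_0)\).

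\emph{Case \(u\neq c\).} The two client datasets coincide. Averaging the Lipschitz bound of Assumption~\ref{ass:grad} over \(j\) gives
\[
\norm{\bG_{u,S}(\bX)-\bG_{u,S'}(\bY)}_F\le L_g\norm{\bX-\bY}_F.
\]

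\emph{Case \(u=c\).} Add and subtract \(\bG_{c,S}(\bY)\). The term \(\bG_{c,S}(\bX)-\bG_{c,S}(\bY)\) is bounded by \(L_g\norm{\bX-\bY}_F\) in the same way. In \(\bG_{c,S}(\bY)-\bG_{c,S'}(\bY)\) only one summand differs, namely
\[
\frac{1}{n_c}\bigl(\nabla\ell(\bY,z_{c,j_0})-\nabla\ell(\bY,z'_{c,j_0})\bigr),
\]
whose Frobenius norm is at most \(2G_{\max}/n_c\) by the gradient bound. The triangle inequality then gives Assumption~\ref{ass:sampling} with \(\Delta_c=2G_{\max}/n_c\), uniformly over oracle calls, clients, and points \(\bX,\bY\).

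\emph{Applying the theorem.} All other hypotheses of Theorem~\ref{thm:hp-gen} are assumed directly: Assumptions~\ref{ass:data}, \ref{ass:randomness}, \ref{ass:loss}, and Condition~\ref{ass:polar}. Fix the schedule \(\mathcal C\) and condition on the remaining data-independent randomness, for instance the initialization and any randomized \(\Orth_\tau\). Assumption~\ref{ass:sampling} holds pathwise because the bound above is deterministic. Substituting \(\Delta_i=2G_{\max}/n_i\) into \eqref{eq:eps-schedule} gives exactly \(\bar\varepsilon_i^{\rm fb}(\mathcal C)\), and \eqref{eq:main-bound} becomes the stated bound. If the initialization or the orthogonalization maps are random, the bound holds conditionally on each realization. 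Because the right-hand side depends only on \(\mathcal C\) and deterministic constants, it also holds unconditionally.

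There is no real obstacle. The only point needing care is that the replacement term is charged once per oracle call, at the client holding the replaced sample. This matches the indicator \(\mathbf 1\{u=c\}\) in Assumption~\ref{ass:sampling}, so no extra factor arises.
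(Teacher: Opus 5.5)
Your proposal is correct and follows the paper's route. The paper verifies $\Delta_i=2G_{\max}/n_i$ through Lemma~\ref{lem:fullbatch-replacement}, which splits the client-$c$ sum into unchanged and replaced summands rather than adding and subtracting $\bG_{c,S}(\bY)$ (an immaterial difference), and then applies Theorem~\ref{thm:hp-gen} with the resulting coefficients. One small caveat on your closing remark about an unconditional bound: Condition~\ref{ass:polar} allows $L_{\Orth}$, and hence $a_s$ and $Y_i(\mathcal C)$, to depend on the conditioned randomness, in which case the right-hand side must be evaluated at the realized constants; this does not affect the corollary as stated.
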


For random uniform participation, the following factor is a simultaneous Bernstein upper bound on \(Y_i(\mathcal C)\) over all clients. For \(\rho\in(0,1)\), define
\begin{equation}
\Psi_{R,E,K,N}(\rho)
:=
\begin{cases}
\displaystyle
\frac1N\sum_{s=0}^{R-1}a_s,
& K=N,\\[2.2ex]
\displaystyle
\frac1K
\left[
\frac KN\sum_{s=0}^{R-1}a_s
+
\sqrt{
2\frac KN\left(1-\frac KN\right)
\left(\sum_{s=0}^{R-1}a_s^2\right)
\log\frac{2N}{\rho}
}
+
\frac{a_{\max}}{3}\log\frac{2N}{\rho}
\right],
& K<N,
\end{cases}
\label{eq:Psi-def}
\end{equation}
where \(a_{\max}:=\max_{0\le s\le R-1}a_s\).

\begin{corollary}
\label{cor:hp-gen-random}
Assume the conditions of Theorem~\ref{thm:hp-gen}. Suppose \(\mathcal C_0,\ldots,\mathcal C_{R-1}\) are independent uniformly sampled subsets of \([N]\) of size \(K\). Assume, in addition, that \(L_{\Orth}\), which determines \(\mathbf A\), \(\mathbf b\), and \(a_s\), is fixed independently of the random participation schedule. Also assume that the replacement-sensitivity constants \(\Delta_i\) are fixed independently of this schedule. Equivalently, Assumption~\ref{ass:sampling} and Condition~\ref{ass:polar} hold with common constants for all schedules under consideration. Let
\begin{equation}
\varepsilon_i^{\rm det}(\rho)
:=
\min\bigl\{
B_\ell,\,
L_\ell\Delta_i\Psi_{R,E,K,N}(\rho)
\bigr\}.
\label{eq:eps-det}
\end{equation}
Then, for any \(\rho,\delta\in(0,1)\), with probability at least \(1-\rho-\delta\) over the client-participation randomness and the training sample,
\begin{equation}
F(\bX_{\rm out})-\wh F_S(\bX_{\rm out})
\le
\mathcal B_\delta\bigl(
\varepsilon_1^{\rm det}(\rho),\ldots,\varepsilon_N^{\rm det}(\rho)
\bigr).
\label{eq:main-bound-random}
\end{equation}
If, in addition, Assumption~\ref{ass:grad} holds and local gradients are full-batch empirical gradients, one may take \(\Delta_i=2G_{\max}/n_i\). If the required constants hold only on a data-independent event of probability at least \(1-\tau\), the success probability is at least \(1-\tau-\rho-\delta\).
\end{corollary}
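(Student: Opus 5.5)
The plan is to reduce Corollary~\ref{cor:hp-gen-random} to Theorem~\ref{thm:hp-gen} applied conditionally on the schedule, plus a union bound. The reduction needs two facts. First, \(\mathcal B_\delta(\varepsilon)\) is coordinatewise nondecreasing in \(\varepsilon\in[0,B_\ell]^N\). In \(\mathcal H_\delta\) the terms \(\sum_i p_i\varepsilon_i\) and \(\sum_i n_i\varepsilon_i^2\) are increasing. In \(\mathcal G_\delta\) the quantity \((2-p_i/n_i)\varepsilon_i+B_\ell p_i/n_i\) is nonnegative and increasing, because \(p_i/n_i\le1\). A minimum of increasing functions is increasing. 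Second, there is a data-independent event \(\mathcal E_\rho\) on the schedule with \(\Pr(\mathcal E_\rho)\ge1-\rho\) on which \(Y_i(\mathcal C)\le\Psi_{R,E,K,N}(\rho)\) for all \(i\). On \(\mathcal E_\rho\) this gives \(\bar\varepsilon_i(\mathcal C)\le\varepsilon_i^{\rm det}(\rho)\), since \(t\mapsto\min\{B_\ell,L_\ell\Delta_it\}\) is increasing. For this step we need \(a_s,\Delta_i\) to be fixed independently of the schedule, which is exactly the extra hypothesis of the corollary.

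For the concentration step, the case \(K=N\) is trivial. Every \(\mathcal C_s=[N]\), so \(Y_i=\frac1N\sum_s a_s\) deterministically and \(\mathcal E_\rho\) is the whole space. For \(K<N\), fix \(i\). The indicators \(\xi_s:=\mathbf 1\{i\in\mathcal C_s\}\) are independent across \(s\) by independence of the rounds, and each is Bernoulli\((K/N)\). Write \(KY_i=\sum_s a_s\xi_s\) with mean \(\frac KN\sum_s a_s\). The centered summands \(a_s(\xi_s-K/N)\) are bounded above by \(a_s(1-K/N)\le a_{\max}\), and the variance proxy is \(\frac KN(1-\frac KN)\sum_s a_s^2\). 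Bernstein's inequality gives, with probability at least \(1-\rho/N\),
\[
\sum_s a_s\xi_s\le \frac KN\sum_s a_s+\sqrt{2\tfrac KN\bigl(1-\tfrac KN\bigr)\textstyle\sum_s a_s^2\log\frac{N}{\rho}}+\frac{a_{\max}}3\log\frac N\rho .
\]
The definition of \(\Psi\) uses \(\log(2N/\rho)\), which is larger, so the bound still holds. A union bound over \(i\in[N]\) and division by \(K\) yield \(\Pr(\mathcal E_\rho)\ge1-\rho\). The \(a_s\) are nonnegative constants because \(\mathbf A,\mathbf b\) are entrywise nonnegative and \(L_{\Orth}\) is fixed.

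Next, condition on the schedule \(\mathcal C\) together with the other data-independent randomness. By Assumption~\ref{ass:randomness} these are independent of \(S\). Theorem~\ref{thm:hp-gen} then gives, for each fixed \(\mathcal C\), \(\Pr_S\{F-\wh F_S\le\mathcal B_\delta(\bar\varepsilon(\mathcal C))\}\ge1-\delta\). On \(\mathcal E_\rho\), monotonicity upgrades the right-hand side to \(\mathcal B_\delta(\varepsilon^{\rm det}(\rho))\). Integrating over \(\mathcal C\) by Fubini, which uses independence of \(\mathcal C\) and \(S\), the failure probability is at most \(\Pr(\mathcal E_\rho^c)+\sup_{\mathcal C}\Pr_S(\text{fail}\mid\mathcal C)\le\rho+\delta\). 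The full-batch clause follows from Corollary~\ref{cor:hp-gen-fullbatch}, with \(\Delta_i=2G_{\max}/n_i\), which is schedule-independent. For the clause in which the constants hold only on a data-independent event of probability at least \(1-\tau\), add \(\tau\) through one more union bound.

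The main obstacle I expect is the Bernstein step and its bookkeeping: confirming that the one-sided Bernstein form with the \(a_{\max}/3\) term applies to bounded, non-identically-weighted summands, and that the union-bound constant \(2N\) in \(\Psi\) is sufficient. The rest is measurability and monotonicity.
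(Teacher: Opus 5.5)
Your proposal is correct and matches the paper's proof essentially step for step. The paper also (i) obtains the event $\{Y_i(\mathcal C)\le\Psi_{R,E,K,N}(\rho)\ \forall i\}$ with probability at least $1-\rho$ from the Bernstein-plus-union-bound Lemma~\ref{lem:client-concentration}, (ii) uses coordinatewise monotonicity of $\mathcal B_\delta$ to replace $\bar\varepsilon_i(\mathcal C)$ by $\varepsilon_i^{\rm det}(\rho)$, and (iii) applies Theorem~\ref{thm:hp-gen} conditionally on the schedule before a final union bound. Your inline Bernstein step with $\log(N/\rho)$ is valid and slightly sharper than the $\log(2N/\rho)$ built into $\Psi$. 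The one-sided Bernstein form you were worried about does apply, since it needs only the upper bound $a_s(\xi_s-K/N)\le a_{\max}$ and the variance proxy you computed.
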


Corollary~\ref{cor:hp-gen-random} separates two effects that are absent from the single-machine MiMuon setting~\cite{huang2026mimuon}. The replacement sensitivity \(\Delta_i\) depends on the local sample size, while \(\Psi_{R,E,K,N}\) measures how often the affected client is selected and how strongly an early perturbation is amplified by later rounds. The resulting bound therefore depends on the participation schedule and the weights \(p_i\), even in the full-batch case.

The Lipschitz condition in Theorem~\ref{thm:hp-gen} is separate from the FedMuon stability recursion. Smoothed polar maps give a global \(1/\sqrt{\lambda}\)-Lipschitz example, normalized finite-step Newton--Schulz maps give an implementable finite-polynomial example with an explicit coefficient-dependent Lipschitz constant, and Gaussian-smoothed exact sign replaces a deterministic neighboring-tube gap by a finite-horizon no-clipping event. The detailed statements are deferred to Appendix~\ref{app:polar}, specifically Corollaries~\ref{cor:fedmuon-smoothed-polar-hp}, \ref{cor:gaussian-smoothed-exact-sign-gen}, and \ref{cor:fedmuon-ns-hp}.

The exact-sign specialization is a randomized finite-horizon statement for a perturbed matrix-sign update. It adds fresh Gaussian perturbations and pays an additional failure probability for a rare near-rank-deficient event. The scalar obstruction in Proposition~\ref{prop:exact-sign-obstruction} shows the role of this randomization or an alternative deterministic neighboring-tube spectral gap.

The proof of Theorem~\ref{thm:hp-gen} is in Appendix~\ref{app:proof-main}. Appendix~\ref{app:proof-fullbatch-no-sampling} proves Corollary~\ref{cor:hp-gen-fullbatch}, Appendix~\ref{app:proof-random-corollary} proves Corollary~\ref{cor:hp-gen-random}, and Appendix~\ref{app:polar} gives the orthogonalization specializations.

Theorem~\ref{thm:hp-gen} depends on \(N,K,R,E\) through the realized quantities \(Y_i(\mathcal C)\); Corollary~\ref{cor:hp-gen-random} replaces them by \(\Psi_{R,E,K,N}\) under deterministic amplification constants. The sample sizes and client weights enter through \(\Delta_i\), \(n_{\rm eff}\), and the weighted bounded-differences fallback. The hyperparameters \(\eta,\beta,L_g,L_{\Orth}\) enter through \(\mathbf A,\mathbf b\), while confidence parameters enter logarithmically. Dimension enters through orthogonalization constants such as \(L_{\Orth}\), and the risk-generalization proof proceeds through sample-replacement sensitivity rather than a bounded client-drift parameter.

For uniform clients and full participation, Appendix~\ref{app:corollary-proofs} derives the simplified order
\[
\widetilde{\mathcal O}\!\left(
\frac{G_{\max}L_\ell C_{\rm amp}\eta L_{\Orth}ER}{n}
+
\frac{B_\ell}{\sqrt n}
\right)
\]
under an explicit finite-horizon amplification condition. The appendix also records the corresponding partial-participation order and the spectral-radius calculation for the two-state amplification matrix. These rates serve as interpretations of Theorem~\ref{thm:hp-gen}.

\section{Experiments}
\label{sec:experiments}

The experiments combine controlled diagnostics aligned with the stability theorem and complementary CNN optimizer diagnostics. The controlled matrix-classifier experiments use bounded losses, full-batch local gradients, partial participation, and smoothed polar orthogonalization, matching the assumptions used in the displayed stability bound. The CNN experiments evaluate the same orthogonalized update in a standard nonlinear training setting. The reported runs were conducted on a single NVIDIA RTX PRO 6000 Blackwell GPU with 96 GB of memory.

The controlled diagnostics evaluate the quantities appearing in the theorem. Suite A1 varies the smooth-polar regularization and stepsize, thereby changing both the effective Lipschitz constant of the orthogonalization map and the finite-horizon amplification. Suite A2 varies the number of training samples and assesses whether the evaluated bound and neighboring-model perturbation decrease with sample size. Suite A3 varies participation and label-skew settings to examine how realized exposure \(Y_i(\mathcal C)\) is reflected in the measured perturbation. Appendix~\ref{app:experiment-details} gives the complete grids, seeds, and aggregate values.

Figure~\ref{fig:cnn_loss_accuracy_curves} reports training loss, held-out loss, and test accuracy trajectories in the CNN optimizer diagnostics. Figure~\ref{fig:fedmuon_phase_diagnostics} reports the phase behavior over smooth-polar regularization and stepsize, and Figure~\ref{fig:fedmuon_scaling_exposure} isolates the sample-size and participation-exposure effects predicted by the stability recursion.

\begin{figure}[!htbp]
\centering
\includegraphics[width=0.90\linewidth]{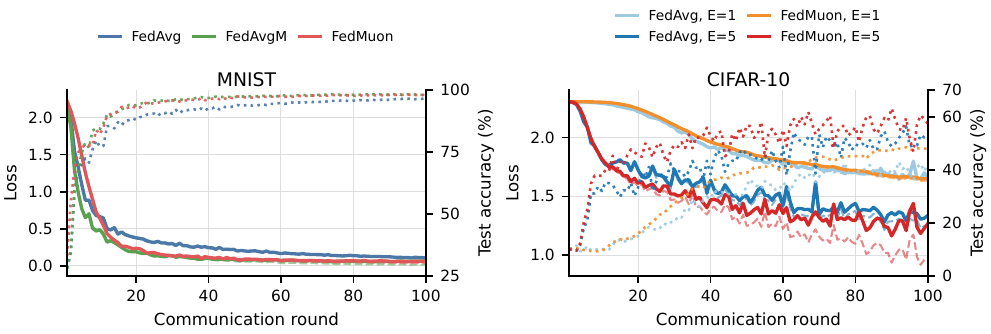}
\caption{CNN loss and accuracy trajectories on MNIST and CIFAR-10. Solid curves denote held-out loss, dashed curves denote training loss, and dotted curves denote test accuracy on the right axis. MNIST averages seeds \(0,1,2\); CIFAR-10 reports seed \(0\).}
\label{fig:cnn_loss_accuracy_curves}
\end{figure}

The CNN curves place the orthogonalized update in a familiar nonlinear training regime. On MNIST, FedMuon reaches a terminal accuracy comparable to FedAvgM, with a held-out loss close to the FedAvgM trajectory. On CIFAR-10, the seed-\(0\) trajectory shows that increasing the number of local steps improves both FedAvg and FedMuon, and FedMuon attains lower terminal training and test losses in the displayed configurations. These diagnostics complement the controlled matrix-classifier experiments by assessing optimizer behavior in CNN training.

\FloatBarrier

\begin{figure}[H]
\centering
\subfloat[MNIST bound.]{%
  \includegraphics[width=0.242\linewidth]{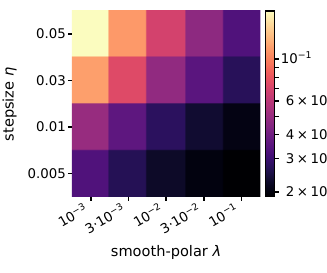}
}%
\hfill
\subfloat[MNIST stability.]{%
  \includegraphics[width=0.242\linewidth]{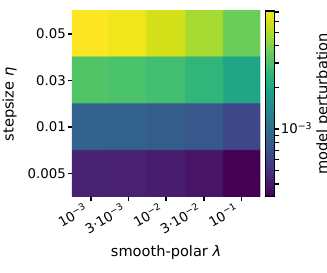}
}%
\hfill
\subfloat[CIFAR-10 bound.]{%
  \includegraphics[width=0.242\linewidth]{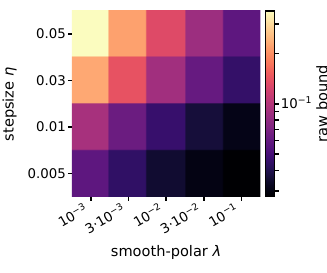}
}%
\hfill
\subfloat[CIFAR-10 stability.]{%
  \includegraphics[width=0.242\linewidth]{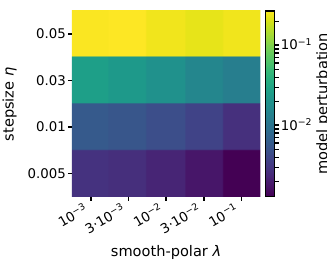}
}%
\caption{Smooth-polar FedMuon phase diagnostics. Panels (a) and (c) report the bound from Theorem~\ref{thm:hp-gen}; panels (b) and (d) report neighboring-dataset model perturbation. Larger stepsizes and smaller smooth-polar regularization increase both quantities.}
\label{fig:fedmuon_phase_diagnostics}
\end{figure}

The phase diagnostics show the same qualitative dependence in the evaluated bound and in direct neighboring-dataset perturbations. Larger stepsizes increase the amplification matrix in \eqref{eq:A-matrix}, while smaller smooth-polar regularization increases the Lipschitz constant \(L_{\Orth}=1/\sqrt{\lambda}\). The resulting increase in both quantities is consistent with the stability recursion and shows that orthogonalization regularity has a visible finite-horizon effect in these diagnostics.

\begin{figure}[H]
\centering
\subfloat[MNIST sample scaling.]{%
  \includegraphics[width=0.325\linewidth]{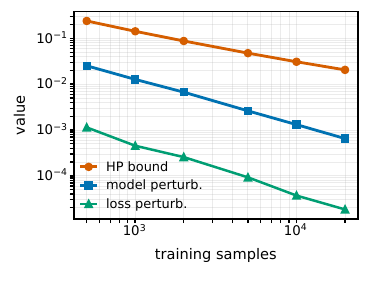}
}%
\hfill
\subfloat[CIFAR-10 sample scaling.]{%
  \includegraphics[width=0.325\linewidth]{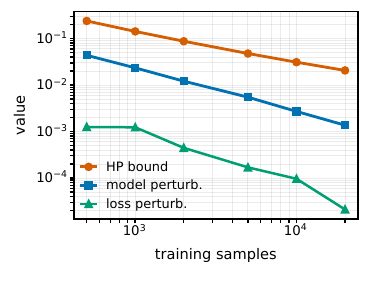}
}%
\hfill
\subfloat[Exposure vs. model perturb.]{%
  \includegraphics[width=0.325\linewidth]{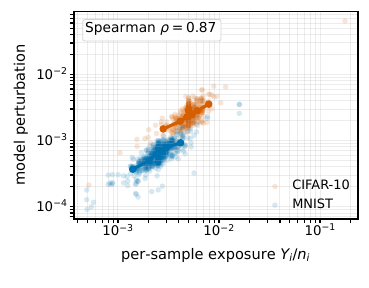}
}%
\caption{Finite-sample and participation diagnostics. Panels (a) and (b) show decreasing bounds and perturbations as the training subset grows. Panel (c) relates per-sample exposure \(Y_i(\mathcal C)/n_i\) to neighboring-model perturbation, with Spearman correlation \(\rho=0.87\).}
\label{fig:fedmuon_scaling_exposure}
\end{figure}

The scaling plots summarize the finite-sample behavior in the controlled setting. As the training subset grows, both datasets exhibit smaller evaluated bounds and smaller neighboring-model perturbations, matching the role of \(\Delta_i=2G_{\max}/n_i\) in the full-batch specialization. The exposure plot separates this sample-size effect from schedule effects: samples belonging to clients with larger realized exposure \(Y_i(\mathcal C)/n_i\) tend to produce larger neighboring perturbations. This is the client-level mechanism retained by Theorem~\ref{thm:hp-gen} and collapsed into \(\Psi_{R,E,K,N}\) only in the random-participation corollary.

\section{Conclusion}
\label{sec:conclusion}

This paper gives a finite-horizon, one-sided high-probability risk-gap bound for FedMuon under partial client participation. The result separates the stability recursion from the orthogonalization analysis: smoothed polar maps and normalized finite-step Newton--Schulz maps provide global Lipschitz bounds, deterministic exact matrix sign is covered by a joint neighboring-trajectory spectral gap, and Gaussian-smoothed exact sign admits a finite-horizon randomized bound. The bound keeps the realized participation amplification explicit and derives simplified rates under verifiable finite-horizon amplification control. The experiments report the quantities appearing in the bound.

\bibliographystyle{unsrt}
\bibliography{reference}

\clearpage
\def\MAINFILE{1}
\ifdefined\MAINFILE
\else
\documentclass[11pt]{article}
\usepackage[margin=1in]{geometry}
\usepackage{amsmath,amssymb,amsfonts,amsthm,mathtools}
\usepackage{enumitem}
\usepackage{array}
\usepackage{graphicx}
\usepackage{xcolor}
\usepackage{hyperref}
\hypersetup{colorlinks=true,linkcolor=blue,urlcolor=blue,citecolor=blue}

\newtheorem{theorem}{Theorem}[section]
\newtheorem{corollary}[theorem]{Corollary}
\newtheorem{lemma}[theorem]{Lemma}
\newtheorem{proposition}[theorem]{Proposition}
\newtheorem{assumption}{Assumption}[section]
\newtheorem{condition}{Condition}[section]
\newtheorem{remark}{Remark}[section]
\newtheorem{definition}{Definition}[section]

\DeclareMathOperator{\msign}{msign}
\DeclareMathOperator{\rank}{rank}
\DeclareMathOperator{\Tr}{Tr}

\newcommand{\R}{\mathbb{R}}
\newcommand{\E}{\mathbb{E}}
\newcommand{\Prb}{\mathbb{P}}
\newcommand{\cD}{\mathcal{D}}
\newcommand{\cI}{\mathcal{I}}
\newcommand{\Orth}{\mathrm{Orth}}
\newcommand{\wh}{\widehat}
\newcommand{\bX}{\mathbf X}
\newcommand{\bY}{\mathbf Y}
\newcommand{\bM}{\mathbf M}
\newcommand{\bO}{\mathbf O}
\newcommand{\bG}{\mathbf G}
\newcommand{\bU}{\mathbf U}
\newcommand{\bV}{\mathbf V}
\newcommand{\bSigma}{\boldsymbol{\Sigma}}
\newcommand{\ip}[2]{\langle #1,#2\rangle}
\newcommand{\norm}[1]{\left\lVert #1\right\rVert}
\newcolumntype{P}[1]{>{\raggedright\arraybackslash}p{#1}}

\begin{document}
\title{Appendix: Proofs for the FedMuon One-Sided High-Probability Risk-Gap Bound}
\author{}
\date{}
\maketitle
\fi

\appendix

\section{Additional Related Work}
\label{sec:related-work}

FedAvg established periodic server averaging of local stochastic updates as a central algorithmic template for communication-efficient federated learning~\cite{mcmahan2017communication}. Local SGD theory formalizes how multiple local steps can preserve convergence while reducing communication~\cite{stich2018local}, and related analyses explain model averaging and linear speedup phenomena in parallel or communication-efficient stochastic optimization~\cite{yu2019parallel,yu2019linear}. Variance-reduction and control-variate methods such as SCAFFOLD address client drift under heterogeneous data~\cite{karimireddy2020scaffold}, while adaptive federated optimizers extend server-side optimization beyond plain averaging~\cite{reddi2020adaptive}. Momentum methods have also been analyzed for non-IID federated learning under appropriate assumptions~\cite{cheng2024momentum}. Recent work on stochastic weight averaging in heterogeneous FL explicitly targets generalization improvements rather than only communication or stationarity~\cite{liu2025fedswa}. These works provide the federated-optimization context for the finite-sample FedMuon setting considered here.

Muon updates replace an unconstrained gradient or momentum direction by an orthogonalized matrix direction~\cite{jordan2024muon}. The method is closely connected to norm-constrained linear minimization oracles~\cite{pethick2025training} and to the view that changing the update norm changes optimization geometry~\cite{bernstein2024old}. Practical implementations often approximate the matrix sign or polar factor by Newton--Schulz iterations; modern Muon-specific treatments study optimal matrix-sign methods, Newton--Schulz Muon, and inexact Muon updates~\cite{schulz1933iterative,amsel2025polar,kim2026convergence,shulgin2025beyond}. The numerical linear algebra literature shows that polar decomposition can be sensitive near rank-deficient matrices~\cite{higham1986computing}. Recent optimization analyses also connect Muon to spectral flattening and spectral preconditioning effects~\cite{ma2026preconditioning,nguyen2026spectral}. These results motivate the trajectory-wise Lipschitz condition imposed on the orthogonalization map.

Recent FedMuon analyses establish optimization guarantees for Muon-style methods in federated settings. Zhang and Gao analyze Muon in federated optimization under standard and heavy-tailed noise assumptions~\cite{zhang2025provable}. Takezawa et al. identify an LMO-bias failure mode of direct LocalMuon and propose a bias-corrected FedMuon variant~\cite{takezawa2025fedmuon}. Liu et al. analyze a matrix-orthogonalized FedMuon method with partial participation, local--global alignment, and momentum aggregation~\cite{liu2025fedmuon}. In the single-machine setting, Muon theory spans the original optimizer formulation, LMO and update-norm interpretations, matrix-sign approximation, and convergence analyses for Muon variants~\cite{jordan2024muon,pethick2025training,bernstein2024old,amsel2025polar,chang2025convergence}. MiMuon is the closest generalization work: it analyzes Muon by algorithmic stability and introduces a mixed Muon--momentum-SGD update to improve the generalization bound~\cite{huang2026mimuon}. The present paper shares the stability viewpoint but studies a federated, partial-participation, one-sided high-probability risk-gap bound under Lipschitz-controlled orthogonalization, with client weights and schedule exposure appearing explicitly. Other recent Muon-adjacent proposals, including adaptive, tensorized, block-periodic, and distributed orthonormalized variants, further illustrate the importance of the orthogonalization map but are not the main comparison target here~\cite{si2025adamuon,zhang2026teon,liu2026muon,khaled2025muonbp,ahn2025dion}.

The proof follows the algorithmic-stability route from training-set perturbations to generalization. Uniform stability was introduced as a general tool for proving learning guarantees~\cite{bousquet2002stability}, and later work used it to analyze stochastic gradient methods~\cite{hardt2016train}. Feldman and Vondrak refined this line with sharper generalization bounds for uniformly stable algorithms and high-probability variants~\cite{feldman2018generalization,feldman2019high}. The final concentration step uses a weighted bounded-differences argument in the spirit of McDiarmid's method~\cite{mcdiarmid1989method}. Compared with these general stability results, the main additional component is the neighboring-dataset recursion for momentum aggregation, partial client participation, and the non-globally-Lipschitz Muon orthogonalization map.

Heavy-tailed noise has been studied in nonconvex stochastic optimization without gradient clipping~\cite{liu2025nonconvex} and in distributed optimization settings~\cite{lee2025efficient}. The FedMuon analysis of Zhang and Gao also includes heavy-tailed noise assumptions~\cite{zhang2025provable}. Separately, Byzantine-robust distributed stochastic optimization with data heterogeneity studies adversarially corrupted workers rather than ordinary random client sampling~\cite{JMLR:v26:25-0613}. These directions are adjacent to the present problem because noise, heterogeneity, system constraints, and data-dependent updates affect convergence and evaluation arguments. The theorem treats a non-Byzantine finite-sample FL setting and uses bounded loss for the high-probability risk-gap conversion.

Our stability analysis and the use of Muon-type optimizers are directly relevant to a range of modern training settings, including scalable and equilibrated Muon training, momentum-gradient alignment, quantization-aware training, adaptive gradient reweighting, LLM fine-tuning with LoRA, agent policy optimization, preference optimization, token-level credit assignment, and segmentation~\cite{liu2025muon,chang2026muoneq,chang2026mgup,zhang2026forget,Tong_2025_ICCV,chang2026calibrating,li2026reason,li2025inspo,li2026oppo,li2025kg}. These applications motivate finite-sample generalization guarantees for structured, geometry-aware optimizers beyond the specific FedMuon setting studied here.

\section{Experimental Details}
\label{app:experiment-details}

This section records the numerical configuration values used in Section~\ref{sec:experiments}. The data are MNIST and CIFAR-10 from the official \texttt{torchvision} splits. Client partitions use Dirichlet label skew, and clients are sampled without replacement in each round. The reported runs were executed on a single NVIDIA RTX PRO 6000 Blackwell GPU with 96 GB memory.

\begin{table}[!htbp]
\centering
\scriptsize
\setlength{\tabcolsep}{2.2pt}
\resizebox{\linewidth}{!}{%
\begin{tabular}{llrrrrrrlllrr}
\hline
Suite & Data & Train & Test & \(N\) & \(K\) & \(R\) & \(E\) & \(\alpha\) & \(\eta\) & \(\lambda\) & Seeds & Rep. \\
\hline
A1 & MNIST & 20000 & 5000 & 20 & 5 & 50 & 1 & 0.5 & 0.005/0.01/0.03/0.05 & 1e-3/3e-3/1e-2/3e-2/1e-1 & 0--2 & 10 \\
A1 & CIFAR-10 & 10000 & 5000 & 20 & 5 & 50 & 1 & 0.5 & 0.005/0.01/0.03/0.05 & 1e-3/3e-3/1e-2/3e-2/1e-1 & 0--2 & 10 \\
A2 & MNIST & 0.5/1/2/5/10/20k & 5000 & 20 & 5 & 50 & 1 & 0.5 & 0.01 & 0.1 & 0--4 & 20 \\
A2 & CIFAR-10 & 0.5/1/2/5/10/20k & 5000 & 20 & 5 & 50 & 1 & 0.5 & 0.01 & 0.1 & 0--4 & 10 \\
A3 & MNIST & 20000 & 5000 & 20 & 2/5/10/20 & 50 & 1 & 0.1/0.5/10 & 0.01 & 0.1 & 0--2 & 20 \\
A3 & CIFAR-10 & 10000 & 5000 & 20 & 2/5/10/20 & 50 & 1 & 0.1/0.5/10 & 0.01 & 0.1 & 0--2 & 10 \\
\hline
\end{tabular}
}
\caption{Numerical grid for the controlled matrix-classifier diagnostics. \(N\) is the number of clients, \(K\) the number of participating clients per round, and Rep. the number of neighboring-dataset replacement trials per run.}
\label{tab:matrix-diagnostic-details}
\end{table}

\begin{table}[!htbp]
\centering
\scriptsize
\setlength{\tabcolsep}{4pt}
\begin{tabular}{llrrrrr}
\hline
Data & Suite & Runs & Gap \(\le\) bound & Mean gap & Mean bound & Mean perturb. \\
\hline
CIFAR-10 & A1 & 60 & 1.00 & 0.01421 & 0.09361 & \(5.06{\times}10^{-3}\) \\
CIFAR-10 & A2 & 30 & 1.00 & 0.02948 & 0.09508 & \(1.84{\times}10^{-4}\) \\
CIFAR-10 & A3 & 36 & 1.00 & 0.00508 & 0.03162 & \(3.73{\times}10^{-5}\) \\
MNIST & A1 & 60 & 1.00 & -0.00729 & 0.05061 & \(1.18{\times}10^{-5}\) \\
MNIST & A2 & 30 & 1.00 & 0.00457 & 0.09495 & \(9.96{\times}10^{-5}\) \\
MNIST & A3 & 36 & 1.00 & -0.00518 & 0.02063 & \(7.06{\times}10^{-6}\) \\
\hline
\end{tabular}
\caption{Aggregate values behind Figures~\ref{fig:fedmuon_phase_diagnostics} and~\ref{fig:fedmuon_scaling_exposure}. The gap column is test loss minus weighted training loss, and the perturbation column is the mean neighboring-model Frobenius perturbation.}
\label{tab:matrix-diagnostic-summary}
\end{table}

\begin{table}[!htbp]
\centering
\scriptsize
\setlength{\tabcolsep}{4pt}
\begin{tabular}{llrrrrrrr}
\hline
Data & Method & \(E\) & Runs & \(\eta\) & Batch & Train loss & Test loss & Acc. (\%) \\
\hline
MNIST & FedAvg & 5 & 3 & 0.05 & 64 & 0.1019 & 0.1065 & 96.73 \\
MNIST & FedAvgM & 5 & 3 & 0.05 & 64 & 0.0161 & 0.0635 & 98.19 \\
MNIST & FedMuon & 5 & 3 & 0.01 & 64 & 0.0390 & 0.0581 & 98.19 \\
CIFAR-10 & FedAvg & 1 & 1 & 0.03 & 128 & 1.6575 & 1.6584 & 42.26 \\
CIFAR-10 & FedAvg & 5 & 1 & 0.03 & 128 & 1.2617 & 1.3372 & 52.24 \\
CIFAR-10 & FedMuon & 1 & 1 & 0.005 & 128 & 1.6325 & 1.6499 & 47.90 \\
CIFAR-10 & FedMuon & 5 & 1 & 0.005 & 128 & 0.9807 & 1.2676 & 57.40 \\
\hline
\end{tabular}
\caption{Terminal values for the CNN optimizer-diagnostic curves in Figure~\ref{fig:cnn_loss_accuracy_curves}. MNIST entries average seeds \(0,1,2\); CIFAR-10 entries use seed \(0\). FedMuon uses \(\beta=0.9\), five Newton--Schulz steps, and \(\epsilon_{\rm ns}=10^{-7}\).}
\label{tab:cnn-diagnostic-details}
\end{table}

All tables and figures in the experimental section are computed from the recorded training trajectories under the configurations listed above. Repeated runs are averaged over the stated seeds.

\section{Auxiliary Stability and Concentration Lemmas}
\label{app:aux}

Throughout the appendix, a sample index is denoted by \(a=(i,j)\in\cI\), and its empirical-risk weight is
\[
w_a=w_{i,j}:=\frac{p_i}{n_i}.
\]
For a dataset \(S\), let \(\mathcal A(S)\) be the output of a deterministic algorithm. In the main proof, the algorithm becomes deterministic after conditioning on the client-participation schedule and the remaining data-independent algorithmic randomness.

\begin{definition}
Two datasets \(S,S'\) are \(a=(i,j)\)-neighbors if they differ only in the sample \(z_{i,j}\). An algorithm \(\mathcal A\) has weighted uniform stability coefficients \(\{\varepsilon_i\}_{i=1}^N\) if for every \(a=(i,j)\), every \(a\)-neighboring pair \(S,S'\), and every test example \(z\),
\[
|\ell(\mathcal A(S),z)-\ell(\mathcal A(S'),z)|
\le
\varepsilon_i.
\]
\end{definition}

\begin{lemma}
\label{lem:expected-stability}
If \(\mathcal A\) has stability coefficients \(\{\varepsilon_i\}_{i=1}^N\), then
\[
\E_S\!\left[
F(\mathcal A(S))-\wh F_S(\mathcal A(S))
\right]
\le
\sum_{i=1}^N p_i\varepsilon_i.
\]
\end{lemma}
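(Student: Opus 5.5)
We prove Lemma~\ref{lem:expected-stability} with the standard replace-one (ghost sample) symmetrization argument of Bousquet and Elisseeff, adapted to the weighted empirical risk \(\wh F_S=\sum_{a\in\cI}w_a\ell(\cdot,z_a)\) with \(w_{i,j}=p_i/n_i\).

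First draw an independent copy \(S'=\{z'_{i,j}\}\) with \(z'_{i,j}\sim\cD_i\), independent of \(S\). For each index \(a=(i,j)\), let \(S^{(a)}\) be the dataset obtained from \(S\) by replacing \(z_{i,j}\) with \(z'_{i,j}\). Since \(\mathcal A(S)\) is independent of \(z'_{i,j}\sim\cD_i\),
\[
\E_S[F_i(\mathcal A(S))]=\E_{S,S'}[\ell(\mathcal A(S),z'_{i,j})]
\]
for every \(j\). Averaging over \(j\in[n_i]\) with weight \(1/n_i\), then over \(i\) with weight \(p_i\), gives
\[
\E[F(\mathcal A(S))]=\sum_{a}w_a\,\E[\ell(\mathcal A(S),z'_a)].
\]

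Next, by Assumption~\ref{ass:data} the pair \((S,z'_a)\) has the same joint law as \((S^{(a)},z_a)\): swapping \(z_a\) and \(z'_a\) preserves the product distribution, because both are drawn from \(\cD_i\) independently of everything else. Hence
\[
\E[\ell(\mathcal A(S),z'_a)]=\E[\ell(\mathcal A(S^{(a)}),z_a)],
\qquad
\E[F(\mathcal A(S))-\wh F_S(\mathcal A(S))]=\sum_a w_a\,\E\bigl[\ell(\mathcal A(S^{(a)}),z_a)-\ell(\mathcal A(S),z_a)\bigr].
\]
Since \(S\) and \(S^{(a)}\) are \(a\)-neighbors, the definition of weighted uniform stability, applied with test point \(z=z_a\), bounds each bracket by \(\varepsilon_i\).

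Summing, \(\sum_{j=1}^{n_i}w_{i,j}\varepsilon_i=p_i\varepsilon_i\), so the total is \(\sum_i p_i\varepsilon_i\).

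The only delicate point is the exchangeability step. It uses that the \(z_{i,j}\) are independent within each client, though not identically distributed across clients, so the swap is performed client by client with the matching \(\cD_i\). It also requires \(\mathcal A\) to be deterministic, which is true after the conditioning on algorithmic randomness described in the main proof. Integrability is automatic because \(0\le\ell\le B_\ell\) by Assumption~\ref{ass:loss}.
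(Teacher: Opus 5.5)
Your proposal is correct and is essentially the paper's ghost-sample replace-one argument. The only difference is cosmetic: you apply the exchangeability swap to the population term and invoke stability at the test point \(z_a\), whereas the paper swaps the empirical term and invokes stability at the ghost point \(\tilde z_{i,j}\).
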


\begin{proof}
Let \(\tilde z_{i,j}\sim\cD_i\) be an independent ghost sample, and let \(S^{(i,j)}\) be \(S\) with \(z_{i,j}\) replaced by \(\tilde z_{i,j}\). Then
\[
\E F(\mathcal A(S))
=
\sum_{i=1}^N\sum_{j=1}^{n_i}w_{i,j}
\E\bigl[\ell(\mathcal A(S),\tilde z_{i,j})\bigr],
\]
and
\[
\E \wh F_S(\mathcal A(S))
=
\sum_{i=1}^N\sum_{j=1}^{n_i}w_{i,j}
\E\bigl[\ell(\mathcal A(S),z_{i,j})\bigr].
\]
Because \(S^{(i,j)}\) has the same distribution as \(S\), and \(\tilde z_{i,j}\) plays the role of the replaced sample in \(S^{(i,j)}\),
\[
\E\bigl[\ell(\mathcal A(S),z_{i,j})\bigr]
=
\E\bigl[\ell(\mathcal A(S^{(i,j)}),\tilde z_{i,j})\bigr].
\]
Thus
\begin{align*}
\E\!\left[F(\mathcal A(S))-\wh F_S(\mathcal A(S))\right]
&=
\sum_{i,j}w_{i,j}
\E\!\left[
\ell(\mathcal A(S),\tilde z_{i,j})
-
\ell(\mathcal A(S^{(i,j)}),\tilde z_{i,j})
\right]\\
&\le
\sum_{i,j}w_{i,j}\varepsilon_i
=
\sum_{i=1}^N p_i\varepsilon_i.
\end{align*}
\end{proof}

\begin{lemma}
\label{lem:weighted-bkz-moment}
Let \(Z_1,\ldots,Z_m\) be independent random variables, not necessarily identically distributed. For \(a\in[m]\), let \(g_a=g_a(Z_1,\ldots,Z_m)\) be a real-valued measurable function. Let \(w_a\ge0\) and \(\alpha_a\ge0\). Assume that all conditional expectations below are well-defined and that, for every \(a\in[m]\),
\[
\E[g_a\mid Z_{-a}]=0,
\qquad
\left|\E[g_a\mid Z_a]\right|\le M
\quad\text{a.s.}
\]
Assume also that, for every \(a\ne b\), replacing only \(Z_b\) changes \(g_a\) by at most \(\alpha_b\). Then there is a universal numerical constant \(C_0>0\) such that, for every \(r\ge2\),
\[
\left\|
\sum_{a=1}^m w_ag_a
\right\|_r
\le
C_0\left[
M\sqrt r\left(\sum_{a=1}^m w_a^2\right)^{1/2}
+
r\lceil\log_2(2m)\rceil
\left(\sum_{a=1}^m w_a^2\right)^{1/2}
\left(\sum_{a=1}^m \alpha_a^2\right)^{1/2}
\right].
\]
\end{lemma}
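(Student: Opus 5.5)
We will follow the Bousquet--Klochkov--Zhivotovskiy (BKZ) proof of the sharp moment bound for sums of functions with the ``vanishing conditional mean'' property, tracking the weights \(w_a\) throughout. The argument is a dyadic decomposition over the index set combined with a Marcinkiewicz--Zygmund/Efron--Stein moment inequality.

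\textbf{Step 1: reduce to a power-of-two index set.} First pad \(m\) up to \(2^k\), where \(k=\lceil\log_2 m\rceil\le\lceil\log_2(2m)\rceil\). The padding indices get \(w_a=0\), \(g_a=0\), and dummy independent variables; this changes neither side of the inequality. Next, build the standard nested dyadic partitions \(\mathcal P_0\succ\mathcal P_1\succ\cdots\succ\mathcal P_k\). Here \(\mathcal P_0=\{[m]\}\), \(\mathcal P_k\) consists of singletons, and each block of \(\mathcal P_l\) splits into two halves in \(\mathcal P_{l+1}\). For \(a\), write \(P^l(a)\) for its block at level \(l\). Set \(g_a^{(l)}:=\E[g_a\mid Z_{P^l(a)^c}]\), the conditional expectation given all variables outside the level-\(l\) block. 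Then \(g_a^{(0)}=\E g_a=0\), since \(\E[g_a\mid Z_{-a}]=0\) integrates to zero. We also have \(g_a^{(k)}=\E[g_a\mid Z_{-a}]=0\). Hence we get the telescoping identity
\[
g_a=(g_a-\E[g_a\mid Z_a])+\E[g_a\mid Z_a]-\ldots,
\]
which is more precisely written as \(g_a=\E[g_a\mid Z_a]+\sum_{l=0}^{k-1}(g_a^{(l)}-g_a^{(l+1)})\) plus the difference \(g_a-\E[g_a\mid Z_a]-(g_a^{(0)}-g_a^{(k)})\). In BKZ the decomposition is organized so that \(g_a-\E[g_a\mid Z_a]\) is split across levels conditioning on \(Z_a\) as well, and I will copy that bookkeeping exactly.

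\textbf{Step 2: the leading term.} The terms \(w_a\E[g_a\mid Z_a]\) are independent and bounded by \(w_aM\), and they have mean zero. Their mean is zero because \(\E g_a=0\). The variables are independent because each depends only on \(Z_a\). Khintchine/Hoeffding moment bounds therefore give \(\|\sum_a w_a\E[g_a\mid Z_a]\|_r\lesssim M\sqrt r(\sum_a w_a^2)^{1/2}\). This is the first term of the lemma.

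\textbf{Step 3: the level-\(l\) terms, which are the main obstacle.} Fix a level \(l\) and a block \(P\in\mathcal P_l\). Consider the summands \(h_a^{(l)}:=w_a(\text{level-}l\text{ increment of }g_a)\) for \(a\in P\). Conditionally on \(Z_{P^c}\), the block sum \(H_P:=\sum_{a\in P}h_a^{(l)}\) is a function of \(Z_P\) only. The crucial structural fact is that \(\sum_{P\in\mathcal P_l}H_P\) is a sum of functions which are conditionally independent and mean zero across sibling blocks. This lets us apply the Marcinkiewicz--Zygmund-type inequality \(\|\sum_P H_P\|_r\lesssim\sqrt r\,\|(\sum_P H_P^2)^{1/2}\|_r\). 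Each \(H_P\) is in turn controlled via the bounded-difference property. The increment of \(g_a\) at level \(l\) depends on the variables in the sibling half-block \(P^{l}(a)\setminus P^{l+1}(a)\). Changing any one of those variables \(Z_b\) moves \(g_a\) by at most \(\alpha_b\), so the McDiarmid/Efron--Stein moment bound gives \(\|g_a^{(l)}-g_a^{(l+1)}\|_r\lesssim\sqrt r(\sum_{b\in\text{sibling}}\alpha_b^2)^{1/2}\). Combining these by Cauchy--Schwarz, \(\sum_a w_a(\cdot)\le(\sum w_a^2)^{1/2}(\sum_a\sum_{b\in\text{sib}(a)}\alpha_b^2)^{1/2}\), would overcount by the sibling size. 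The weighted version must instead exploit the fact that each \(b\) affects only indices \(a\) in one sibling block per level. I would therefore bound the block sum directly: \(H_P\) as a function of \(Z_{P}\) has bounded differences \(\sum_{a}w_a\alpha_b\). Using Cauchy--Schwarz over \(a\) inside the block and then over blocks, the level-\(l\) contribution is bounded by \(r\,(\sum_a w_a^2)^{1/2}(\sum_b\alpha_b^2)^{1/2}\). The factor \(r=\sqrt r\cdot\sqrt r\) arises from MZ combined with the bounded-difference moment inequality.

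\textbf{Step 4: assemble.} Summing the \(k\le\lceil\log_2(2m)\rceil\) level contributions with Minkowski gives the second term of the lemma, and adding Step 2 finishes the proof. The delicate point is Step 3, which requires getting the weighted Cauchy--Schwarz so that \((\sum w_a^2)^{1/2}(\sum\alpha_b^2)^{1/2}\) appears per level without extra block-size factors. If a direct bound fails, I would fall back on the cruder \(\max_a w_a\)-type bound and then refine it.
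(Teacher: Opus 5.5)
Your skeleton is the one the paper uses: padding to a power of two, dyadic levels, the terminal term $\sum_a w_a\E[g_a\mid Z_a]$ handled by Marcinkiewicz--Zygmund, McDiarmid over the sibling half-block, and one factor per level. However, two steps fail as written, and the second is the heart of the lemma. In Step~1 the conditional expectations must also condition on $Z_a$. With $g_a^{(l)}=\E[g_a\mid Z_{P^l(a)^c}]$ both endpoints are $0$, so $\sum_l(g_a^{(l)}-g_a^{(l+1)})\equiv0$ and nothing is decomposed. The paper uses $g_a^\ell=\E[g_a\mid Z_a,Z_{-B_\ell(a)}]$, which telescopes from $g_a$ (singleton blocks) to $\E[g_a\mid Z_a]$ (the whole index set). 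It then derives that $D_a^\ell=g_a^\ell-g_a^{\ell+1}$ is centered given $Z_{-B}$ from $\E[g_a\mid Z_{-a}]=0$.

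In Step~3 you place the independence across blocks, where it does not hold. Each $D_a^\ell$ is a function of $Z_a$ and of all of $Z_{-B}$, so block sums at the same level share variables, and no Marcinkiewicz--Zygmund inequality across blocks is available. The usable structure is inside a block. Conditionally on $Z_{-B}$, the $D_a^\ell$, $a\in B$, are functions of the disjoint coordinates $Z_a$, hence conditionally independent and centered. Conditional MZ plus the sibling McDiarmid bound then give $\|\sum_{a\in B}w_aD_a^\ell\|_r\lesssim r\,W_BA_{B^\star}$, where $W_B=(\sum_{a\in B}w_a^2)^{1/2}$ and $A_{B^\star}=(\sum_{b\in B^\star}\alpha_b^2)^{1/2}$.

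Your fallback does not substitute for this. As a function of the parent block, the sum has no bounded-difference control in the coordinate $Z_a$ of its own term. If you instead condition on the half containing $a$ and apply McDiarmid only in the sibling variables, you get $\sqrt r\,A_{B^\star}\sum_{a\in B}w_a$. That loses a factor up to $\sqrt{|B|}$, since $\sum_{a\in B}w_a$ can equal $\sqrt{|B|}\,W_B$.

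Across blocks the paper uses only the triangle inequality. The weighted bookkeeping you were looking for is $W_BA_{B^\star}+W_{B^\star}A_B\le W_PA_P$ for siblings $B,B^\star$ with parent $P$ (Cauchy--Schwarz in $\R^2$). Cauchy--Schwarz over parents then gives $\sum_PW_PA_P\le(\sum_aw_a^2)^{1/2}(\sum_a\alpha_a^2)^{1/2}$ per level. Summing over at most $\lceil\log_2(2m)\rceil$ levels yields the second term.
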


\begin{proof}
For a sigma-field \(\mathcal F\), write
\[
\norm{X}_{r\mid\mathcal F}:=\left(\E[|X|^r\mid\mathcal F]\right)^{1/r}.
\]
We first record two inequalities used in the proof.

First, suppose \(H=H(Y_1,\ldots,Y_s)\), where \(Y_1,\ldots,Y_s\) are independent conditional on \(\mathcal F\), and replacing \(Y_j\) changes \(H\) by at most \(c_j\). Then McDiarmid's inequality applied conditionally on \(\mathcal F\) gives
\[
\Prb\left\{
|H-\E[H\mid\mathcal F]|\ge t
\,\middle|\,
\mathcal F
\right\}
\le
2\exp\left(
-\frac{2t^2}{\sum_{j=1}^s c_j^2}
\right).
\]
Integrating this tail bound yields, for all \(r\ge2\),
\begin{equation}
\norm{H-\E[H\mid\mathcal F]}_{r\mid\mathcal F}
\le
C_{\rm bd}\sqrt r
\left(\sum_{j=1}^s c_j^2\right)^{1/2}.
\label{eq:conditional-bd-lr}
\end{equation}

Second, suppose \(X_1,\ldots,X_s\) are conditionally independent and conditionally centered given \(\mathcal F\). Then, for deterministic coefficients \(\lambda_j\),
\begin{equation}
\norm{\sum_{j=1}^s\lambda_jX_j}_{r\mid\mathcal F}
\le
C_{\rm mz}\sqrt r
\left(
\sum_{j=1}^s\lambda_j^2\norm{X_j}_{r\mid\mathcal F}^2
\right)^{1/2}.
\label{eq:conditional-mz}
\end{equation}
Indeed, conditional symmetrization gives a factor \(2\) and introduces independent Rademacher signs \(\xi_j\). Conditional on \(X_1,\ldots,X_s,\mathcal F\), Khintchine's inequality gives
\[
\left(\E_\xi\left|
\sum_{j=1}^s\lambda_j\xi_jX_j
\right|^r\right)^{1/r}
\le
C\sqrt r
\left(\sum_{j=1}^s\lambda_j^2X_j^2\right)^{1/2}.
\]
Finally,
\[
\norm{
\left(\sum_j\lambda_j^2X_j^2\right)^{1/2}
}_{r\mid\mathcal F}^2
=
\norm{
\sum_j\lambda_j^2X_j^2
}_{r/2\mid\mathcal F}
\le
\sum_j\lambda_j^2\norm{X_j}_{r\mid\mathcal F}^2,
\]
which proves \eqref{eq:conditional-mz}.

Choose \(J\) such that \(2^{J-1}<m\le2^J\). Add dummy coordinates with \(w_a=\alpha_a=g_a=0\) so that the padded number of coordinates is \(2^J\). This changes neither side of the desired inequality and satisfies \(J\le\lceil\log_2(2m)\rceil\).

Let \(\mathcal P_\ell\), \(\ell=0,\ldots,J\), be the dyadic partition of the padded index set into blocks of size \(2^\ell\). For \(a\), let \(B_\ell(a)\in\mathcal P_\ell\) denote the block containing \(a\), and define
\[
g_a^\ell
:=
\E[g_a\mid Z_a,Z_{-B_\ell(a)}].
\]
Then \(g_a^0=g_a\), \(g_a^J=\E[g_a\mid Z_a]\), and
\[
g_a
=
\E[g_a\mid Z_a]
+
\sum_{\ell=0}^{J-1}
\left(g_a^\ell-g_a^{\ell+1}\right).
\]

We first bound the terminal term. Let \(h_a(Z_a):=\E[g_a\mid Z_a]\). Since
\[
\E h_a=\E g_a=\E\,\E[g_a\mid Z_{-a}]=0,
\]
the variables \(h_a(Z_a)\) are independent, centered, and bounded by \(M\). Applying \eqref{eq:conditional-mz} with the trivial sigma-field gives
\begin{equation}
\left\|
\sum_a w_a h_a(Z_a)
\right\|_r
\le
C M\sqrt r
\left(\sum_a w_a^2\right)^{1/2}.
\label{eq:terminal-term-bound}
\end{equation}

Fix a level \(\ell<J\). For \(B\in\mathcal P_\ell\), let \(B^\star\) be the dyadic sibling of \(B\), and let \(P=B\cup B^\star\in\mathcal P_{\ell+1}\). For \(a\in B\), define
\[
D_a^\ell:=g_a^\ell-g_a^{\ell+1}.
\]
Then \(D_a^\ell\) is measurable with respect to \(\sigma(Z_a,Z_{-B})\). Conditional on \(Z_{-B}\), the variables \(\{D_a^\ell:a\in B\}\) depend on disjoint coordinates \(\{Z_a:a\in B\}\), and hence are conditionally independent. They are also conditionally centered. Indeed,
\[
\E[g_a^\ell\mid Z_{-B}]
=
\E[g_a\mid Z_{-B}]
=
\E\!\left[\E[g_a\mid Z_{-a}]\mid Z_{-B}\right]
=
0,
\]
and, since \(Z_{-P}\subseteq Z_{-a}\),
\[
\E[g_a^{\ell+1}\mid Z_{-B}]
=
\E[g_a\mid Z_{-P}]
=
\E\!\left[\E[g_a\mid Z_{-a}]\mid Z_{-P}\right]
=
0.
\]
Therefore, by \eqref{eq:conditional-mz},
\begin{align}
\left\|
\sum_{a\in B}w_aD_a^\ell
\right\|_r
&\le
C\sqrt r
\left\|
\left(
\sum_{a\in B}w_a^2
\norm{D_a^\ell}_{r\mid Z_{-B}}^2
\right)^{1/2}
\right\|_r
\nonumber\\
&\le
C\sqrt r
\left(
\sum_{a\in B}w_a^2
\norm{D_a^\ell}_r^2
\right)^{1/2}.
\label{eq:block-mz-step}
\end{align}
The last inequality uses Minkowski in \(L_{r/2}\) and the identity
\[
\norm{\norm{D_a^\ell}_{r\mid Z_{-B}}}_r=\norm{D_a^\ell}_r.
\]

It remains to bound \(\norm{D_a^\ell}_r\). Conditional on \(Z_a\) and \(Z_{-P}\), the quantity \(g_a^\ell\) is a function of the sibling coordinates \(Z_{B^\star}\), and \(g_a^{\ell+1}\) is its conditional expectation over \(Z_{B^\star}\). Replacing \(Z_b\), \(b\in B^\star\), changes this conditional function by at most \(\alpha_b\), because conditional expectation preserves deterministic bounded-difference constants. Applying \eqref{eq:conditional-bd-lr} conditionally on \(\sigma(Z_a,Z_{-P})\) gives
\[
\norm{D_a^\ell}_{r\mid Z_a,Z_{-P}}
\le
C_{\rm bd}\sqrt r
\left(\sum_{b\in B^\star}\alpha_b^2\right)^{1/2}.
\]
Consequently,
\[
\norm{D_a^\ell}_r
\le
C_{\rm bd}\sqrt r
\left(\sum_{b\in B^\star}\alpha_b^2\right)^{1/2}.
\]
Combining this with \eqref{eq:block-mz-step}, we obtain
\[
\left\|
\sum_{a\in B}w_aD_a^\ell
\right\|_r
\le
C r
\left(\sum_{a\in B}w_a^2\right)^{1/2}
\left(\sum_{b\in B^\star}\alpha_b^2\right)^{1/2}.
\]

Define
\[
W_B:=\left(\sum_{a\in B}w_a^2\right)^{1/2},
\qquad
A_B:=\left(\sum_{a\in B}\alpha_a^2\right)^{1/2}.
\]
By the triangle inequality over blocks at level \(\ell\),
\[
\left\|
\sum_a w_a(g_a^\ell-g_a^{\ell+1})
\right\|_r
\le
C r\sum_{B\in\mathcal P_\ell} W_BA_{B^\star}.
\]
For each parent \(P\in\mathcal P_{\ell+1}\) with children \(B\) and \(B^\star\),
\[
W_BA_{B^\star}+W_{B^\star}A_B
\le
\left(W_B^2+W_{B^\star}^2\right)^{1/2}
\left(A_B^2+A_{B^\star}^2\right)^{1/2}
=
W_PA_P.
\]
Hence
\[
\sum_{B\in\mathcal P_\ell} W_BA_{B^\star}
\le
\sum_{P\in\mathcal P_{\ell+1}}W_PA_P
\le
\left(\sum_a w_a^2\right)^{1/2}
\left(\sum_a \alpha_a^2\right)^{1/2}.
\]
Summing over \(\ell=0,\ldots,J-1\) and combining with \eqref{eq:terminal-term-bound} proves the lemma.
\end{proof}

\begin{lemma}
\label{lem:moment-to-tail}
Let \(X\) be a real random variable. Suppose that for all \(r\ge2\),
\[
\norm{X}_r\le ar+b\sqrt r .
\]
Then there is a universal numerical constant \(C>0\) such that, for every \(\delta\in(0,1)\), with probability at least \(1-\delta\),
\[
X
\le
C\left[
a\log\frac{2}{\delta}
+
b\sqrt{\log\frac{2}{\delta}}
\right].
\]
\end{lemma}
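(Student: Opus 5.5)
The idea is to convert the moment bound into a tail bound by Markov's inequality applied to \(|X|^r\), choosing \(r\) of order \(\log(2/\delta)\).

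\textbf{Step 1 (Markov at a fixed order).} For any \(r\ge2\) and \(t>0\),
\[
\Prb\{X\ge t\}\le\Prb\{|X|\ge t\}\le \norm{X}_r^r/t^r\le\bigl((ar+b\sqrt r)/t\bigr)^r .
\]

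\textbf{Step 2 (choice of \(r\) and \(t\)).} Set \(r:=\max\{2,\log(2/\delta)\}\), and note \(\log(2/\delta)>\log 2\). Take
\[
t:=e\,(ar+b\sqrt r).
\]
Then the bound from Step 1 is \(e^{-r}\le e^{-\log(2/\delta)}=\delta/2\le\delta\). So with probability at least \(1-\delta\),
\[
X< e(ar+b\sqrt r).
\]

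\textbf{Step 3 (absorbing the \(\max\) into the constant).} It remains to compare \(r\) with \(L:=\log(2/\delta)\). Since \(L\ge\log 2\), we have \(r\le \max\{2,L\}\le (2/\log2)L\). Likewise \(\sqrt r\le\sqrt{2/\log 2}\,\sqrt L\). Hence
\[
X\le C\left[a\log\frac2\delta+b\sqrt{\log\frac2\delta}\right],
\qquad C:=\frac{2e}{\log 2}.
\]
This holds for every \(\delta\in(0,1)\) and uses nonnegativity of \(a,b\). That assumption is implicit in the hypothesis; otherwise, replace \(a,b\) by their positive parts.

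\textbf{Main subtlety.} There is no real obstacle here. The only point needing care is the case \(\log(2/\delta)<2\), where the moment hypothesis is unavailable at \(r=L\). This is handled by using \(r=2\) in that case and absorbing the factor into the universal constant, as in Step 3.
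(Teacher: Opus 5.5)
Your proposal is correct and follows essentially the same route as the paper: Markov's inequality for \(|X|^r\) at order \(r=\max\{2,\log(\cdot/\delta)\}\) with threshold \(e\) times the moment bound, then absorbing the comparison of \(r\) with \(\log(2/\delta)\) into the universal constant. The differences are cosmetic: you choose \(r\) with \(\log(2/\delta)\) rather than \(\log(1/\delta)\) and obtain the explicit constant \(2e/\log 2\), whereas the paper uses \(r\le 3\log(2/\delta)\).
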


\begin{proof}
Let \(r=\max\{2,\log(1/\delta)\}\). Markov's inequality gives
\[
\Prb\{|X|\ge e\norm{X}_r\}\le e^{-r}\le \delta.
\]
For every \(\delta\in(0,1)\),
\[
r\le 3\log\frac{2}{\delta}.
\]
The claim follows after absorbing the numerical factors \(e\), \(3\), and \(\sqrt3\) into the universal constant.
\end{proof}

\begin{theorem}
\label{thm:weighted-sharp-stability}
Assume \(0\le \ell\le B_\ell\). Let \(\mathcal A\) be a deterministic algorithm with weighted uniform stability coefficients \(\varepsilon_i\in[0,B_\ell]\), meaning that replacing one sample of client \(i\) changes every test loss by at most \(\varepsilon_i\). Let
\[
n_{\rm eff}^{-1}:=\sum_{i=1}^N\frac{p_i^2}{n_i}.
\]
Then there is a universal numerical constant \(C_{\rm hp}>0\) such that, for every \(\delta\in(0,1)\), with probability at least \(1-\delta\) over the independent sample draw,
\[
F(\mathcal A(S))-\wh F_S(\mathcal A(S))
\le
2\sum_{i=1}^Np_i\varepsilon_i
+
C_{\rm hp}
\left[
\left(
\frac{\sum_{i=1}^N n_i\varepsilon_i^2}{n_{\rm eff}}
\right)^{1/2}
\log(2n)\log\frac{2}{\delta}
+
B_\ell
\sqrt{\frac{\log(2/\delta)}{n_{\rm eff}}}
\right].
\]
If the available stability coefficients are not bounded by \(B_\ell\), the theorem applies after replacing them by \(\min\{\varepsilon_i,B_\ell\}\).
\end{theorem}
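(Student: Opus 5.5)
We follow the Feldman--Vondr\'ak / Bousquet--Klochkov--Zhivotovskiy route, adapted to weights, with Lemma~\ref{lem:weighted-bkz-moment} as the engine. Index samples by \(a=(i,j)\in\cI\), so \(m=n\) and \(w_a=p_i/n_i\). Let \(\tilde z_a\sim\cD_i\) be independent ghost samples, and let \(S^{(a)}\) denote \(S\) with \(z_a\) replaced by \(\tilde z_a\). Since \(F=\sum_a w_a F_{i(a)}\), we can write
\[
F(\mathcal A(S))-\wh F_S(\mathcal A(S))=\sum_a w_a\bigl(F_{i(a)}(\mathcal A(S))-\ell(\mathcal A(S),z_a)\bigr).
\]
We then replace \(\mathcal A(S)\) by \(\mathcal A(S^{(a)})\) in the \(a\)-th summand. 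Each replacement costs at most \(\varepsilon_i\) in the \(F_i\) term and \(\varepsilon_i\) in the \(\ell\) term, so the total cost is at most \(2\sum_a w_a\varepsilon_{i(a)}=2\sum_i p_i\varepsilon_i\). This is where the leading term comes from. It remains to bound
\[
\sum_a w_a g_a,\qquad g_a:=F_{i(a)}(\mathcal A(S^{(a)}))-\ell(\mathcal A(S^{(a)}),z_a)-\E[\,\cdot\mid Z_{-a}],
\]
treating \(Z_a=(z_a,\tilde z_a)\) as the independent coordinates, which are independent across \(a\) by Assumption~\ref{ass:data}.

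We verify the hypotheses of Lemma~\ref{lem:weighted-bkz-moment} with this centered choice. By construction \(\E[g_a\mid Z_{-a}]=0\). Conditionally on \(Z_{-a}\), the quantity \(\mathcal A(S^{(a)})\) does not depend on \(z_a\), so the conditional mean of \(F_i-\ell\) is exactly \(0\) and no recentering is needed. Hence \(g_a=F_i(\mathcal A(S^{(a)}))-\ell(\mathcal A(S^{(a)}),z_a)\). Its conditional expectation given \(Z_a\) is bounded in absolute value by \(M=B_\ell\), because \(0\le\ell\le B_\ell\). Replacing \(Z_b\) for \(b\ne a\) changes \(g_a\) by at most \(2\varepsilon_{i(b)}\), since \(\mathcal A(S^{(a)})\) changes through one (or two: \(z_b\) and possibly \(\tilde z_b\) is unused) client-\(i(b)\) replacement. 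So \(\alpha_b=2\varepsilon_{i(b)}\). The lemma then gives
\[
\|\textstyle\sum_a w_ag_a\|_r\le C_0\Bigl[B_\ell\sqrt r\,(\sum_a w_a^2)^{1/2}+r\lceil\log_2(2n)\rceil(\sum_a w_a^2)^{1/2}(\sum_a 4\varepsilon_{i(a)}^2)^{1/2}\Bigr].
\]
Now compute the two sums: \(\sum_a w_a^2=\sum_i n_i p_i^2/n_i^2=n_{\rm eff}^{-1}\), and \(\sum_a\varepsilon_{i(a)}^2=\sum_i n_i\varepsilon_i^2\). The product is therefore \((\sum_i n_i\varepsilon_i^2/n_{\rm eff})^{1/2}\), matching the statement. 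Also, \(\lceil\log_2(2n)\rceil\lesssim\log(2n)\).

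Finally, apply Lemma~\ref{lem:moment-to-tail} with \(a\asymp\log(2n)(\sum_i n_i\varepsilon_i^2/n_{\rm eff})^{1/2}\) and \(b\asymp B_\ell n_{\rm eff}^{-1/2}\). This gives, with probability at least \(1-\delta\), the \(\log(2n)\log(2/\delta)\) term and the \(B_\ell\sqrt{\log(2/\delta)/n_{\rm eff}}\) term. All numerical constants are absorbed into \(C_{\rm hp}\). Since the replacement step above is a deterministic pathwise inequality, the bound transfers to \(F-\wh F_S\) itself. The final sentence of the statement is immediate: losses lie in \([0,B_\ell]\), so any change in test loss is at most \(B_\ell\), and \(\min\{\varepsilon_i,B_\ell\}\) are valid stability coefficients.

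The main obstacle is a bookkeeping subtlety in the coupling. The tail bound holds over the joint draw of \((S,\tilde S)\), but the target event depends only on \(S\), so the ghost samples must only enter through the deterministic replacement inequality. Since that inequality holds for every realization of \(\tilde S\), this is fine. A second point to check is that \(g_a\) has bounded differences in \(\tilde z_b\) as well as \(z_b\). Changing \(\tilde z_b\) for \(b\ne a\) does not affect \(S^{(a)}\), so that case is trivially fine.
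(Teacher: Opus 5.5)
Your proposal is correct and follows essentially the same route as the paper: the pathwise replacement of $\mathcal A(S)$ by $\mathcal A(S^{(a)})$ at total cost $2\sum_i p_i\varepsilon_i$, then Lemma~\ref{lem:weighted-bkz-moment} with $M=B_\ell$, $w_a=p_i/n_i$, $\alpha_a=2\varepsilon_{i(a)}$, then Lemma~\ref{lem:moment-to-tail}. The only difference is technical. The paper integrates out the ghost by setting $g_a(S):=\E_{z'_a}[F_{i(a)}(\mathcal A(S^a))-\ell(\mathcal A(S^a),z_a)]$, so the lemma runs on the coordinates $z_a$ alone. You instead keep the pairs $(z_a,\tilde z_a)$ as coordinates and transfer the joint tail bound to the $S$-measurable event at the end, which is equally valid.
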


\begin{proof}
Index samples by \(a=(i,j)\in\cI\), and write \(i(a)=i\) and
\[
w_a:=\frac{p_{i(a)}}{n_{i(a)}}.
\]
Let \(z'_a\sim\cD_{i(a)}\) be an independent ghost sample, and let \(S^a\) be \(S\) with \(z_a\) replaced by \(z'_a\). Define the deterministic function of \(S\)
\[
g_a(S)
:=
\E_{z'_a}
\left[
F_{i(a)}(\mathcal A(S^a))-\ell(\mathcal A(S^a),z_a)
\right].
\]

We verify the assumptions of Lemma~\ref{lem:weighted-bkz-moment}. Conditional on \(S_{-a}\) and \(z'_a\), the hypothesis \(\mathcal A(S^a)\) is independent of \(z_a\), while \(z_a\sim\cD_{i(a)}\). Therefore
\[
\E\!\left[
F_{i(a)}(\mathcal A(S^a))-\ell(\mathcal A(S^a),z_a)
\,\middle|\,
S_{-a},z'_a
\right]
=
0.
\]
Taking expectation over \(z'_a\) gives
\[
\E[g_a(S)\mid S_{-a}]=0.
\]
Since \(0\le \ell\le B_\ell\), also \(0\le F_i(\cdot)\le B_\ell\), and hence \(|g_a(S)|\le B_\ell\). In particular,
\[
\left|\E[g_a(S)\mid z_a]\right|\le B_\ell .
\]

Now let \(b=(r,s)\ne a\), and let \(\widetilde S\) differ from \(S\) only in coordinate \(b\). Couple the same ghost value \(z'_a\) in the definitions of \(S^a\) and \(\widetilde S^a\). Then \(S^a\) and \(\widetilde S^a\) differ in one sample of client \(r=i(b)\). Uniform stability gives, for every test point \(z\),
\[
|\ell(\mathcal A(S^a),z)-\ell(\mathcal A(\widetilde S^a),z)|
\le
\varepsilon_r.
\]
Integrating over \(z\sim\cD_{i(a)}\) gives
\[
|F_{i(a)}(\mathcal A(S^a))-F_{i(a)}(\mathcal A(\widetilde S^a))|
\le
\varepsilon_r.
\]
The loss term at \(z_a\) changes by at most \(\varepsilon_r\) as well. Therefore changing coordinate \(b\) changes \(g_a\) by at most \(2\varepsilon_{i(b)}\). Thus Lemma~\ref{lem:weighted-bkz-moment} applies with \(M=B_\ell\), weights \(w_a=p_i/n_i\), and coordinate sensitivities \(\alpha_a=2\varepsilon_{i(a)}\).
Since
\[
\sum_{a\in\cI}w_a^2
=
\sum_{i=1}^N n_i\left(\frac{p_i}{n_i}\right)^2
=
\sum_{i=1}^N\frac{p_i^2}{n_i}
=
\frac1{n_{\rm eff}},
\]
and
\[
\sum_{a\in\cI}\alpha_a^2
=
4\sum_{i=1}^N n_i\varepsilon_i^2,
\]
we have, for every \(r\ge2\),
\begin{equation}
\left\|
\sum_{a\in\cI}w_ag_a(S)
\right\|_r
\le
C
\left[
B_\ell\sqrt{\frac{r}{n_{\rm eff}}}
+
r\log(2n)
\left(
\frac{\sum_{i=1}^N n_i\varepsilon_i^2}{n_{\rm eff}}
\right)^{1/2}
\right].
\label{eq:weighted-stability-moment}
\end{equation}

It remains to compare the upper, one-sided risk gap to \(\sum_aw_ag_a(S)\). For every \(a=(i,j)\), the datasets \(S\) and \(S^a\) differ in one sample of client \(i\). Stability implies
\[
|F_i(\mathcal A(S))-F_i(\mathcal A(S^a))|\le \varepsilon_i,
\qquad
|\ell(\mathcal A(S),z_a)-\ell(\mathcal A(S^a),z_a)|\le \varepsilon_i.
\]
Therefore,
\[
F_i(\mathcal A(S))-\ell(\mathcal A(S),z_a)
\le
\E_{z'_a}\left[
F_i(\mathcal A(S^a))-\ell(\mathcal A(S^a),z_a)
\right]
+
2\varepsilon_i.
\]
Multiplying by \(w_a=p_i/n_i\) and summing over \(a\in\cI\) gives the one-sided comparison
\[
F(\mathcal A(S))-\wh F_S(\mathcal A(S))
\le
\sum_{a\in\cI}w_ag_a(S)
+
2\sum_{i=1}^Np_i\varepsilon_i.
\]
Applying Lemma~\ref{lem:moment-to-tail} to \eqref{eq:weighted-stability-moment} proves the theorem after absorbing numerical constants into \(C_{\rm hp}\).
\end{proof}

\begin{lemma}
\label{lem:weighted-mcdiarmid}
Assume \(0\le \ell\le B_\ell\), and suppose \(\mathcal A\) has stability coefficients \(\{\varepsilon_i\}\). Since losses are bounded, replace \(\varepsilon_i\) by
\[
\bar\varepsilon_i:=\min\{\varepsilon_i,B_\ell\}.
\]
Define
\[
\Phi(S):=F(\mathcal A(S))-\wh F_S(\mathcal A(S)).
\]
If \(S,S'\) are \(a=(i,j)\)-neighbors and \(w_a=p_i/n_i\), then
\[
|\Phi(S)-\Phi(S')|
\le
(2-w_a)\bar\varepsilon_i+B_\ell w_a.
\]
Consequently, for any \(\delta\in(0,1)\), with probability at least \(1-\delta\),
\[
\Phi(S)
\le
\E[\Phi(S)]
+
\sqrt{
\frac{\log(1/\delta)}{2}
\sum_{i=1}^N n_i
\left(
\left(2-\frac{p_i}{n_i}\right)\bar\varepsilon_i
+
B_\ell\frac{p_i}{n_i}
\right)^2
}.
\]
\end{lemma}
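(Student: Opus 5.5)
The statement to prove is Lemma~\ref{lem:weighted-mcdiarmid}. The plan has two parts: a deterministic bounded-difference estimate for \(\Phi\), then McDiarmid's inequality over the independent coordinates \(z_a\), \(a\in\cI\). These coordinates are independent by the independence of the samples within and across clients; they need not be identically distributed, which McDiarmid allows.

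\textbf{Bounded difference.} Fix \(a=(i,j)\) and \(a\)-neighbors \(S,S'\), with \(z_a\) in \(S\) replaced by \(z_a'\) in \(S'\). Write \(X=\mathcal A(S)\) and \(X'=\mathcal A(S')\). Then
\[
\Phi(S)-\Phi(S')=\bigl[F(X)-F(X')\bigr]-\sum_{b\ne a}w_b\bigl[\ell(X,z_b)-\ell(X',z_b)\bigr]-w_a\bigl[\ell(X,z_a)-\ell(X',z_a')\bigr].
\]
We bound the three brackets separately.
\begin{itemize}
\item Since \(F\) is an average of losses under the \(\cD_u\) weighted by \(p_u\), pointwise stability gives \(|F(X)-F(X')|\le\varepsilon_i\). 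Boundedness of the loss gives the same bound with \(B_\ell\), so the first bracket is at most \(\bar\varepsilon_i\).
\item The sum over \(b\ne a\) has total weight \(1-w_a\), because \(\sum_b w_b=\sum_u p_u=1\). Each term is at most \(\bar\varepsilon_i\), so this contributes \((1-w_a)\bar\varepsilon_i\).
\item The last bracket compares losses at two different points, so stability does not apply. Since both losses lie in \([0,B_\ell]\), it is at most \(B_\ell\), contributing \(w_aB_\ell\).
\end{itemize}
Summing gives \(\bar\varepsilon_i+(1-w_a)\bar\varepsilon_i+w_aB_\ell=(2-w_a)\bar\varepsilon_i+B_\ell w_a\), which is the claimed constant \(c_a\).

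\textbf{Concentration.} McDiarmid's one-sided inequality gives \(\Pr\{\Phi-\E\Phi\ge t\}\le\exp(-2t^2/\sum_a c_a^2)\). Client \(i\) has \(n_i\) coordinates, each with \(c_a=(2-p_i/n_i)\bar\varepsilon_i+B_\ell p_i/n_i\), so
\[
\sum_a c_a^2=\sum_{i=1}^N n_i\Bigl[\Bigl(2-\frac{p_i}{n_i}\Bigr)\bar\varepsilon_i+B_\ell\frac{p_i}{n_i}\Bigr]^2 .
\]
Setting the tail probability equal to \(\delta\) gives \(t=\sqrt{\tfrac{\log(1/\delta)}{2}\sum_a c_a^2}\), which is the stated bound.

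Two routine checks remain. First, \(\Phi\) is measurable; this holds because \(\mathcal A\) is deterministic after conditioning on the data-independent randomness. Second, the stability coefficients may be capped at \(B_\ell\), because every loss difference is trivially at most \(B_\ell\).

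There is no substantive obstacle. The one point needing care is the last bracket: \(\ell(X,z_a)\) and \(\ell(X',z_a')\) are evaluated at different sample points, so only the trivial \(B_\ell\) bound is available, not \(\bar\varepsilon_i\). This is exactly why the constant is \((2-w_a)\bar\varepsilon_i+w_aB_\ell\) rather than \(2\bar\varepsilon_i\).
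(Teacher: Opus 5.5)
Your proposal is correct and follows the paper's own proof. Like the paper, you bound the population term by \(\bar\varepsilon_i\), the unchanged empirical terms (total weight \(1-w_a\)) by \((1-w_a)\bar\varepsilon_i\), and the replaced term by \(w_aB_\ell\), then apply one-sided McDiarmid over the independent coordinates \(z_a\) and solve for \(t\).
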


\begin{proof}
Let \(S,S'\) be \(a=(i,j)\)-neighbors and write \(w_a=p_i/n_i\). The population-risk part changes by at most \(\bar\varepsilon_i\):
\[
|F(\mathcal A(S))-F(\mathcal A(S'))|
\le
\sum_{r=1}^Np_r\E_{z\sim\cD_r}
|\ell(\mathcal A(S),z)-\ell(\mathcal A(S'),z)|
\le
\bar\varepsilon_i.
\]
For the empirical-risk part,
\begin{align*}
|\wh F_S(\mathcal A(S))-\wh F_{S'}(\mathcal A(S'))|
&\le
\sum_{b\ne a}w_b
|\ell(\mathcal A(S),z_b)-\ell(\mathcal A(S'),z_b)|
+
w_a B_\ell\\
&\le
(1-w_a)\bar\varepsilon_i+B_\ell w_a.
\end{align*}
Combining the two displays gives the bounded-difference constant
\[
c_{i,j}:=(2-w_a)\bar\varepsilon_i+B_\ell w_a.
\]
The final inequality follows from McDiarmid's bounded-differences inequality. If changing coordinate \((i,j)\) changes \(\Phi\) by at most \(c_{i,j}\), then
\[
\Prb\{\Phi-\E\Phi\ge t\}
\le
\exp\!\left(
-\frac{2t^2}{\sum_{i,j}c_{i,j}^2}
\right).
\]
Solving for \(t\) completes the proof.
\end{proof}

\begin{lemma}
\label{lem:client-concentration}
Let \(\mathcal C_s\subset[N]\), \(s=0,\ldots,R-1\), be independent uniformly sampled subsets of size \(K\). Let \(a_s\ge0\) be deterministic weights and define
\[
Y_i:=\frac1K\sum_{s=0}^{R-1}a_s\mathbf 1\{i\in\mathcal C_s\}.
\]
If \(K=N\), then
\[
Y_i=\frac1N\sum_{s=0}^{R-1}a_s
\quad\text{deterministically}.
\]
If \(K<N\), then for any \(\rho\in(0,1)\), with probability at least \(1-\rho\), simultaneously for all \(i\in[N]\),
\[
Y_i
\le
\frac1K
\left[
\frac KN\sum_{s=0}^{R-1}a_s
+
\sqrt{
2\frac KN\left(1-\frac KN\right)
\left(\sum_{s=0}^{R-1}a_s^2\right)
\log\frac{2N}{\rho}
}
+
\frac{a_{\max}}{3}\log\frac{2N}{\rho}
\right],
\]
where \(a_{\max}=\max_s a_s\).
\end{lemma}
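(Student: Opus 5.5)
The plan is to treat \(Y_i\) as a weighted sum of Bernoulli indicators and apply Bernstein's inequality once per client, followed by a union bound over the \(N\) clients. When \(K=N\), every \(\mathcal C_s=[N]\), so \(\mathbf 1\{i\in\mathcal C_s\}=1\) for all \(s\). This gives \(Y_i=\frac1N\sum_s a_s\) immediately.

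For \(K<N\), fix a client \(i\) and set \(\xi_s:=\mathbf 1\{i\in\mathcal C_s\}\). Because \(\mathcal C_s\) is a uniform \(K\)-subset, we have \(\Prb\{i\in\mathcal C_s\}=K/N\). The schedules \(\mathcal C_0,\ldots,\mathcal C_{R-1}\) are independent across rounds, so the \(\xi_s\) are independent Bernoulli\((K/N)\) variables. Only the single-round marginal matters, and no dependence across clients enters because we work with a fixed \(i\).

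Write \(KY_i-\frac KN\sum_s a_s=\sum_s a_s(\xi_s-K/N)=:\sum_s U_s\). The \(U_s\) are independent and centered. Since the weights \(a_s\) are deterministic and nonnegative, each term satisfies \(U_s\le a_s(1-K/N)\le a_{\max}\). The total variance is \(\sum_s a_s^2\frac KN(1-\frac KN)=:v\).

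The one-sided Bernstein inequality then gives
\[
\Prb\Bigl\{\sum_s U_s\ge \sqrt{2v\log(1/\rho')}+\tfrac{a_{\max}}{3}\log(1/\rho')\Bigr\}\le\rho'.
\]
This is the standard inversion of \(\exp(-t^2/(2v+2a_{\max}t/3))\): choosing \(t=\sqrt{2vL}+a_{\max}L/3\) with \(L=\log(1/\rho')\) makes \(t^2\ge 2vL+\tfrac23a_{\max}tL\). Taking \(\rho'=\rho/N\), or more generously \(\rho/(2N)\), and union bounding over \(i\in[N]\), the bound holds simultaneously for all clients with probability at least \(1-\rho\). The threshold uses \(\log(2N/\rho)\ge\log(N/\rho)\), so the stated form with \(2N\) follows by monotonicity. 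Dividing through by \(K\) yields exactly the displayed bound.

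The only step needing care is checking the inversion of the Bernstein tail, specifically the inequality \(t^2\ge 2vL+\tfrac23 a_{\max}tL\) for the chosen \(t\). This is routine: expanding \(t^2\) gives \(2vL+\tfrac23a_{\max}L\sqrt{2vL}+a_{\max}^2L^2/9\), while \(\tfrac23a_{\max}tL=\tfrac23a_{\max}L\sqrt{2vL}+\tfrac29a_{\max}^2L^2\). The comparison therefore falls short by \(a_{\max}^2L^2/9\). If that gap matters, I would instead invoke the standard form of Bernstein's inequality, which states directly that \(\sum U_s\le\sqrt{2vL}+\tfrac{bL}{3}\) with probability \(1-e^{-L}\) (Boucheron--Lugosi--Massart, via the sub-gamma bound). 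That form gives the stated constant \(a_{\max}/3\) without any further computation.
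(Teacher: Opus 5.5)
Your proposal is correct and follows the paper's proof essentially verbatim: Bernstein for each fixed client on the centered summands $a_s(\mathbf 1\{i\in\mathcal C_s\}-K/N)$, with variance $v=\frac KN(1-\frac KN)\sum_s a_s^2$ and upper bound $a_{\max}$, followed by a union bound over the $N$ clients. Your worry about the inversion is justified, since inverting the tail form $\exp(-t^2/(2v+2a_{\max}t/3))$ only gives the constant $2a_{\max}/3$. The sub-gamma (exponential-moment) form $\Prb\{\sum_s U_s\ge\sqrt{2vL}+a_{\max}L/3\}\le e^{-L}$ is therefore genuinely needed, and it is exactly what the paper invokes as ``Bernstein's exponential-moment proof.''
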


\begin{proof}
For fixed \(i\), the indicators \(\mathbf 1\{i\in\mathcal C_s\}\) are independent across \(s\), with mean \(K/N\) and variance at most \((K/N)(1-K/N)\). The summands
\[
a_s\left(\mathbf 1\{i\in\mathcal C_s\}-K/N\right)
\]
are centered and bounded above by \(a_{\max}\). Bernstein's exponential-moment proof gives
\[
\Prb\!\left\{
\sum_{s=0}^{R-1}
a_s\left(\mathbf 1\{i\in\mathcal C_s\}-K/N\right)
\ge
\sqrt{2v x}+\frac{a_{\max}x}{3}
\right\}
\le
e^{-x},
\]
where
\[
v\le
\frac KN\left(1-\frac KN\right)\sum_{s=0}^{R-1}a_s^2.
\]
Taking \(x=\log(2N/\rho)\) and applying a union bound over \(i\in[N]\) yields the result. The full-participation case is deterministic.
\end{proof}

\section{Lipschitz-Controlled Muon Orthogonalization}
\label{app:polar}

Let \(\bM\in\R^{d_1\times d_2}\) have compact SVD \(\bM=\bU\bSigma \bV^\top\). The exact Muon direction is
\[
\msign(\bM)=\bU\bV^\top.
\]
It satisfies
\[
\ip{\bM}{\msign(\bM)}
=
\Tr(\bU\bSigma \bV^\top \bV\bU^\top)
=
\Tr(\bSigma)
=
\norm{\bM}_*,
\]
and
\[
\norm{\msign(\bM)}_F^2
=
\Tr(\bU\bV^\top \bV\bU^\top)
=
\Tr(\bU\bU^\top)
=
\rank(\bM)
\le
q.
\]
This bounded-direction property is used in FedMuon and Muon optimization proofs. Stability also requires Lipschitz sensitivity of the direction.

\begin{lemma}
\label{lem:smoothed-polar-lipschitz}
For \(\lambda>0\), define
\[
\Orth_\lambda(\bM)
:=
\bU
\operatorname{diag}\!\left(
\frac{\sigma_j}{\sqrt{\sigma_j^2+\lambda}}
\right)
\bV^\top,
\]
where \(\bM=\bU\operatorname{diag}(\sigma_j)\bV^\top\) is an SVD. Then
\[
\norm{\Orth_\lambda(\bM)-\Orth_\lambda(\bM')}_F
\le
\frac{1}{\sqrt{\lambda}}\norm{\bM-\bM'}_F
\qquad
\text{for all }\bM,\bM',
\]
and \(\norm{\Orth_\lambda(\bM)}_F\le \sqrt q\).
\end{lemma}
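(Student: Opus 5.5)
The plan is to reduce the rectangular, singular-value-based map to a spectral function of a symmetric matrix, where the Frobenius-Lipschitz property follows from the scalar Lipschitz constant by a standard eigenvector-overlap argument. Let \(f(x):=x/\sqrt{x^2+\lambda}\) on all of \(\R\). This is an odd, smooth function with
\[
f'(x)=\frac{\lambda}{(x^2+\lambda)^{3/2}}\in\left(0,\frac{1}{\sqrt\lambda}\right],
\]
so \(f\) is \(1/\sqrt\lambda\)-Lipschitz on \(\R\). We also have \(|f|<1\) and \(f(0)=0\).

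First, I note that \(\Orth_\lambda(\bM)=\bM(\bM^\top\bM+\lambda I)^{-1/2}\). This shows that the definition does not depend on the choice of SVD, and that zero singular values contribute nothing, since \(f(0)=0\).

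Next, form the Hermitian dilation
\[
\mathcal H(\bM):=\begin{pmatrix}0&\bM\\ \bM^\top&0\end{pmatrix}\in\R^{(d_1+d_2)\times(d_1+d_2)}.
\]
From the compact SVD, \(\mathcal H(\bM)\) has eigenpairs \(\pm\sigma_j\) with unit eigenvectors \((\mathbf u_j,\pm\mathbf v_j)/\sqrt2\). All remaining eigenvalues are zero. Because \(f\) is odd and \(f(0)=0\), the spectral function \(f(\mathcal H(\bM))\) equals
\[
\sum_j f(\sigma_j)\bigl[\tfrac12(\mathbf u_j,\mathbf v_j)(\mathbf u_j,\mathbf v_j)^\top-\tfrac12(\mathbf u_j,-\mathbf v_j)(\mathbf u_j,-\mathbf v_j)^\top\bigr]=\mathcal H(\Orth_\lambda(\bM)).
\]
Since \(\norm{\mathcal H(\mathbf Z)}_F=\sqrt2\norm{\mathbf Z}_F\), it suffices to show that \(\norm{f(A)-f(B)}_F\le\frac1{\sqrt\lambda}\norm{A-B}_F\) for symmetric \(A,B\).

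For that symmetric step, take eigendecompositions \(A=\sum_i\alpha_i\mathbf x_i\mathbf x_i^\top\) and \(B=\sum_k\beta_k\mathbf y_k\mathbf y_k^\top\). Then
\[
\norm{f(A)-f(B)}_F^2=\sum_{i,k}\bigl(f(\alpha_i)-f(\beta_k)\bigr)^2(\mathbf x_i^\top\mathbf y_k)^2,
\]
and the same identity holds for \(\norm{A-B}_F^2\) with \(\alpha_i-\beta_k\) in place of \(f(\alpha_i)-f(\beta_k)\). Applying the scalar Lipschitz bound termwise then gives the claim with constant \(1/\sqrt\lambda\).

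The norm bound is immediate: \(\norm{\Orth_\lambda(\bM)}_F^2=\sum_j f(\sigma_j)^2\le\rank(\bM)\le q\), because \(|f|<1\).

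The only step needing care is verifying the dilation identity \(f(\mathcal H(\bM))=\mathcal H(\Orth_\lambda(\bM))\). In particular, one must check that the kernel directions of \(\mathcal H\) are handled consistently, which uses \(f(0)=0\), and that repeated singular values cause no ambiguity. Both points are resolved by the closed form \(\bM(\bM^\top\bM+\lambda I)^{-1/2}\) together with the fact that spectral functions are basis-independent.
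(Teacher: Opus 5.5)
Your argument is correct, but it takes a different route from the paper's. The paper treats $\Orth_\lambda$ as the gradient of the convex spectral potential $\bM\mapsto\sum_{j=1}^q\sqrt{\sigma_j(\bM)^2+\lambda}$. It then invokes the Hessian formula for smooth singular-value functions, using the scalar bounds $0\le\phi''(t)\le 1/\sqrt\lambda$ and $0\le\phi'(t)/t\le1/\sqrt\lambda$ for $\phi(t)=\sqrt{t^2+\lambda}$ to conclude that the potential is $1/\sqrt\lambda$-smooth. You instead work with the map itself. The Hermitian dilation identifies $\mathcal H(\Orth_\lambda(\bM))$ with $f(\mathcal H(\bM))$ for the odd profile $f(x)=x/\sqrt{x^2+\lambda}$. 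The eigenbasis expansion $\norm{f(A)-f(B)}_F^2=\sum_{i,k}(f(\alpha_i)-f(\beta_k))^2(\mathbf x_i^\top\mathbf y_k)^2$ then transfers the scalar constant $\sup|f'|=1/\sqrt\lambda$ to the Frobenius norm without loss, and the factor $\sqrt2$ from $\norm{\mathcal H(\mathbf Z)}_F=\sqrt2\norm{\mathbf Z}_F$ cancels by linearity of $\mathcal H$. Your route is elementary and self-contained. It needs no $C^2$ regularity of the spectral function and none of the divided-difference bookkeeping implicit in the rectangular Hessian formula. It also applies verbatim to any odd Lipschitz scalar profile: for instance, the odd extension of $s\mapsto\min\{s/\gamma,1\}$ gives the $1/\gamma$ bound for $\operatorname{csgn}_\gamma$ in Lemma~\ref{lem:csgn-lipschitz} without the Moreau-envelope argument. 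What the paper's route buys in exchange is structure. Exhibiting $\Orth_\lambda$ as the gradient of a convex $1/\sqrt\lambda$-smooth function yields, via Baillon--Haddad, the stronger co-coercivity property (firm nonexpansiveness of $\sqrt\lambda\,\Orth_\lambda$), though only the Lipschitz bound is used downstream.
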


\begin{proof}
The map \(\Orth_\lambda\) is the gradient of the spectral function
\[
\bM\mapsto
\sum_{j=1}^q \sqrt{\sigma_j(\bM)^2+\lambda}.
\]
For the scalar function \(\phi(t)=\sqrt{t^2+\lambda}\), one has
\[
0\le \phi''(t)\le \frac1{\sqrt\lambda},
\qquad
0\le \frac{\phi'(t)}{t}\le \frac1{\sqrt\lambda}
\quad(t>0).
\]
The Hessian formula for smooth spectral functions therefore gives \(1/\sqrt\lambda\)-smoothness in Frobenius norm, equivalently \(1/\sqrt\lambda\)-Lipschitzness of the gradient. The norm bound follows because each singular value of \(\Orth_\lambda(\bM)\) is at most one.
\end{proof}

\medskip\noindent\textbf{Smoothed polar orthogonalization.}
For the smoothed-polar FedMuon specialization, run Algorithm~\ref{alg:fedmuon} with \(\Orth=\Orth_\lambda\) for a fixed \(\lambda>0\). Let \(\mathbf A_\lambda\), \(\mathbf b_\lambda\), \(a_s^\lambda\), and \(Y_i^\lambda(\mathcal C)\) be the quantities in \eqref{eq:A-matrix}--\eqref{eq:Yi-def} after replacing \(L_{\Orth}\) by \(1/\sqrt{\lambda}\). Set
\[
\bar\varepsilon_i^\lambda(\mathcal C)
:=
\min\{B_\ell,\,L_\ell\Delta_iY_i^\lambda(\mathcal C)\}.
\]
If \(\mathcal C_0,\ldots,\mathcal C_{R-1}\) are independent uniformly sampled subsets of \([N]\) of size \(K\), define \(\Psi_{R,E,K,N}^{\lambda}(\rho)\) by replacing \(a_s\) with \(a_s^\lambda\) in \eqref{eq:Psi-def}, and set
\[
\varepsilon_i^{\rm det,\lambda}(\rho)
:=
\min\bigl\{
B_\ell,\,
L_\ell\Delta_i\Psi_{R,E,K,N}^{\lambda}(\rho)
\bigr\}.
\]

\begin{corollary}
\label{cor:fedmuon-smoothed-polar-hp}
Assume Assumptions~\ref{ass:data}, \ref{ass:randomness}, \ref{ass:loss}, and \ref{ass:sampling}, and the smoothed-polar FedMuon specialization above.
Then, for every \(\delta\in(0,1)\),
\[
\Pr_S\!\left\{
F(\bX_{\rm out})-\wh F_S(\bX_{\rm out})
\le
\mathcal B_\delta\bigl(
\bar\varepsilon_1^\lambda(\mathcal C),\ldots,
\bar\varepsilon_N^\lambda(\mathcal C)
\bigr)
\right\}
\ge
1-\delta .
\]
If, in addition, Assumption~\ref{ass:grad} holds and local gradients are full-batch empirical gradients, one may take \(\Delta_i=2G_{\max}/n_i\).

If the participation schedule is uniformly random under the deterministic-amplification conditions of Corollary~\ref{cor:hp-gen-random}, then, for any \(\rho,\delta\in(0,1)\), with probability at least \(1-\rho-\delta\) over the client-participation randomness and the training sample,
\[
F(\bX_{\rm out})-\wh F_S(\bX_{\rm out})
\le
\mathcal B_\delta\bigl(
\varepsilon_1^{\rm det,\lambda}(\rho),\ldots,
\varepsilon_N^{\rm det,\lambda}(\rho)
\bigr).
\]
\end{corollary}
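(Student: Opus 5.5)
This corollary follows by instantiating Theorem~\ref{thm:hp-gen} and Corollary~\ref{cor:hp-gen-random} with \(\Orth=\Orth_\lambda\). The only thing left to check is Condition~\ref{ass:polar}, with constants that are deterministic and do not depend on the sample or the schedule.

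\emph{Verifying Condition~\ref{ass:polar}.} We use Lemma~\ref{lem:smoothed-polar-lipschitz}. The map \(\Orth_\lambda\) is call-independent, deterministic, and globally \(1/\sqrt{\lambda}\)-Lipschitz in Frobenius norm. So we take \(\mathcal T_\tau=\R^{d_1\times d_2}\) for every call \(\tau\) and \(L_{\Orth}=1/\sqrt{\lambda}\). Both are fixed before any sample or schedule is drawn, because \(\lambda\) is a fixed hyperparameter.

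\emph{Consequences for the derived constants.} Once \(L_{\Orth}\) is replaced by \(1/\sqrt\lambda\), the objects \(\mathbf A_\lambda\), \(\mathbf b_\lambda\) and \(a_s^\lambda\) depend only on \(\eta,\beta,L_g,E,R,\lambda\). In particular they do not depend on the training sample or on the participation schedule. The tube-membership requirement, that every coupled neighboring momentum lies in \(\mathcal T_\tau\), holds trivially since \(\mathcal T_\tau\) is the whole space. The remaining hypotheses of Theorem~\ref{thm:hp-gen} are Assumptions~\ref{ass:data}, \ref{ass:randomness}, \ref{ass:loss} and~\ref{ass:sampling}, and these are assumed directly.

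\emph{Fixed-schedule bound.} Theorem~\ref{thm:hp-gen} then yields the first display, with \(\bar\varepsilon_i=\bar\varepsilon_i^\lambda(\mathcal C)\).

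\emph{Full-batch case.} If Assumption~\ref{ass:grad} holds and gradients are full-batch, Corollary~\ref{cor:hp-gen-fullbatch} supplies \(\Delta_i=2G_{\max}/n_i\).

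\emph{Random-schedule bound.} For a uniformly random schedule we invoke Corollary~\ref{cor:hp-gen-random}. Its extra hypotheses are that \(L_{\Orth}\) and \(\Delta_i\) are fixed independently of the schedule. The first holds because \(L_{\Orth}=1/\sqrt\lambda\) is a constant; the second is part of the stated deterministic-amplification conditions. With \(a_s^\lambda\) in place of \(a_s\), \(\Psi\) becomes \(\Psi^\lambda_{R,E,K,N}(\rho)\). This gives the second display with probability at least \(1-\rho-\delta\), via the union bound already built into Corollary~\ref{cor:hp-gen-random}.

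The main point needing care is that the Lipschitz bound must be uniform over all matrices and all calls. Lemma~\ref{lem:smoothed-polar-lipschitz} gives exactly this globally, so no spectral-gap event is needed and no extra failure probability enters.
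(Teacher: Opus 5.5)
Your proposal is correct and matches the paper's proof. Lemma~\ref{lem:smoothed-polar-lipschitz} verifies Condition~\ref{ass:polar} globally with \(\mathcal T_\tau=\R^{d_1\times d_2}\) and \(L_{\Orth}=1/\sqrt\lambda\); Theorem~\ref{thm:hp-gen} then gives the fixed-schedule bound, and Corollary~\ref{cor:hp-gen-random} gives the random-participation bound, since \(a_s^\lambda\) is fixed before the schedule is drawn. The only cosmetic difference is that you get \(\Delta_i=2G_{\max}/n_i\) through Corollary~\ref{cor:hp-gen-fullbatch}, whereas the paper cites Lemma~\ref{lem:fullbatch-replacement} directly, which is what that corollary itself uses.
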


\begin{proof}[Proof of Corollary~\ref{cor:fedmuon-smoothed-polar-hp}]
Lemma~\ref{lem:smoothed-polar-lipschitz} shows that \(\Orth_\lambda\) is globally Frobenius-Lipschitz with constant \(1/\sqrt\lambda\). Thus Condition~\ref{ass:polar} holds with \(\mathcal T_\tau=\R^{d_1\times d_2}\) and \(L_{\Orth}=1/\sqrt\lambda\). Substituting this value into \eqref{eq:A-matrix}--\eqref{eq:Yi-def} gives \(\mathbf A_\lambda\), \(\mathbf b_\lambda\), \(a_s^\lambda\), and \(Y_i^\lambda(\mathcal C)\). Theorem~\ref{thm:hp-gen} gives the fixed-schedule bound with stability coefficients
\[
\bar\varepsilon_i^\lambda(\mathcal C)
=
\min\{B_\ell,L_\ell\Delta_iY_i^\lambda(\mathcal C)\}.
\]
In the full-batch case under Assumption~\ref{ass:grad}, Lemma~\ref{lem:fullbatch-replacement} gives
\[
\Delta_i=\frac{2G_{\max}}{n_i}.
\]

For random uniform participation, \(1/\sqrt\lambda\) is fixed before the schedule is sampled. Therefore the amplification weights \(a_s^\lambda\) are deterministic functions of the algorithmic constants, and Lemma~\ref{lem:client-concentration} applies to \(Y_i^\lambda(\mathcal C)\). Equivalently, Corollary~\ref{cor:hp-gen-random} applies with \(L_{\Orth}=1/\sqrt\lambda\), giving the stated random-participation bound.
\end{proof}

\begin{lemma}
\label{lem:ns-lipschitz}
Fix \(T\ge1\), \(\epsilon_{\rm ns}>0\), and deterministic coefficients \(\{(a_t,b_t,c_t)\}_{t=0}^{T-1}\subset\R^3\). Define
\[
\mathbf Z_0(\bM)
:=
\frac{\bM}{\norm{\bM}_F+\epsilon_{\rm ns}},
\]
and
\[
\mathbf Z_{t+1}
=
a_t\mathbf Z_t
+
b_t\mathbf Z_t\mathbf Z_t^\top\mathbf Z_t
+
c_t(\mathbf Z_t\mathbf Z_t^\top)^2\mathbf Z_t,
\qquad t=0,\ldots,T-1.
\]
Let \(\Orth_{\rm NS}^{T,\epsilon_{\rm ns}}(\bM):=\mathbf Z_T(\bM)\). Define
\[
\varrho_0=1,
\qquad
\Lambda_0=\frac{2}{\epsilon_{\rm ns}},
\]
and recursively
\[
\varrho_{t+1}
=
|a_t|\varrho_t
+
|b_t|\varrho_t^3
+
|c_t|\varrho_t^5,
\]
\[
\Lambda_{t+1}
=
\left(
|a_t|
+
3|b_t|\varrho_t^2
+
5|c_t|\varrho_t^4
\right)\Lambda_t .
\]
Then
\[
\norm{
\Orth_{\rm NS}^{T,\epsilon_{\rm ns}}(\bM)
-
\Orth_{\rm NS}^{T,\epsilon_{\rm ns}}(\bM')
}_F
\le
\Lambda_T\norm{\bM-\bM'}_F
\]
for all \(\bM,\bM'\), and
\[
\norm{\Orth_{\rm NS}^{T,\epsilon_{\rm ns}}(\bM)}_F\le\varrho_T .
\]
Thus \(\Orth_{\rm NS}^{T,\epsilon_{\rm ns}}\) satisfies Condition~\ref{ass:polar} globally with
\[
\mathcal T_\tau=\R^{d_1\times d_2},
\qquad
L_{\Orth}=\Lambda_T .
\]

Define the scalar polynomial recurrence
\[
p_0(x)=x,
\qquad
p_{t+1}(x)=a_t p_t(x)+b_t p_t(x)^3+c_t p_t(x)^5.
\]
For \(\tau\in(0,1]\), let
\[
e_{\rm pol}(T,\tau)
:=
\sup_{x\in[\tau,1]}|p_T(x)-1|.
\]
If \(\rank(\bM)=q\) and
\[
\frac{\sigma_q(\bM)}{\norm{\bM}_F+\epsilon_{\rm ns}}\ge\tau,
\]
then
\[
\norm{
\Orth_{\rm NS}^{T,\epsilon_{\rm ns}}(\bM)
-
\msign(\bM)
}_F
\le
\sqrt q\,e_{\rm pol}(T,\tau).
\]
For the classical cubic Newton--Schulz coefficients
\[
a_t=\frac32,
\qquad
b_t=-\frac12,
\qquad
c_t=0,
\]
one has
\[
e_{\rm pol}(T,\tau)\le (1-\tau^2)^{2^T}.
\]
\end{lemma}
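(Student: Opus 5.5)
The plan is to prove the three claims separately, in this order: the global Lipschitz and norm bounds by induction on $t$; the approximation bound by reducing to scalars on singular values; and the cubic error bound from a one-line polynomial identity.

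\emph{Normalization step.} For $t=0$, write $s=\norm{\bM}_F+\epsilon_{\rm ns}$ and $s'=\norm{\bM'}_F+\epsilon_{\rm ns}$. Then
\[
\norm{\mathbf Z_0(\bM)-\mathbf Z_0(\bM')}_F\le \frac{\norm{\bM-\bM'}_F}{s}+\frac{\norm{\bM'}_F}{s'}\cdot\frac{|s-s'|}{s}.
\]
We use $s\ge\epsilon_{\rm ns}$, $\norm{\bM'}_F/s'\le 1$, and $|s-s'|\le\norm{\bM-\bM'}_F$. This gives the constant $\Lambda_0=2/\epsilon_{\rm ns}$, and trivially $\norm{\mathbf Z_0}_F\le 1=\varrho_0$.

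\emph{Inductive step.} Assume $\norm{\mathbf Z_t}_F,\norm{\mathbf Z_t'}_F\le\varrho_t$. Submultiplicativity $\norm{\mathbf X\mathbf Y}_F\le\norm{\mathbf X}_F\norm{\mathbf Y}_F$ immediately gives $\norm{\mathbf Z_{t+1}}_F\le\varrho_{t+1}$. For differences, telescope the product. The cubic term is
\[
\mathbf Z\mathbf Z^\top\mathbf Z-\mathbf Z'\mathbf Z'^\top\mathbf Z'=(\mathbf Z-\mathbf Z')\mathbf Z^\top\mathbf Z+\mathbf Z'(\mathbf Z-\mathbf Z')^\top\mathbf Z+\mathbf Z'\mathbf Z'^\top(\mathbf Z-\mathbf Z').
\]
Each summand is at most $\varrho_t^2\norm{\mathbf Z-\mathbf Z'}_F$. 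The quintic term is handled the same way with five summands, giving $5\varrho_t^4$. Combining with the triangle inequality yields $\Lambda_{t+1}=(|a_t|+3|b_t|\varrho_t^2+5|c_t|\varrho_t^4)\Lambda_t$. The global Condition~\ref{ass:polar} statement is then immediate with $\mathcal T_\tau=\R^{d_1\times d_2}$.

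\emph{Approximation to $\msign$.} Write the SVD $\bM=\bU\bSigma\bV^\top$ with $\bU,\bV$ having $q$ orthonormal columns. Then $\mathbf Z_0=\bU\operatorname{diag}(x_j)\bV^\top$ with $x_j=\sigma_j/(\norm{\bM}_F+\epsilon_{\rm ns})$. We have $x_j\in[\tau,1]$, since $\sigma_j\le\norm{\bM}_F$ and the hypothesis gives $x_q\ge\tau$. Each odd matrix polynomial step preserves the singular vectors, because $\mathbf Z\mathbf Z^\top\mathbf Z=\bU\operatorname{diag}(x^3)\bV^\top$, and similarly for the quintic term. By induction, $\mathbf Z_T=\bU\operatorname{diag}(p_T(x_j))\bV^\top$. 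Since full rank gives $\msign(\bM)=\bU\bV^\top$, we obtain
\[
\norm{\mathbf Z_T-\msign(\bM)}_F^2=\sum_{j=1}^q(p_T(x_j)-1)^2\le q\,e_{\rm pol}(T,\tau)^2.
\]

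\emph{Cubic error bound.} For $f(x)=\tfrac32x-\tfrac12x^3$, I would verify by direct expansion the identity
\[
1-f(x)^2=\tfrac14(1-x^2)^2(4-x^2).
\]
For $x\in[0,1]$ this shows $f(x)\in[0,1]$, and the error $e_t:=1-p_t(x)^2$ satisfies $e_{t+1}\le e_t^2$. Hence $e_T\le(1-x^2)^{2^T}\le(1-\tau^2)^{2^T}$ on $[\tau,1]$. Finally, since $0\le p_T\le1$,
\[
|1-p_T|=1-p_T\le(1-p_T)(1+p_T)=e_T.
\]

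The only mildly delicate point is this cubic step: one has to check that the iterates stay in $[0,1]$, which the identity gives directly. Everything else is routine bookkeeping.
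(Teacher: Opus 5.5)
Your proposal is correct and follows essentially the same route as the paper's proof. Both use the $2/\epsilon_{\rm ns}$ normalization bound, telescoping of the three- and five-factor products on a Frobenius ball, the singular-value reduction $\mathbf Z_T=\bU\operatorname{diag}(p_T(x_j))\bV^\top$, and the cubic recursion $1-p_{t+1}^2=\tfrac14(1-p_t^2)^2(4-p_t^2)$. One small imprecision: the identity by itself only gives $f(x)^2\le 1$ on $[0,1]$, and nonnegativity of $f$ there comes from the equally immediate factorization $f(x)=\tfrac12 x(3-x^2)$.
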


\begin{proof}
First consider the normalization map
\[
N_{\epsilon_{\rm ns}}(\bM)=\frac{\bM}{\norm{\bM}_F+\epsilon_{\rm ns}} .
\]
For all \(\bM,\bM'\),
\begin{align*}
\norm{N_{\epsilon_{\rm ns}}(\bM)-N_{\epsilon_{\rm ns}}(\bM')}_F
&\le
\frac{\norm{\bM-\bM'}_F}{\norm{\bM}_F+\epsilon_{\rm ns}}
+
\norm{\bM'}_F
\left|
\frac{1}{\norm{\bM}_F+\epsilon_{\rm ns}}
-
\frac{1}{\norm{\bM'}_F+\epsilon_{\rm ns}}
\right|\\
&\le
\frac{\norm{\bM-\bM'}_F}{\epsilon_{\rm ns}}
+
\frac{\norm{\bM-\bM'}_F}{\epsilon_{\rm ns}}
=
\frac{2}{\epsilon_{\rm ns}}\norm{\bM-\bM'}_F .
\end{align*}
Also \(\norm{N_{\epsilon_{\rm ns}}(\bM)}_F\le1\). Hence \(\varrho_0=1\) and \(\Lambda_0=2/\epsilon_{\rm ns}\).

Let \(\norm{\mathbf X}_F,\norm{\mathbf Y}_F\le r\). By telescoping the three-factor product and using submultiplicativity,
\[
\norm{
\mathbf X\mathbf X^\top\mathbf X
-
\mathbf Y\mathbf Y^\top\mathbf Y
}_F
\le
3r^2\norm{\mathbf X-\mathbf Y}_F .
\]
Similarly, \((\mathbf X\mathbf X^\top)^2\mathbf X\) is a five-factor product
\[
\mathbf X\mathbf X^\top\mathbf X\mathbf X^\top\mathbf X .
\]
Telescoping this product against \(\mathbf Y\mathbf Y^\top\mathbf Y\mathbf Y^\top\mathbf Y\) gives five terms, each containing one factor \(\mathbf X-\mathbf Y\) or \((\mathbf X-\mathbf Y)^\top\) and four remaining factors of Frobenius norm at most \(r\). Therefore
\[
\norm{
(\mathbf X\mathbf X^\top)^2\mathbf X
-
(\mathbf Y\mathbf Y^\top)^2\mathbf Y
}_F
\le
5r^4\norm{\mathbf X-\mathbf Y}_F .
\]
Consequently, on the Frobenius ball of radius \(r\), the polynomial update
\[
\mathbf X\mapsto
a_t\mathbf X
+
b_t\mathbf X\mathbf X^\top\mathbf X
+
c_t(\mathbf X\mathbf X^\top)^2\mathbf X
\]
is Lipschitz with constant
\[
|a_t|+3|b_t|r^2+5|c_t|r^4,
\]
and maps the ball of radius \(r\) into the ball of radius
\[
|a_t|r+|b_t|r^3+|c_t|r^5 .
\]
Induction over \(t\) gives
\[
\norm{\mathbf Z_t(\bM)}_F\le\varrho_t,
\qquad
\norm{\mathbf Z_t(\bM)-\mathbf Z_t(\bM')}_F
\le
\Lambda_t\norm{\bM-\bM'}_F,
\]
for every \(t=0,\ldots,T\). Taking \(t=T\) proves the norm and Lipschitz claims. The constants \(\varrho_t\) and \(\Lambda_t\) depend only on \(T\), \(\epsilon_{\rm ns}\), and the coefficient sequence, not on the data, the participation schedule, or the trajectory. Hence Condition~\ref{ass:polar} holds globally with \(\mathcal T_\tau=\R^{d_1\times d_2}\) and \(L_{\Orth}=\Lambda_T\).

It remains to prove the approximation statement. Let
\[
\bM=\bU\operatorname{diag}(\sigma_1,\ldots,\sigma_q)\bV^\top
\]
be a compact SVD with \(\rank(\bM)=q\), and set
\[
x_j:=\frac{\sigma_j}{\norm{\bM}_F+\epsilon_{\rm ns}} .
\]
Then \(0\le x_j\le1\), and the spectral lower-bound assumption gives \(x_j\in[\tau,1]\) for all \(j\). Since
\[
\mathbf Z_0(\bM)=
\bU\operatorname{diag}(x_1,\ldots,x_q)\bV^\top,
\]
and the update uses only odd powers generated by \(\mathbf Z_t\mathbf Z_t^\top\), induction gives
\[
\mathbf Z_t(\bM)
=
\bU\operatorname{diag}(p_t(x_1),\ldots,p_t(x_q))\bV^\top .
\]
Since \(\msign(\bM)=\bU\bV^\top\) on the full-rank tube,
\begin{align*}
\norm{
\Orth_{\rm NS}^{T,\epsilon_{\rm ns}}(\bM)
-
\msign(\bM)
}_F^2
&=
\sum_{j=1}^q |p_T(x_j)-1|^2\\
&\le
q\,e_{\rm pol}(T,\tau)^2 .
\end{align*}
Taking square roots proves the coefficient-parametric approximation bound.

For the classical cubic coefficients, \(p_{t+1}(x)=p_t(x)(3-p_t(x)^2)/2\). If \(x\in[0,1]\), then \(p_t(x)\in[0,1]\) for every \(t\). Let \(s_t=p_t(x)^2\). Then
\[
s_{t+1}=\frac{s_t(3-s_t)^2}{4},
\]
and
\[
1-s_{t+1}
=
\frac{(1-s_t)^2(4-s_t)}{4}
\le
(1-s_t)^2,
\qquad 0\le s_t\le1 .
\]
If \(x\ge\tau\), then \(1-s_0\le1-\tau^2\), so
\[
1-s_T\le(1-\tau^2)^{2^T}.
\]
Since \(0\le p_T(x)\le1\),
\[
0\le1-p_T(x)\le1-p_T(x)^2=1-s_T,
\]
and therefore
\[
\sup_{x\in[\tau,1]}|p_T(x)-1|
\le
(1-\tau^2)^{2^T}.
\]
\end{proof}

\begin{proposition}
\label{prop:ns-finite-step-main}
With the definitions in Lemma~\ref{lem:ns-lipschitz}, for all \(\bM,\bM'\),
\[
\norm{
\Orth_{\rm NS}^{T,\epsilon_{\rm ns}}(\bM)
-
\Orth_{\rm NS}^{T,\epsilon_{\rm ns}}(\bM')
}_F
\le
\Lambda_T\norm{\bM-\bM'}_F,
\qquad
\norm{\Orth_{\rm NS}^{T,\epsilon_{\rm ns}}(\bM)}_F\le\varrho_T .
\]
Consequently, \(\Orth=\Orth_{\rm NS}^{T,\epsilon_{\rm ns}}\) satisfies Condition~\ref{ass:polar} globally with \(\mathcal T_\tau=\R^{d_1\times d_2}\) and \(L_{\Orth}=\Lambda_T\). If \(\rank(\bM)=q\) and \(\sigma_q(\bM)/(\norm{\bM}_F+\epsilon_{\rm ns})\ge\tau\), then
\[
\norm{
\Orth_{\rm NS}^{T,\epsilon_{\rm ns}}(\bM)
-
\msign(\bM)
}_F
\le
\sqrt q\,e_{\rm pol}(T,\tau).
\]
In the classical cubic Newton--Schulz specialization \(a_t=3/2\), \(b_t=-1/2\), and \(c_t=0\), one has \(e_{\rm pol}(T,\tau)\le (1-\tau^2)^{2^T}\).
\end{proposition}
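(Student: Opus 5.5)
The statement is Proposition~\ref{prop:ns-finite-step-main}, which restates the conclusions of Lemma~\ref{lem:ns-lipschitz}. The plan is to derive it directly from that lemma, checking each ingredient against the definitions.

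\textbf{Global Lipschitz and norm bounds.} I would first treat the normalization map \(\mathbf Z_0(\bM)=\bM/(\norm{\bM}_F+\epsilon_{\rm ns})\). Splitting the difference as \((\bM-\bM')/(\norm{\bM}_F+\epsilon_{\rm ns})\) plus \(\bM'\) times a difference of reciprocals, and using \(\norm{\bM'}_F\le\norm{\bM'}_F+\epsilon_{\rm ns}\) together with the reverse triangle inequality, gives the constant \(2/\epsilon_{\rm ns}=\Lambda_0\). The same formula gives \(\norm{\mathbf Z_0}_F\le1=\varrho_0\).

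Next, on the Frobenius ball of radius \(r\), I would telescope the three-factor and five-factor products. With submultiplicativity \(\norm{\mathbf{XY}}_F\le\norm{\mathbf X}_F\norm{\mathbf Y}_F\), each polynomial step is \((|a_t|+3|b_t|r^2+5|c_t|r^4)\)-Lipschitz and maps the ball of radius \(r\) into the ball of radius \(|a_t|r+|b_t|r^3+|c_t|r^5\). Induction on \(t\) then yields the bounds \(\varrho_T\) and \(\Lambda_T\).

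None of these constants depend on data, schedule, or trajectory. Condition~\ref{ass:polar} therefore holds with \(\mathcal T_\tau=\R^{d_1\times d_2}\) and \(L_{\Orth}=\Lambda_T\).

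\textbf{Approximation to \(\msign\).} Write \(\mathbf Z_0=\bU\operatorname{diag}(x_j)\bV^\top\). Each step applies an odd polynomial built from \(\mathbf Z_t\mathbf Z_t^\top\), so it preserves the singular vectors, and by induction \(\mathbf Z_t=\bU\operatorname{diag}(p_t(x_j))\bV^\top\). For full-rank \(\bM\), \(\msign(\bM)=\bU\bV^\top\), so the Frobenius error equals \((\sum_j|p_T(x_j)-1|^2)^{1/2}\). Since every \(x_j\in[\tau,1]\) (using \(\sigma_j\le\norm{\bM}_F\)), this is at most \(\sqrt q\,e_{\rm pol}(T,\tau)\).

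\textbf{Cubic case.} This is the only step needing a small computation.
\begin{itemize}
\item Show by induction that \(p_t\in[0,1]\) on \([0,1]\), since \(y(3-y^2)/2\) maps \([0,1]\) to itself.
\item Set \(s_t=p_t^2\). The identity \(1-s(3-s)^2/4=(1-s)^2(4-s)/4\le(1-s)^2\) gives quadratic contraction, so \(1-s_T\le(1-\tau^2)^{2^T}\).
\item Conclude with \(0\le1-p_T\le1-p_T^2\).
\end{itemize}
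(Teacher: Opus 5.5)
Your proposal is correct and follows essentially the same route as the paper, which proves the proposition by invoking Lemma~\ref{lem:ns-lipschitz}. That lemma's proof uses the same steps you give: the $2/\epsilon_{\rm ns}$ normalization bound, telescoping of the three- and five-factor products on a Frobenius ball with induction on $t$, the diagonal form $\mathbf Z_t=\bU\operatorname{diag}(p_t(x_j))\bV^\top$, and the cubic identity $1-s_{t+1}=(1-s_t)^2(4-s_t)/4$. Your justifications of $x_j\le 1$ via $\sigma_j\le\norm{\bM}_F$ and of the invariance of $[0,1]$ under $y\mapsto y(3-y^2)/2$ fill in details the paper leaves implicit.
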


\begin{proof}[Proof of Proposition~\ref{prop:ns-finite-step-main}]
Lemma~\ref{lem:ns-lipschitz} proves the Frobenius-Lipschitz bound, the norm bound, the global instantiation of Condition~\ref{ass:polar}, and the spectral approximation statement. The bound is data-independent because \(\Lambda_T\) is determined only by \(T\), \(\epsilon_{\rm ns}\), and the fixed coefficient sequence.
\end{proof}

\begin{remark}
If \(\epsilon_{\rm ns}=0\), the normalization map \(\bM\mapsto \bM/\norm{\bM}_F\) is not globally Lipschitz. The same proof remains valid on any tube satisfying
\[
\norm{\bM}_F,\norm{\bM'}_F\ge\nu>0,
\]
after replacing \(\Lambda_0=2/\epsilon_{\rm ns}\) by \(\Lambda_0=2/\nu\). This is then a local Lipschitz bound on that tube, not a global one; to use it in Theorem~\ref{thm:hp-gen}, the norm floor must hold for all coupled neighboring momenta at which \(\Orth\) is evaluated.
\end{remark}

For the Newton--Schulz FedMuon specialization, run Algorithm~\ref{alg:fedmuon} with
\[
\Orth=\Orth_{\rm NS}^{T,\epsilon_{\rm ns}}
\]
for fixed \(T\), \(\epsilon_{\rm ns}>0\), and fixed Newton--Schulz coefficients. Let \(\Lambda_T\) be the Lipschitz constant from Proposition~\ref{prop:ns-finite-step-main}, and let \(\mathbf A_{\rm NS}\), \(\mathbf b_{\rm NS}\), \(a_s^{\rm NS}\), and \(Y_i^{\rm NS}(\mathcal C)\) be the quantities in \eqref{eq:A-matrix}--\eqref{eq:Yi-def} after replacing \(L_{\Orth}\) by \(\Lambda_T\). For a fixed participation schedule \(\mathcal C\), set
\[
\varepsilon_i^{\rm NS}(\mathcal C)
:=
\min\{B_\ell,\,L_\ell\Delta_iY_i^{\rm NS}(\mathcal C)\}.
\]
If \(\mathcal C_0,\ldots,\mathcal C_{R-1}\) are independent uniformly sampled subsets of \([N]\) of size \(K\), and the replacement-sensitivity constants \(\Delta_i\) are fixed independently of the participation schedule, let \(\Psi_{R,E,K,N}^{\rm NS}(\rho)\) be \eqref{eq:Psi-def} computed with \(a_s^{\rm NS}\). Define
\[
\varepsilon_{i,{\rm det}}^{\rm NS}(\rho)
:=
\min\bigl\{
B_\ell,\,
L_\ell\Delta_i\Psi_{R,E,K,N}^{\rm NS}(\rho)
\bigr\}.
\]

\begin{corollary}
\label{cor:fedmuon-ns-hp}
Assume Assumptions~\ref{ass:data}, \ref{ass:randomness}, \ref{ass:loss}, and \ref{ass:sampling}, and the Newton--Schulz FedMuon specialization above.
Then, for every \(\delta\in(0,1)\),
\[
\Pr_S\!\left\{
F(\bX_{\rm out})-\wh F_S(\bX_{\rm out})
\le
\mathcal B_\delta\bigl(
\varepsilon_1^{\rm NS}(\mathcal C),\ldots,
\varepsilon_N^{\rm NS}(\mathcal C)
\bigr)
\right\}
\ge
1-\delta .
\]
If, in addition, Assumption~\ref{ass:grad} holds and local gradients are full-batch empirical gradients, one may take \(\Delta_i=2G_{\max}/n_i\).

If the participation schedule is uniformly random under the conditions used to define \(\varepsilon_{i,{\rm det}}^{\rm NS}(\rho)\), then, for any \(\rho,\delta\in(0,1)\), with probability at least \(1-\rho-\delta\) over the client-participation randomness and the training sample,
\[
F(\bX_{\rm out})-\wh F_S(\bX_{\rm out})
\le
\mathcal B_\delta\bigl(
\varepsilon_{1,{\rm det}}^{\rm NS}(\rho),\ldots,
\varepsilon_{N,{\rm det}}^{\rm NS}(\rho)
\bigr).
\]
\end{corollary}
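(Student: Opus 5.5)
The statement is Corollary~\ref{cor:fedmuon-ns-hp}. My plan is to treat it as a direct specialization of Theorem~\ref{thm:hp-gen} and Corollary~\ref{cor:hp-gen-random}, in the same way as the proof of Corollary~\ref{cor:fedmuon-smoothed-polar-hp}. The only new ingredient is the verification of Condition~\ref{ass:polar}, which is supplied by Proposition~\ref{prop:ns-finite-step-main}.

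\textbf{Step 1: verify Condition~\ref{ass:polar}.} Fix $T$, $\epsilon_{\rm ns}>0$ and the coefficient sequence. By Proposition~\ref{prop:ns-finite-step-main}, $\Orth_{\rm NS}^{T,\epsilon_{\rm ns}}$ is globally Frobenius-Lipschitz with constant $\Lambda_T$. Since $\Lambda_T$ is computed from the recursion for $(\varrho_t,\Lambda_t)$, it depends only on $T$, $\epsilon_{\rm ns}$ and the coefficients. It therefore does not depend on the training sample, the participation schedule, or the trajectory. Hence Condition~\ref{ass:polar} holds pathwise with $\mathcal T_\tau=\R^{d_1\times d_2}$ and $L_{\Orth}=\Lambda_T$ for every coupled neighboring pair. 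No spectral-gap tube is needed, because the map is call-independent and globally Lipschitz.

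\textbf{Step 2: fixed-schedule bound.} Substituting $L_{\Orth}=\Lambda_T$ into \eqref{eq:A-matrix}--\eqref{eq:Yi-def} produces $\mathbf A_{\rm NS}$, $\mathbf b_{\rm NS}$, $a_s^{\rm NS}$ and $Y_i^{\rm NS}(\mathcal C)$, all deterministic. Assumptions~\ref{ass:data}, \ref{ass:randomness}, \ref{ass:loss} and \ref{ass:sampling} are assumed, so Theorem~\ref{thm:hp-gen} applies and gives the first display with $\varepsilon_i^{\rm NS}(\mathcal C)$. For the full-batch clause, I would invoke Corollary~\ref{cor:hp-gen-fullbatch} (equivalently, the full-batch replacement lemma) to get $\Delta_i=2G_{\max}/n_i$ under Assumption~\ref{ass:grad}.

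\textbf{Step 3: random participation.} The weights $a_s^{\rm NS}$ are fixed before the schedule is drawn, and the $\Delta_i$ are schedule-independent by hypothesis. Therefore the deterministic-amplification conditions of Corollary~\ref{cor:hp-gen-random} hold. Lemma~\ref{lem:client-concentration} then shows that, with probability at least $1-\rho$, $Y_i^{\rm NS}(\mathcal C)\le\Psi^{\rm NS}_{R,E,K,N}(\rho)$ simultaneously for all $i$. To combine this with Step~2, I need $\mathcal B_\delta$ to be coordinatewise nondecreasing in $\varepsilon$ on $[0,B_\ell]^N$. This holds for both branches $\mathcal H$ and $\mathcal G$, since $2-p_i/n_i>0$ and every term is nonnegative and increasing in $\varepsilon_i$.

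I then apply the fixed-schedule bound conditionally on each realized schedule and use monotonicity to replace $\varepsilon_i^{\rm NS}(\mathcal C)$ by $\varepsilon_{i,{\rm det}}^{\rm NS}(\rho)$ on the good event. A union bound over the participation failure event and the sample failure event, with the schedule independent of $S$ by Assumption~\ref{ass:randomness}, gives success probability at least $1-\rho-\delta$. All of this is exactly Corollary~\ref{cor:hp-gen-random} with $L_{\Orth}=\Lambda_T$.

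I expect no real obstacle. The only point needing care is data-independence of $L_{\Orth}$: if $\epsilon_{\rm ns}$ were $0$, the constant would depend on a trajectory norm floor, and the deterministic random-participation bound could fail. Taking $\epsilon_{\rm ns}>0$ fixed in advance removes this issue.
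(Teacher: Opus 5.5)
Your proposal is correct and follows the paper's proof: Proposition~\ref{prop:ns-finite-step-main} verifies Condition~\ref{ass:polar} globally with $L_{\Orth}=\Lambda_T$, then Theorem~\ref{thm:hp-gen} gives the fixed-schedule bound, Lemma~\ref{lem:fullbatch-replacement} gives $\Delta_i=2G_{\max}/n_i$ in the full-batch case, and Corollary~\ref{cor:hp-gen-random} (via Lemma~\ref{lem:client-concentration}) gives the random-participation bound, since $\Lambda_T$ is fixed before the schedule is drawn. Your explicit check that $\mathcal B_\delta$ is coordinatewise nondecreasing is the same monotonicity step the paper uses inside its proof of Corollary~\ref{cor:hp-gen-random}.
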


\begin{proof}[Proof of Corollary~\ref{cor:fedmuon-ns-hp}]
By Proposition~\ref{prop:ns-finite-step-main}, the choice
\[
\Orth=\Orth_{\rm NS}^{T,\epsilon_{\rm ns}}
\]
satisfies Condition~\ref{ass:polar} globally with \(L_{\Orth}=\Lambda_T\). Substituting this value into \eqref{eq:A-matrix}--\eqref{eq:Yi-def} gives \(\mathbf A_{\rm NS}\), \(\mathbf b_{\rm NS}\), \(a_s^{\rm NS}\), and \(Y_i^{\rm NS}(\mathcal C)\). Theorem~\ref{thm:hp-gen} then gives the fixed-schedule bound with stability coefficients
\[
\varepsilon_i^{\rm NS}(\mathcal C)
=
\min\{B_\ell,L_\ell\Delta_iY_i^{\rm NS}(\mathcal C)\}.
\]
In the full-batch case under Assumption~\ref{ass:grad}, Lemma~\ref{lem:fullbatch-replacement} gives
\[
\Delta_i=\frac{2G_{\max}}{n_i}.
\]

For random uniform participation, \(\Lambda_T\) is fixed before the schedule is sampled. Therefore the amplification weights \(a_s^{\rm NS}\) are deterministic functions of the algorithmic constants, and Lemma~\ref{lem:client-concentration} applies to \(Y_i^{\rm NS}(\mathcal C)\). Equivalently, Corollary~\ref{cor:hp-gen-random} applies with \(L_{\Orth}=\Lambda_T\), giving the stated bound with
\[
\varepsilon_{i,{\rm det}}^{\rm NS}(\rho)
=
\min\bigl\{
B_\ell,\,
L_\ell\Delta_i\Psi_{R,E,K,N}^{\rm NS}(\rho)
\bigr\}.
\]
\end{proof}

In one dimension, \(\msign(m)=\operatorname{sign}(m)\) for \(m\ne0\) and \(\msign(0)=0\). For \(m=\epsilon\) and \(m'=-\epsilon\),
\[
|\msign(m)-\msign(m')|=2,
\qquad
|m-m'|=2\epsilon,
\]
so no finite global Lipschitz constant exists as \(\epsilon\downarrow0\). Matrix rank changes and near-zero singular values create the same obstruction. Condition~\ref{ass:polar} is therefore required for the stability theorem unless the orthogonalization map is smoothed or otherwise regularized.

\begin{proposition}
\label{prop:exact-sign-obstruction}
For exact scalar \(\msign\), no sample-size-decaying uniform-stability bound follows from Assumptions~\ref{ass:loss} and \ref{ass:grad} alone. More precisely, for every odd \(n\) and every \(\eta>0\), there exist neighboring full-batch datasets \(S,S'\) differing in one sample such that the one-step scalar Muon update with \(\beta=0\), \(x^0=m^0=0\), and \(\Orth=\msign\) satisfies
\[
|x^1(S)-x^1(S')|=2\eta,
\]
although there is a constant \(C>0\), independent of \(n\), such that
\[
|G_S(0)-G_{S'}(0)|\le \frac{C}{n}.
\]
Consequently, the induced loss stability can remain bounded away from zero independently of \(n\).
\end{proposition}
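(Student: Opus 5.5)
We prove Proposition~\ref{prop:exact-sign-obstruction} by an explicit scalar construction, with \(d_1=d_2=1\) so that \(\msign\) is the scalar sign. We pick a loss satisfying Assumptions~\ref{ass:loss} and \ref{ass:grad} whose gradient at \(0\) takes only the values \(\pm g\) for a fixed \(g>0\). A convenient choice is to let \(\mathcal Z=\{-1,+1\}\) and
\[
\ell(x,z)=\phi(zx),
\]
where \(\phi\) is smooth, bounded, nonnegative, Lipschitz, has bounded second derivative, and satisfies \(\phi'(0)=g\). One example is \(\phi(t)=c+\tanh(t)\) with \(c=1\) and \(g=1\), giving \(0\le\ell\le2\). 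Then \(\nabla\ell(0,z)=zg\), and the constants \(B_\ell,L_\ell,L_g,G_{\max}\) are finite and independent of \(n\).

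Next we build the datasets. Take one client with \(n\) samples, \(n\) odd, write \(n=2k+1\), and use full-batch gradients. Let \(S\) contain \(k+1\) copies of \(z=+1\) and \(k\) copies of \(z=-1\). Let \(S'\) replace one \(+1\) by \(-1\), so \(S'\) has \(k\) copies of \(+1\) and \(k+1\) copies of \(-1\). These datasets are neighbors. Their full-batch gradients at \(0\) are
\[
G_S(0)=\frac{g}{n},\qquad G_{S'}(0)=-\frac{g}{n},
\]
so \(|G_S(0)-G_{S'}(0)|=2g/n\), and we take \(C=2g\). Oddness of \(n\) is exactly what guarantees that neither gradient is zero, so the two signs are strictly opposite.

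With \(\beta=0\) and \(x^0=m^0=0\), the first update gives \(m^1=G(0)\), and therefore
\[
x^1=-\eta\,\msign(G(0)).
\]
This yields \(x^1(S)=-\eta\) and \(x^1(S')=\eta\), so \(|x^1(S)-x^1(S')|=2\eta\) independently of \(n\).

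For the final sentence on loss stability, we evaluate a test point, say \(z=+1\). The loss difference is
\[
|\phi(\eta)-\phi(-\eta)|=2\tanh\eta>0,
\]
which is constant in \(n\). Hence no uniform-stability coefficient decaying in \(n\) can hold.

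The only step requiring care is checking that the chosen \(\phi\) really satisfies both assumptions with \(n\)-independent constants; this is routine for \(\tanh\). The argument is otherwise direct.
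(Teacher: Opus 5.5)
Your proposal is correct and is essentially the paper's proof. The paper uses \(\ell(x,z)=B_\ell/2+az\tanh(x)\), which for \(z\in\{-1,+1\}\) coincides with your \(1+\tanh(zx)\) when \(a=1\) and \(B_\ell=2\). It then uses the same odd-\(n\) near-majority datasets, the same gradients \(\pm a/n\) at zero, and the same test point \(z=+1\), giving the \(n\)-independent loss gap \(2a\tanh\eta\).
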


\begin{proof}
Let the sample space be \(\{-1,+1\}\), and choose
\[
\ell(x,z)=\frac{B_\ell}{2}+a z\tanh(x),
\qquad
0<a\le \frac{B_\ell}{2}.
\]
Then \(0\le\ell\le B_\ell\), the loss is \(a\)-Lipschitz, and
\[
\nabla_x\ell(x,z)=a z\,\operatorname{sech}^2(x)
\]
is bounded and smooth with constants independent of \(n\). Take an odd \(n\). Let \(S\) contain \((n+1)/2\) copies of \(+1\) and \((n-1)/2\) copies of \(-1\). Let \(S'\) be obtained by replacing one \(+1\) in \(S\) by \(-1\). Then
\[
G_S(0)=\frac{a}{n},
\qquad
G_{S'}(0)=-\frac{a}{n},
\qquad
|G_S(0)-G_{S'}(0)|=\frac{2a}{n}.
\]
With \(\beta=0\), the first momentum equals the full-batch gradient, so exact \(\msign\) gives
\[
x^1(S)=-\eta,
\qquad
x^1(S')=+\eta.
\]
For the test point \(z=+1\),
\[
|\ell(x^1(S),+1)-\ell(x^1(S'),+1)|
=
2a\tanh(\eta),
\]
which is independent of \(n\). Thus exact \(\msign\) has a discontinuity obstruction at zero momentum.
\end{proof}

\begin{lemma}
\label{lem:exact-sign-gap}
Let
\[
\mathcal T_\gamma
:=
\{\bM\in\R^{d_1\times d_2}:\sigma_q(\bM)\ge\gamma\},
\qquad
\gamma>0.
\]
For \(\bM\in\mathcal T_\gamma\), let \(\msign(\bM)=\bU\bV^\top\), where \(\bM=\bU\bSigma\bV^\top\) is the compact SVD with \(q\) positive singular values. Then, for all \(\bM,\bM'\in\mathcal T_\gamma\),
\[
\norm{\msign(\bM)-\msign(\bM')}_F
\le
\frac{2}{\sigma_q(\bM)+\sigma_q(\bM')}
\norm{\bM-\bM'}_F
\le
\frac1\gamma\norm{\bM-\bM'}_F .
\]
In the neighboring-tube argument below we use the more conservative constant \(2/\gamma\).
\end{lemma}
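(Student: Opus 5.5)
We prove the first inequality; the second is immediate from \(\sigma_q(\bM),\sigma_q(\bM')\ge\gamma\). By transposition (which maps \(\msign(\bM)\) to \(\msign(\bM)^\top\) and preserves Frobenius norms and singular values) we may assume \(d_1\ge d_2=q\). On \(\mathcal T_\gamma\) the matrix \(\bM\) has full column rank, so it has a unique polar decomposition \(\bM=\mathbf Q\mathbf H\). Here \(\mathbf Q=\msign(\bM)=\bU\bV^\top\) has orthonormal columns and \(\mathbf H=\bV\bSigma\bV^\top=(\bM^\top\bM)^{1/2}\) is symmetric positive definite with \(\lambda_{\min}(\mathbf H)=\sigma_q(\bM)\). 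Similarly \(\bM'=\mathbf Q'\mathbf H'\). Write \(\mathbf E:=\bM-\bM'\).

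The first key step is a Sylvester identity. Set \(\mathbf W:=\mathbf Q'^\top\mathbf Q\in\R^{q\times q}\). Multiplying \(\mathbf E=\mathbf Q\mathbf H-\mathbf Q'\mathbf H'\) on the left by \(\mathbf Q'^\top\) gives \(\mathbf Q'^\top\mathbf E=\mathbf W\mathbf H-\mathbf H'\). Multiplying instead by \(\mathbf Q^\top\) and transposing gives \(\mathbf E^\top\mathbf Q=\mathbf H-\mathbf H'\mathbf W\). Subtracting, with \(\mathbf X:=\mathbf W-\mathbf I\), yields
\[
\mathbf X\mathbf H+\mathbf H'\mathbf X=\mathbf Q'^\top\mathbf E-\mathbf E^\top\mathbf Q=:\mathbf C .
\]
Diagonalize \(\mathbf H=\bV\bSigma\bV^\top\) and \(\mathbf H'=\bV'\bSigma'\bV'^\top\). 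In these bases the equation reads entrywise \((\sigma'_k+\sigma_l)\tilde X_{kl}=\tilde C_{kl}\). Hence
\[
\norm{\mathbf X}_F\le\frac{\norm{\mathbf C}_F}{\sigma_q(\bM)+\sigma_q(\bM')},
\qquad
\norm{\mathbf C}_F\le\norm{\mathbf Q'^\top\mathbf E}_F+\norm{\mathbf Q^\top\mathbf E}_F\le 2\norm{\mathbf E}_F .
\]

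In the square case \(d_1=d_2\), \(\mathbf Q'\) is orthogonal, so \(\mathbf Q-\mathbf Q'=\mathbf Q'\mathbf X\). This gives \(\norm{\mathbf Q-\mathbf Q'}_F=\norm{\mathbf X}_F\le\frac{2}{\sigma_q+\sigma_q'}\norm{\mathbf E}_F\), which is exactly the claimed bound.

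In the rectangular case \(d_1>d_2\), let \(\mathbf P'=\mathbf Q'\mathbf Q'^\top\). We decompose orthogonally:
\[
\mathbf Q-\mathbf Q'=\mathbf Q'\mathbf X+(\mathbf I-\mathbf P')\mathbf Q,
\qquad
\norm{\mathbf Q-\mathbf Q'}_F^2=\norm{\mathbf X}_F^2+\norm{(\mathbf I-\mathbf P')\mathbf Q}_F^2 .
\]
Since \((\mathbf I-\mathbf P')\bM'=0\), we have \((\mathbf I-\mathbf P')\mathbf Q=(\mathbf I-\mathbf P')\mathbf E\mathbf H^{-1}\). By symmetry in the roles of \(\bM\) and \(\bM'\), this part is bounded by \(\norm{(\mathbf I-\mathbf P')\mathbf E}_F/\max\{\sigma_q,\sigma_q'\}\).

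The main obstacle is recombining these two pieces with the sharp constant \(2/(\sigma_q+\sigma_q')\). To do this one must exploit that \(\mathbf C\) involves only the \(\mathbf Q'\)-range components of \(\mathbf E\) (together with \(\mathbf Q^\top\mathbf E\)), while the second piece involves only the complementary component \((\mathbf I-\mathbf P')\mathbf E\). The Pythagorean split of \(\norm{\mathbf E}_F^2\) then absorbs both pieces, following Li's refinement~\cite{li1995new}. I would first attempt this bookkeeping directly.

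If it fails, the conservative constant \(2/\gamma\) announced in the lemma for later use still follows easily. On \(\mathcal T_\gamma\) we have \(\norm{\mathbf X}_F\le\norm{\mathbf E}_F/\gamma\) and \(\norm{(\mathbf I-\mathbf P')\mathbf Q}_F\le\norm{\mathbf E}_F/\gamma\). Therefore \(\norm{\mathbf Q-\mathbf Q'}_F\le\sqrt2\,\norm{\mathbf E}_F/\gamma\le 2\norm{\mathbf E}_F/\gamma\). This suffices for verifying Condition~\ref{ass:polar} on a joint gap tube.
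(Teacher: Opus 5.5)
Your transposition reduction and Sylvester-equation argument are correct. In the square case $d_1=d_2$ they give a complete, self-contained proof of the sharp constant $2/(\sigma_q(\bM)+\sigma_q(\bM'))$; this is the standard argument behind Li's square-case bound. The paper does not reprove anything: it cites Li's theorem for full-column-rank matrices and then transposes.

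\textbf{The gap is the rectangular case $d_1\ne d_2$.} This is the generic Muon situation, and there you prove neither the stated constant $2/(\sigma_q+\sigma_q')$ nor the weaker $1/\gamma$. The recombination you sketch does not work as described, because $\mathbf C=\mathbf Q'^\top\mathbf E-\mathbf E^\top\mathbf Q$ is not confined to the $\mathbf Q'$-range part of $\mathbf E$. Indeed, $\mathbf Q^\top\mathbf E=\mathbf W^\top\mathbf Q'^\top\mathbf E+\bigl((\mathbf I-\mathbf P')\mathbf Q\bigr)^\top(\mathbf I-\mathbf P')\mathbf E$, so the complementary component $(\mathbf I-\mathbf P')\mathbf E$ enters both orthogonal pieces. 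For instance, take $\sigma_q=\sigma_q'=\gamma$, $\norm{\mathbf E}_F=1$ and $c=\norm{\mathbf Q'^\top\mathbf E}_F$. The natural estimates are $\norm{\mathbf X}_F\le(c+1)/(2\gamma)$ and $\norm{(\mathbf I-\mathbf P')\mathbf Q}_F\le\sqrt{1-c^2}/\gamma$. They only give $\norm{\mathbf Q-\mathbf Q'}_F^2\le\bigl((1+c)^2/4+1-c^2\bigr)/\gamma^2$, whose maximum $4/3$ (attained at $c=1/3$) exceeds $1$. So this bookkeeping does not even reach $1/\gamma$. The sharp rectangular Frobenius constant needs a genuinely finer argument, or an imported result as in the paper.

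Your fallback is correct, with one small fix. The symmetric estimate involves $(\mathbf I-\mathbf P)\mathbf E$, not $(\mathbf I-\mathbf P')\mathbf E$, so it should read $\norm{\mathbf E}_F/\max\{\sigma_q,\sigma_q'\}$. The resulting $\sqrt2/\gamma$ does suffice for the $2/\gamma$ used in Lemma~\ref{lem:exact-sign-tube-certificate}, but it is a weaker statement than the lemma.

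\textbf{A way to get the outer $1/\gamma$ bound for all shapes.} Use the symmetric dilation $\mathrm{Dil}(\bM)=\begin{pmatrix}0&\bM\\ \bM^\top&0\end{pmatrix}$.
\begin{itemize}[leftmargin=*]
\item For full-column-rank $\bM$, one has $\operatorname{sign}(\mathrm{Dil}(\bM))=\mathrm{Dil}(\msign(\bM))$, with $\operatorname{sign}(0)=0$.
\item Take eigenbases $\{\mathbf p_i\}$, $\{\mathbf q_j\}$ of $\mathrm{Dil}(\bM)$, $\mathrm{Dil}(\bM')$ with eigenvalues $\lambda_i,\mu_j$. Each coefficient $\mathbf p_i^\top\bigl(\operatorname{sign}\mathrm{Dil}(\bM)-\operatorname{sign}\mathrm{Dil}(\bM')\bigr)\mathbf q_j$ equals the divided difference of $\operatorname{sign}$ at $(\lambda_i,\mu_j)$ times $\mathbf p_i^\top\mathrm{Dil}(\bM-\bM')\mathbf q_j$. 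The divided difference is taken as $0$ when $\lambda_i=\mu_j$.
\item These divided differences are $0$ or $2/(\sigma_k+\sigma_l')$. Zero eigenvalues occur only when $d_1\ne d_2$, and they add the values $1/\sigma_k$ or $1/\sigma_l'$.
\end{itemize}
Hence $\norm{\msign(\bM)-\msign(\bM')}_F\le\norm{\bM-\bM'}_F/\min\{\sigma_q(\bM),\sigma_q(\bM')\}$ in the rectangular case, which is at most $\norm{\bM-\bM'}_F/\gamma$. In the square case the same argument recovers $2/(\sigma_q+\sigma_q')$. The intermediate rectangular constant $2/(\sigma_q+\sigma_q')$ must still be imported.
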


\begin{proof}
Assume first that \(d_1\ge d_2\). Then \(q=d_2\), and every matrix in \(\mathcal T_\gamma\) has full column rank. The factor \(\msign(\bM)\) is the full-column-rank polar factor of \(\bM\). Li's polar-factor perturbation theorem~\cite{li1995new} states that if \(\mathbf Z,\widetilde{\mathbf Z}\in\R^{m\times n}\), \(m\ge n\), have full column rank and polar factors \(\mathbf Q,\widetilde{\mathbf Q}\), then, for every unitarily invariant norm,
\[
\norm{\mathbf Q-\widetilde{\mathbf Q}}
\le
\frac{2}{\sigma_n(\mathbf Z)+\sigma_n(\widetilde{\mathbf Z})}
\norm{\mathbf Z-\widetilde{\mathbf Z}} .
\]
Applying this theorem with the Frobenius norm gives the sharper displayed inequality.

If \(d_1<d_2\), apply the same full-column-rank result to \(\bM^\top\) and \((\bM')^\top\). Since \(\msign(\bM^\top)=\msign(\bM)^\top\) and the Frobenius norm is invariant under transposition, the same bound follows. If \(\bM,\bM'\in\mathcal T_\gamma\), then \(\sigma_q(\bM)+\sigma_q(\bM')\ge2\gamma\), which gives the \(1/\gamma\) bound. The \(2/\gamma\) constant used below is a valid conservative relaxation.
\end{proof}

\begin{lemma}
\label{lem:gaussian-perturbation-smin}
Let \(q=\min\{d_1,d_2\}\). There is a universal numerical constant \(C_{\rm ac}>0\) such that, for every deterministic \(\mathbf Z\in\R^{d_1\times d_2}\), every \(\sigma>0\), every \(t\ge0\), and every \(\mathbf H\in\R^{d_1\times d_2}\) with independent \(N(0,1)\) entries,
\[
\Prb\!\left\{
\sigma_q(\mathbf Z+\sigma\mathbf H)\le t
\right\}
\le
\min\left\{
1,\,
C_{\rm ac}\sqrt q\,\frac{t}{\sigma}
\right\}.
\]
The same bound holds conditionally when \(\mathbf Z\) is random but independent of \(\mathbf H\).
\end{lemma}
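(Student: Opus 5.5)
The plan is to reduce to the square case and then invoke the smoothed-analysis bound of Sankar, Spielman and Teng for the smallest singular value of a Gaussian-perturbed square matrix. That bound states: for any deterministic $\mathbf A\in\R^{q\times q}$ and a $q\times q$ matrix $\mathbf G$ with i.i.d. $N(0,1)$ entries, $\Prb\{\sigma_q(\mathbf A+\sigma\mathbf G)\le t\}\le C\sqrt q\,t/\sigma$, with a universal constant ($C\approx 2.35$ in their paper). The $\min\{1,\cdot\}$ in the statement is trivial, so only the second term needs proof.

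\emph{Reduction to $d_1\ge d_2$.} Singular values are invariant under transposition, and $\mathbf H^\top$ again has i.i.d. standard Gaussian entries. So, replacing $(\mathbf Z,\mathbf H)$ by $(\mathbf Z^\top,\mathbf H^\top)$ if necessary, we may assume $d_1\ge d_2=q$. Then $\sigma_q(\mathbf M)=\min_{\norm{v}_2=1}\norm{\mathbf M v}_2$ for $\mathbf M=\mathbf Z+\sigma\mathbf H$.

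\emph{Row-selection step.} Let $\mathbf P\in\R^{q\times d_1}$ select the first $q$ rows. For every unit $v$ we have $\norm{\mathbf M v}_2\ge\norm{\mathbf P\mathbf M v}_2$, so
\[
\sigma_q(\mathbf M)\ge\sigma_q(\mathbf P\mathbf M)=\sigma_q(\mathbf P\mathbf Z+\sigma\mathbf P\mathbf H).
\]
The matrix $\mathbf P\mathbf H$ is a $q\times q$ matrix with i.i.d. $N(0,1)$ entries, and $\mathbf P\mathbf Z$ is deterministic. Hence
\[
\Prb\{\sigma_q(\mathbf M)\le t\}\le\Prb\{\sigma_q(\mathbf P\mathbf Z+\sigma\mathbf P\mathbf H)\le t\}\le C_{\rm ac}\sqrt q\,t/\sigma
\]
by the square-case theorem. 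Scale invariance handles general $\sigma$: apply the theorem to $\mathbf A/\sigma$ and threshold $t/\sigma$.

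\emph{Conditional version.} If $\mathbf Z$ is random but independent of $\mathbf H$, we condition on $\mathbf Z$. By independence, the conditional law of $\mathbf H$ is still i.i.d. Gaussian, so the deterministic bound applies pointwise. Fubini then gives both the conditional statement and the unconditional one.

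\emph{Main obstacle.} The only substantive input is the square Sankar--Spielman--Teng bound with the $\sqrt q$ (rather than $q$) dependence. I would cite it rather than reprove it.

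If a self-contained argument were required, I would first try their route. Write $\sigma_q(\mathbf B)^{-1}=\norm{\mathbf B^{-1}}$. Pick a unit vector $u$ with $\norm{\mathbf B^{-1}u}\ge\norm{\mathbf B^{-1}}/\sqrt q$ with constant probability; a random Gaussian direction works. Then express $\langle\mathbf B^{-1}u,e_i\rangle$ through the distance from a column to the span of the others. Conditioned on those other columns, this distance is a one-dimensional Gaussian with variance $\sigma^2$, which has a density bounded by $1/(\sigma\sqrt{2\pi})$.

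The naive alternative bounds $\sigma_q$ by $\min_i\operatorname{dist}(b_i,\mathrm{span}_{j\ne i}b_j)/\sqrt q$ and takes a union bound over columns. That loses a factor $\sqrt q$ and gives only $q\,t/\sigma$. So the careful averaging over coordinates of $\mathbf B^{-1}u$ is the step where the sharper constant is earned.
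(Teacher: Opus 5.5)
Your proposal is correct and matches the paper's proof. Both cite the square-case shifted-Gaussian least-singular-value bound of Sankar--Spielman--Teng, reduce the tall case by selecting \(q\) rows so that \(\sigma_q(\mathbf Z+\sigma\mathbf H)\ge\sigma_q(\mathbf P\mathbf Z+\sigma\mathbf P\mathbf H)\) with \(\mathbf P\mathbf H\) again standard Gaussian, treat \(d_1<d_2\) by transposition, and condition on \(\mathbf Z\) for the random case. One small slip occurs in your optional aside: the naive route \(\sigma_q(\mathbf B)\ge\min_i\operatorname{dist}_i/\sqrt q\) plus a union bound over \(q\) columns gives order \(q^{3/2}t/\sigma\), i.e.\ it loses a factor \(q\) rather than \(\sqrt q\), but this does not affect the argument you actually rely on.
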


\begin{proof}
The square case is the standard shifted Gaussian least-singular-value anti-concentration bound from smoothed analysis; see \cite{sankar2006smoothed,rudelson2008smallest,vershynin2018high}. Namely, for every deterministic \(\mathbf B\in\R^{q\times q}\), every \(\sigma>0\), and every \(\mathbf H\in\R^{q\times q}\) with independent \(N(0,1)\) entries,
\[
\Prb\!\left\{
\sigma_q(\mathbf B+\sigma\mathbf H)\le t
\right\}
\le
\min\left\{
1,\,
C_{\rm ac}\sqrt q\,\frac{t}{\sigma}
\right\}.
\]

For a rectangular matrix, assume first that \(d_1\ge d_2=q\). Let \(\mathbf P\in\R^{q\times d_1}\) select any \(q\) rows and set
\[
\mathbf C=\mathbf P\mathbf Z,
\qquad
\mathbf K=\mathbf P\mathbf H.
\]
Then \(\mathbf K\) has independent \(N(0,1)\) entries and
\[
(\mathbf Z+\sigma\mathbf H)^\top(\mathbf Z+\sigma\mathbf H)
-
(\mathbf C+\sigma\mathbf K)^\top(\mathbf C+\sigma\mathbf K)
\succeq 0.
\]
Therefore
\[
\sigma_q(\mathbf Z+\sigma\mathbf H)
\ge
\sigma_q(\mathbf C+\sigma\mathbf K),
\]
and
\[
\left\{
\sigma_q(\mathbf Z+\sigma\mathbf H)\le t
\right\}
\subseteq
\left\{
\sigma_q(\mathbf C+\sigma\mathbf K)\le t
\right\}.
\]
The case \(d_1<d_2\) follows by applying the same argument to transposes. If \(\mathbf Z\) is random but independent of \(\mathbf H\), condition on \(\mathbf Z\) and apply the deterministic bound.
\end{proof}

\begin{lemma}
\label{lem:csgn-lipschitz}
Let \(q=\min\{d_1,d_2\}\) and \(\gamma>0\). Define
\[
\phi_\gamma(s)
=
\begin{cases}
s^2/(2\gamma), & 0\le s\le\gamma,\\
s-\gamma/2, & s\ge\gamma,
\end{cases}
\qquad
\Phi_\gamma(\mathbf Z)
=
\sum_{j=1}^q \phi_\gamma(\sigma_j(\mathbf Z)),
\]
where the \(q\) singular values include zeros. Set
\[
\operatorname{csgn}_\gamma(\mathbf Z)
:=
\nabla\Phi_\gamma(\mathbf Z).
\]
Then \(\operatorname{csgn}_\gamma\) is well defined on \(\R^{d_1\times d_2}\), satisfies
\[
\norm{
\operatorname{csgn}_\gamma(\mathbf Z)
-
\operatorname{csgn}_\gamma(\mathbf W)
}_F
\le
\frac1\gamma\norm{\mathbf Z-\mathbf W}_F
\qquad
\text{for all }\mathbf Z,\mathbf W,
\]
and obeys \(\norm{\operatorname{csgn}_\gamma(\mathbf Z)}_F\le\sqrt q\). Moreover, if \(\sigma_q(\mathbf Z)\ge\gamma\), then
\[
\operatorname{csgn}_\gamma(\mathbf Z)=\msign(\mathbf Z).
\]
\end{lemma}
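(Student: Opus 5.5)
We prove Lemma~\ref{lem:csgn-lipschitz} by treating \(\Phi_\gamma\) as an orthogonally invariant spectral function and reusing the argument from Lemma~\ref{lem:smoothed-polar-lipschitz}.

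\textbf{The scalar function.} Extend \(\phi_\gamma\) to an even function on \(\R\) by \(\phi_\gamma(s):=\phi_\gamma(|s|)\). This extension is the Huber function: convex, \(C^1\), with
\[
\phi_\gamma'(s)=\operatorname{sign}(s)\min\{|s|/\gamma,1\}.
\]
So \(\phi_\gamma'\) is \(1/\gamma\)-Lipschitz and \(|\phi_\gamma'|\le1\). The symmetric function \(f(x)=\sum_{j=1}^q\phi_\gamma(x_j)\) on \(\R^q\) is then convex and absolutely symmetric, and \(\nabla f\) is \(1/\gamma\)-Lipschitz in \(\ell_2\).

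\textbf{Differentiability, formula and Lipschitz bound.} By Lewis's theory of spectral (singular-value) functions, \(\Phi_\gamma=f\circ\sigma\) is convex and differentiable everywhere, with
\[
\nabla\Phi_\gamma(\mathbf Z)=\bU\operatorname{diag}(\phi_\gamma'(\sigma_j))\bV^\top
\]
for any SVD of \(\mathbf Z\). This settles well-definedness, including at repeated or zero singular values, since \(\phi_\gamma'(0)=0\). For the Lipschitz bound, I would use the transfer principle (Lewis; Beck's text) that \(L\)-smoothness of an absolutely symmetric convex \(f\) implies \(L\)-smoothness of \(f\circ\sigma\) in Frobenius norm. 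Concretely, the Fenchel conjugate satisfies \(\Phi_\gamma^*=f^*\circ\sigma\). Since \(f\) is \(1/\gamma\)-smooth, \(f^*\) is \(\gamma\)-strongly convex, and this strong convexity transfers to \(\Phi_\gamma^*\) through von Neumann's trace inequality. Dualizing back gives that \(\Phi_\gamma\) is \(1/\gamma\)-smooth. This is the same Hessian-type argument as in Lemma~\ref{lem:smoothed-polar-lipschitz}, with \(\phi_\gamma'(t)/t\le1/\gamma\) and \(\phi_\gamma''\le 1/\gamma\) almost everywhere.

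\textbf{Norm bound and agreement with \(\msign\).} The singular values of \(\operatorname{csgn}_\gamma(\mathbf Z)\) are \(\phi_\gamma'(\sigma_j)\in[0,1]\), so \(\norm{\operatorname{csgn}_\gamma(\mathbf Z)}_F\le\sqrt q\). If \(\sigma_q(\mathbf Z)\ge\gamma\), then every \(\phi_\gamma'(\sigma_j)=1\), which gives \(\operatorname{csgn}_\gamma(\mathbf Z)=\bU\bV^\top=\msign(\mathbf Z)\).

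\textbf{Main obstacle.} The delicate step is the global Frobenius Lipschitz transfer at non-smooth points, where singular values coincide or vanish and a naive SVD-perturbation argument fails. The conjugate-duality route avoids this because it needs only convexity plus von Neumann's inequality, not second derivatives. If that route needs more care, the fallback is to prove the bound for \(\sigma\)-distinct pairs via the Hessian formula and extend by density and continuity of \(\nabla\Phi_\gamma\).
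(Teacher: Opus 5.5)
Your proof is correct, but it reaches the Lipschitz bound by a different route than the paper.

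\textbf{The paper's route.} The paper identifies $\Phi_\gamma$ with the Moreau envelope of the nuclear norm $h=\norm{\cdot}_*$. It computes $\operatorname{prox}_{\gamma h}$ as singular-value soft-thresholding via von Neumann's inequality, reads off $\nabla\Phi_\gamma(\mathbf Z)=\gamma^{-1}(\mathbf Z-\operatorname{prox}_{\gamma h}(\mathbf Z))$, and gets the constant $1/\gamma$ from a two-line monotonicity argument for $\partial h$.

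\textbf{Your route.} You use Lewis's singular-value calculus (the gradient formula and $(f\circ\sigma)^*=f^*\circ\sigma$) together with the duality between $\gamma$-strong convexity and $1/\gamma$-smoothness.

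\textbf{How they relate.} The two routes are conjugate to each other. Since $f^*(y)=\sum_j[\tfrac{\gamma}{2}y_j^2+\iota_{[-1,1]}(y_j)]$, you get $\Phi_\gamma^*(\mathbf Y)=\tfrac{\gamma}{2}\norm{\mathbf Y}_F^2$ plus the indicator of the spectral-norm unit ball. This is visibly $\gamma$-strongly convex, and its conjugate is exactly the nuclear-norm Moreau envelope the paper works with. So in this instance your ``transfer through von Neumann'' step can be skipped. If you keep it in general form, state it precisely: $(f^*-\tfrac{\gamma}{2}\norm{\cdot}^2)\circ\sigma$ is convex, and the Euclidean norm of $\sigma(\mathbf Y)$ equals $\norm{\mathbf Y}_F$.

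\textbf{What each buys.} The paper's argument is more self-contained, needing only von Neumann's inequality and standard prox facts. Yours leans on cited spectral-function theorems but is more general. It lifts smoothness of any absolutely symmetric convex $f$ to $f\circ\sigma$, and so would also give Lemma~\ref{lem:smoothed-polar-lipschitz} without a Hessian formula.

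\textbf{Two corrections to your write-up.}
\begin{itemize}
\item Calling your conjugate step ``the same Hessian-type argument'' is a mislabel, since that step uses no second derivatives.
\item The Hessian/density fallback is unnecessary and also delicate. The function $\phi_\gamma''$ jumps at $s=\gamma$, so the Hessian formula fails on the hypersurfaces $\{\sigma_j=\gamma\}$, which generic segments cross.
\end{itemize}
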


\begin{proof}
Let \(h(\mathbf Z)=\norm{\mathbf Z}_*\). The Moreau envelope of \(h\) with parameter \(\gamma\) is
\[
h_\gamma(\mathbf Z)
:=
\min_{\mathbf Y}
\left\{
\norm{\mathbf Y}_*
+
\frac{1}{2\gamma}\norm{\mathbf Z-\mathbf Y}_F^2
\right\}.
\]
If
\[
\mathbf Z
=
\mathbf U
\operatorname{diag}(\sigma_1(\mathbf Z),\ldots,\sigma_q(\mathbf Z))
\mathbf V^\top
\]
is a full thin SVD with \(q\) singular values, von Neumann's trace inequality reduces the proximal problem to the scalar soft-thresholding problems
\[
\min_{s_j\ge0}
\left\{
s_j+\frac{1}{2\gamma}
(\sigma_j(\mathbf Z)-s_j)^2
\right\},
\]
whose solutions are \(s_j=(\sigma_j(\mathbf Z)-\gamma)_+\). Hence
\[
\operatorname{prox}_{\gamma h}(\mathbf Z)
=
\mathbf U
\operatorname{diag}\!\left(
(\sigma_1(\mathbf Z)-\gamma)_+,\ldots,
(\sigma_q(\mathbf Z)-\gamma)_+
\right)
\mathbf V^\top
\]
and
\[
h_\gamma(\mathbf Z)
=
\sum_{j=1}^q \phi_\gamma(\sigma_j(\mathbf Z))
=
\Phi_\gamma(\mathbf Z).
\]
The Moreau envelope is differentiable and
\[
\nabla \Phi_\gamma(\mathbf Z)
=
\frac{1}{\gamma}
\left(
\mathbf Z-\operatorname{prox}_{\gamma h}(\mathbf Z)
\right)
=
\mathbf U
\operatorname{diag}\!\left(
c_1(\mathbf Z),\ldots,c_q(\mathbf Z)
\right)
\mathbf V^\top,
\qquad
c_j(\mathbf Z):=
\min\left\{\frac{\sigma_j(\mathbf Z)}{\gamma},1\right\}.
\]
This formula is independent of the chosen singular vectors, including at rank-deficient points, because it is the gradient of the globally defined convex function \(\Phi_\gamma\). Equivalently, zero singular directions have zero coefficient, and repeated positive singular values have a coefficient that is constant on the corresponding singular subspace.

It remains to prove the Lipschitz bound. Let
\[
\mathbf P=\operatorname{prox}_{\gamma h}(\mathbf Z),
\qquad
\mathbf Q=\operatorname{prox}_{\gamma h}(\mathbf W).
\]
The optimality conditions give
\[
\frac{\mathbf Z-\mathbf P}{\gamma}\in\partial h(\mathbf P),
\qquad
\frac{\mathbf W-\mathbf Q}{\gamma}\in\partial h(\mathbf Q).
\]
By monotonicity of \(\partial h\),
\[
\left\langle
(\mathbf Z-\mathbf P)-(\mathbf W-\mathbf Q),
\mathbf P-\mathbf Q
\right\rangle
\ge0.
\]
Let
\[
\mathbf R:=(\mathbf Z-\mathbf P)-(\mathbf W-\mathbf Q),
\qquad
\mathbf D:=\mathbf Z-\mathbf W.
\]
Since
\[
\mathbf D=\mathbf R+(\mathbf P-\mathbf Q),
\]
the previous inequality implies
\[
\norm{\mathbf R}_F^2
\le
\langle \mathbf R,\mathbf D\rangle
\le
\norm{\mathbf R}_F\norm{\mathbf D}_F.
\]
Thus \(\norm{\mathbf R}_F\le\norm{\mathbf D}_F\), and therefore
\[
\norm{\nabla\Phi_\gamma(\mathbf Z)-\nabla\Phi_\gamma(\mathbf W)}_F
=
\frac1\gamma\norm{\mathbf R}_F
\le
\frac1\gamma\norm{\mathbf Z-\mathbf W}_F.
\]
The norm bound follows because every singular value of \(\operatorname{csgn}_\gamma(\mathbf Z)\) is at most one. If \(\sigma_q(\mathbf Z)\ge\gamma\), then all \(q\) singular values are clipped to one, so
\[
\operatorname{csgn}_\gamma(\mathbf Z)
=
\mathbf U\mathbf V^\top
=
\msign(\mathbf Z).
\]
\end{proof}

\begin{lemma}
\label{lem:no-clipping-smoothed-sign}
Let \(\mathcal J_{\rm sign}\) be a nonempty finite ordered collection of sign calls. For each \(\tau\in\mathcal J_{\rm sign}\), let \(\mathcal F_{\tau-}\) be the sigma-field generated by the training sample, participation schedule, mini-batch randomness, and all Gaussian perturbations used before call \(\tau\). Suppose \(\mathbf H_\tau\in\R^{d_1\times d_2}\) has independent \(N(0,1)\) entries and is independent of \(\mathcal F_{\tau-}\). Let \(\bM_\tau\) be \(\mathcal F_{\tau-}\)-measurable. Then, for every \(\gamma>0\),
\[
\Prb\!\left\{
\exists \tau\in\mathcal J_{\rm sign}:
\sigma_q(\bM_\tau+\sigma\mathbf H_\tau)<\gamma
\right\}
\le
|\mathcal J_{\rm sign}|
\min\left\{
1,\,
C_{\rm ac}\sqrt q\,\frac{\gamma}{\sigma}
\right\}.
\]
Consequently, with
\[
\gamma_{\alpha,\sigma}
=
\frac{\sigma\alpha}
{C_{\rm ac}\sqrt q\,|\mathcal J_{\rm sign}|},
\qquad
\alpha\in(0,1),
\]
one has
\[
\sigma_q(\bM_\tau+\sigma\mathbf H_\tau)
\ge
\gamma_{\alpha,\sigma}
\qquad
\text{for all }\tau\in\mathcal J_{\rm sign}
\]
with probability at least \(1-\alpha\).
\end{lemma}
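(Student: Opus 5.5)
The plan is a union bound over the finitely many sign calls, with each term handled by conditioning on the past and then applying Lemma~\ref{lem:gaussian-perturbation-smin}. Fix \(\gamma>0\) and a call \(\tau\in\mathcal J_{\rm sign}\). Since \(\bM_\tau\) is \(\mathcal F_{\tau-}\)-measurable and \(\mathbf H_\tau\) is independent of \(\mathcal F_{\tau-}\), we may condition on \(\mathcal F_{\tau-}\), which freezes \(\bM_\tau\) as a deterministic matrix while \(\mathbf H_\tau\) keeps its standard Gaussian law. The conditional form of Lemma~\ref{lem:gaussian-perturbation-smin}, with \(t=\gamma\), then gives
\[
\Prb\!\left\{\sigma_q(\bM_\tau+\sigma\mathbf H_\tau)\le\gamma\,\middle|\,\mathcal F_{\tau-}\right\}
\le
\min\left\{1,\,C_{\rm ac}\sqrt q\,\frac{\gamma}{\sigma}\right\}
\quad\text{a.s.}
\]
The strict event \(\{\sigma_q<\gamma\}\) is contained in \(\{\sigma_q\le\gamma\}\), so taking expectations gives the same unconditional bound for each \(\tau\).

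The step needing care is that \(\bM_\tau\) depends on earlier perturbations \(\mathbf H_{\tau'}\), \(\tau'<\tau\). These enter through the trajectory, so the matrices \(\bM_\tau+\sigma\mathbf H_\tau\) are not independent across calls. The union bound avoids this issue, since it needs only the marginal probability of each event. Each such probability is controlled by the conditioning above, which uses only the filtration structure, namely that \(\mathbf H_\tau\) is fresh relative to \(\mathcal F_{\tau-}\).

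Summing over \(\tau\in\mathcal J_{\rm sign}\) gives the first claim. A factor \(|\mathcal J_{\rm sign}|\) appears, and placing it outside the \(\min\) is still valid because each summand is at most \(\min\{1,\cdot\}\). For the consequence, substitute \(\gamma=\gamma_{\alpha,\sigma}\). This gives \(C_{\rm ac}\sqrt q\,\gamma/\sigma=\alpha/|\mathcal J_{\rm sign}|\), so the failure probability is at most \(|\mathcal J_{\rm sign}|\cdot\alpha/|\mathcal J_{\rm sign}|=\alpha\). The complement event is exactly \(\sigma_q(\bM_\tau+\sigma\mathbf H_\tau)\ge\gamma_{\alpha,\sigma}\) for all \(\tau\), and it holds with probability at least \(1-\alpha\).
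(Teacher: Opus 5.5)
Your proposal is correct and follows the paper's proof: you condition on \(\mathcal F_{\tau-}\), apply Lemma~\ref{lem:gaussian-perturbation-smin} to each call, take expectations, union-bound over \(\mathcal J_{\rm sign}\), and substitute \(\gamma_{\alpha,\sigma}\) so that the total failure probability is \(\alpha\). Your remarks on the strict versus non-strict inequality and on why dependence across calls does not matter are details the paper leaves implicit.
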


\begin{proof}
Fix \(\tau\). Since \(\bM_\tau\) is \(\mathcal F_{\tau-}\)-measurable and \(\mathbf H_\tau\) is independent of \(\mathcal F_{\tau-}\), Lemma~\ref{lem:gaussian-perturbation-smin} gives
\[
\Prb\!\left\{
\sigma_q(\bM_\tau+\sigma\mathbf H_\tau)<\gamma
\,\middle|\,
\mathcal F_{\tau-}
\right\}
\le
\min\left\{
1,\,
C_{\rm ac}\sqrt q\,\frac{\gamma}{\sigma}
\right\}.
\]
Taking expectations and applying a union bound over \(\tau\in\mathcal J_{\rm sign}\) proves the first claim. For the displayed choice of \(\gamma_{\alpha,\sigma}\),
\[
|\mathcal J_{\rm sign}|
C_{\rm ac}\sqrt q\,\frac{\gamma_{\alpha,\sigma}}{\sigma}
=
\alpha,
\]
which proves the second claim.
\end{proof}

\medskip\noindent\textbf{Gaussian-smoothed exact sign.}
Fix a participation schedule
\[
\mathcal C=(\mathcal C_0,\ldots,\mathcal C_{R-1}),
\qquad
|\mathcal C_r|=K,
\]
and condition on any non-Gaussian data-independent algorithmic randomness for which Assumption~\ref{ass:sampling} holds with deterministic constants \(\Delta_i\). Run FedMuon with fresh Gaussian smoothing before exact sign:
\[
\bO_i^{r,k+1}
=
\msign\!\left(
\bM_i^{r,k+1}+\sigma\mathbf H_i^{r,k+1}
\right),
\qquad
i\in\mathcal C_r,\quad k=0,\ldots,E-1,
\]
where the matrices \(\mathbf H_i^{r,k+1}\) are independent \(N(0,1)^{d_1\times d_2}\), independent of the training sample and of the past. Neighboring runs use the same Gaussian perturbations.

Let
\[
T_{\rm sign}
:=
E\sum_{r=0}^{R-1}|\mathcal C_r|
=
REK
\]
be the number of matrix-sign calls for the single matrix block considered here. For \(\alpha\in(0,1)\), define
\[
\gamma_{\alpha,\sigma}
:=
\frac{\sigma\alpha}
{C_{\rm ac}\sqrt q\,T_{\rm sign}},
\qquad
L_{\alpha,\sigma}
:=
\frac{1}{\gamma_{\alpha,\sigma}}
=
\frac{C_{\rm ac}\sqrt q\,T_{\rm sign}}{\sigma\alpha},
\]
where \(C_{\rm ac}\) is the universal constant in Lemma~\ref{lem:gaussian-perturbation-smin}. Let \(\mathbf A_{\alpha,\sigma}\), \(\mathbf b_{\alpha,\sigma}\), \(a_s^{\alpha,\sigma}\), and \(Y_i^{\alpha,\sigma}(\mathcal C)\) be the quantities in \eqref{eq:A-matrix}--\eqref{eq:Yi-def} with \(L_{\Orth}=L_{\alpha,\sigma}\). Set
\[
\bar\varepsilon_i^{\alpha,\sigma}(\mathcal C)
:=
\min\left\{
B_\ell,\,
L_\ell\Delta_iY_i^{\alpha,\sigma}(\mathcal C)
\right\}.
\]
If the participation schedule is uniformly random as in Corollary~\ref{cor:hp-gen-random}, define \(\Psi_{R,E,K,N}^{\alpha,\sigma}(\rho)\) by replacing \(a_s\) with \(a_s^{\alpha,\sigma}\) in \eqref{eq:Psi-def}, and set
\[
\varepsilon_i^{\rm det,\alpha,\sigma}(\rho)
:=
\min\left\{
B_\ell,\,
L_\ell\Delta_i
\Psi_{R,E,K,N}^{\alpha,\sigma}(\rho)
\right\}.
\]

\begin{corollary}
\label{cor:gaussian-smoothed-exact-sign-gen}
Assume Assumptions~\ref{ass:data}, \ref{ass:randomness}, \ref{ass:loss}, and \ref{ass:sampling}, and the Gaussian-smoothed exact-sign setup above.
Then, for every \(\delta\in(0,1)\), with probability at least \(1-\delta-\alpha\) over the training sample and the Gaussian smoothing matrices,
\[
F(\bX_{\rm out})-\wh F_S(\bX_{\rm out})
\le
\mathcal B_\delta\!\left(
\bar\varepsilon_1^{\alpha,\sigma}(\mathcal C),
\ldots,
\bar\varepsilon_N^{\alpha,\sigma}(\mathcal C)
\right).
\]
If, in addition, Assumption~\ref{ass:grad} holds and local gradients are full-batch empirical gradients, one may take \(\Delta_i=2G_{\max}/n_i\).

Under the deterministic-amplification conditions of Corollary~\ref{cor:hp-gen-random}, with probability at least \(1-\delta-\alpha-\rho\) over the training sample, Gaussian smoothing matrices, and random participation schedule,
\[
F(\bX_{\rm out})-\wh F_S(\bX_{\rm out})
\le
\mathcal B_\delta\!\left(
\varepsilon_1^{\rm det,\alpha,\sigma}(\rho),
\ldots,
\varepsilon_N^{\rm det,\alpha,\sigma}(\rho)
\right).
\]
\end{corollary}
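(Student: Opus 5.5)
The plan is to reduce to Theorem~\ref{thm:hp-gen} via a coupling with the globally Lipschitz clipped sign of Lemma~\ref{lem:csgn-lipschitz}. Fix the schedule \(\mathcal C\) and the non-Gaussian randomness, and set \(\gamma=\gamma_{\alpha,\sigma}\). We introduce an auxiliary algorithm, the \emph{clipped run}, which is identical to the Gaussian-smoothed FedMuon except that at call \(\tau=(r,k,i)\) it uses
\[
\Orth_\tau(\bM):=\operatorname{csgn}_{\gamma}\bigl(\bM+\sigma\mathbf H_\tau\bigr).
\]
After conditioning on the Gaussian matrices \(\{\mathbf H_\tau\}\), each \(\Orth_\tau\) is a deterministic call-dependent map. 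Translation does not change the Lipschitz constant, so by Lemma~\ref{lem:csgn-lipschitz} each \(\Orth_\tau\) is globally \(1/\gamma=L_{\alpha,\sigma}\)-Lipschitz. Hence Condition~\ref{ass:polar} holds with \(\mathcal T_\tau=\R^{d_1\times d_2}\) and \(L_{\Orth}=L_{\alpha,\sigma}\), uniformly over all realizations of \(\mathbf H\).

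The constant \(L_{\alpha,\sigma}\) depends only on \(\sigma,\alpha,q,R,E,K\) and not on \(\mathbf H\) or on the data. Therefore \(\mathbf A_{\alpha,\sigma}\), \(\mathbf b_{\alpha,\sigma}\), \(a_s^{\alpha,\sigma}\), and \(Y_i^{\alpha,\sigma}(\mathcal C)\) are deterministic. Assumption~\ref{ass:randomness} holds because \(\mathbf H\) is independent of \(S\) and shared between neighboring runs. Applying Theorem~\ref{thm:hp-gen} conditionally on \(\mathbf H\), the clipped output \(\widetilde\bX^R\) satisfies the bound with \(\mathcal B_\delta(\bar\varepsilon^{\alpha,\sigma}(\mathcal C))\) with conditional probability at least \(1-\delta\) over \(S\). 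Integrating over \(\mathbf H\) gives joint probability at least \(1-\delta\).

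Next we show that the clipped and exact smoothed trajectories coincide with high probability. Run the exact smoothed algorithm on \(S\) with the shared \(\mathbf H\). Let \(\mathcal F_{\tau-}\) be generated by \(S\), the schedule, and the perturbations before call \(\tau\). The momentum \(\bM_\tau\) of the exact run is \(\mathcal F_{\tau-}\)-measurable, and \(\mathbf H_\tau\) is fresh. Lemma~\ref{lem:no-clipping-smoothed-sign} with \(|\mathcal J_{\rm sign}|=T_{\rm sign}=REK\) then gives an event \(\mathcal E\) of probability at least \(1-\alpha\) on which \(\sigma_q(\bM_\tau+\sigma\mathbf H_\tau)\ge\gamma\) at every call.

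On \(\mathcal E\) we argue by induction over calls. Both runs start from the same state. If their momenta agree at call \(\tau\), then by the last claim of Lemma~\ref{lem:csgn-lipschitz}, \(\operatorname{csgn}_\gamma\) and \(\msign\) agree at \(\bM_\tau+\sigma\mathbf H_\tau\), so both runs take the same step. Hence \(\bX^R=\widetilde\bX^R\) on \(\mathcal E\). A union bound over \(\mathcal E^c\) and the failure event of the clipped bound gives probability at least \(1-\delta-\alpha\).

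The main subtlety is that the no-clipping event is needed only along the \emph{realized} trajectory on \(S\), not on neighbors. Neighboring stability is supplied entirely by the globally Lipschitz clipped surrogate. The Gaussian event is never used inside the stability argument, which avoids conditioning \(S\) on a data-dependent event. The full-batch claim \(\Delta_i=2G_{\max}/n_i\) follows from Corollary~\ref{cor:hp-gen-fullbatch}.

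For random participation, \(L_{\alpha,\sigma}\) is schedule-independent because \(T_{\rm sign}=REK\) is fixed. So \(a_s^{\alpha,\sigma}\) are deterministic, and Lemma~\ref{lem:client-concentration} gives \(Y_i^{\alpha,\sigma}\le\Psi^{\alpha,\sigma}_{R,E,K,N}(\rho)\) for all \(i\) with probability \(1-\rho\). Monotonicity of \(\mathcal B_\delta\) in \(\varepsilon\) holds because both \(\mathcal H\) and \(\mathcal G\) are nondecreasing. A final union bound then gives probability at least \(1-\delta-\alpha-\rho\).
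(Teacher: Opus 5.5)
Your proposal is correct and follows essentially the same route as the paper. You use a clipped-sign surrogate \(\operatorname{csgn}_{\gamma_{\alpha,\sigma}}(\cdot+\sigma\mathbf H_\tau)\) to which Theorem~\ref{thm:hp-gen} applies with \(L_{\Orth}=L_{\alpha,\sigma}\), then a no-clipping event from Lemma~\ref{lem:no-clipping-smoothed-sign} on which the clipped and exact trajectories coincide by induction over calls, then union bounds (with the Bernstein participation event added for the random-schedule claim). The only difference is cosmetic: you define the no-clipping event along the exact smoothed trajectory, while the paper defines it along the clipped one, and by the same induction these two events are identical.
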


\begin{proof}[Proof of Corollary~\ref{cor:gaussian-smoothed-exact-sign-gen}]
Fix the participation schedule \(\mathcal C\) and condition on the non-Gaussian algorithmic randomness as in the statement. Let
\[
\gamma_{\alpha,\sigma}
=
\frac{\sigma\alpha}
{C_{\rm ac}\sqrt q\,T_{\rm sign}},
\qquad
L_{\alpha,\sigma}
=
\frac1{\gamma_{\alpha,\sigma}}.
\]

First consider the clipped smoothed-sign algorithm driven by the same Gaussian matrices:
\[
\bO_i^{r,k+1,{\rm clip}}
=
\operatorname{csgn}_{\gamma_{\alpha,\sigma}}
\!\left(
\bM_i^{r,k+1,{\rm clip}}+\sigma\mathbf H_i^{r,k+1}
\right).
\]
After conditioning on the Gaussian matrices, the call \(\tau=(r,k,i)\) uses the deterministic map
\[
\bM
\mapsto
\operatorname{csgn}_{\gamma_{\alpha,\sigma}}
(\bM+\sigma\mathbf H_i^{r,k+1}).
\]
By Lemma~\ref{lem:csgn-lipschitz}, every such map is globally \(L_{\alpha,\sigma}\)-Lipschitz. The proof of Theorem~\ref{thm:hp-gen} and the recursion in Appendix~\ref{app:stability-recursion} only require this common per-call Lipschitz constant. Therefore Theorem~\ref{thm:hp-gen} applies to the clipped algorithm with \(L_{\Orth}=L_{\alpha,\sigma}\). Hence, after integrating over the Gaussian matrices, with probability at least \(1-\delta\) over the training sample and Gaussian matrices,
\[
F(\bX_{\rm out}^{\rm clip})-\wh F_S(\bX_{\rm out}^{\rm clip})
\le
\mathcal B_\delta\!\left(
\bar\varepsilon_1^{\alpha,\sigma}(\mathcal C),
\ldots,
\bar\varepsilon_N^{\alpha,\sigma}(\mathcal C)
\right).
\]

Next define the no-clipping event along the realized clipped trajectory:
\[
\mathcal E_{\rm nc}
:=
\left\{
\sigma_q\!\left(
\bM_i^{r,k+1,{\rm clip}}+\sigma\mathbf H_i^{r,k+1}
\right)
\ge
\gamma_{\alpha,\sigma}
\quad
\text{for all }r,\ k,\ i\in\mathcal C_r
\right\}.
\]
Order the calls lexicographically, with any fixed tie-breaking among parallel clients. At call \((r,k,i)\), the pre-sign matrix \(\bM_i^{r,k+1,{\rm clip}}\) is measurable with respect to the past before the fresh perturbation \(\mathbf H_i^{r,k+1}\). Applying Lemma~\ref{lem:no-clipping-smoothed-sign} with
\[
|\mathcal J_{\rm sign}|=T_{\rm sign}
\]
gives
\[
\Prb(\mathcal E_{\rm nc})\ge1-\alpha.
\]

On \(\mathcal E_{\rm nc}\), Lemma~\ref{lem:csgn-lipschitz} gives, at every call,
\[
\operatorname{csgn}_{\gamma_{\alpha,\sigma}}
\!\left(
\bM_i^{r,k+1,{\rm clip}}+\sigma\mathbf H_i^{r,k+1}
\right)
=
\msign\!\left(
\bM_i^{r,k+1,{\rm clip}}+\sigma\mathbf H_i^{r,k+1}
\right).
\]
Compare the clipped trajectory with the pure Gaussian-smoothed exact-sign trajectory driven by the same Gaussian matrices. The initial states are identical. If the two trajectories agree before a call, then their pre-sign momenta at that call agree. On \(\mathcal E_{\rm nc}\), the clipped and exact smoothed-sign directions at that call agree, so the next model and momentum states also agree. Induction over the ordered calls gives
\[
\bX_{\rm out}^{\rm clip}=\bX_{\rm out}
\qquad
\text{on }\mathcal E_{\rm nc}.
\]
A union bound over these two events gives the schedule-conditional claim with probability at least \(1-\delta-\alpha\).

If, in addition, Assumption~\ref{ass:grad} holds and local gradients are full-batch empirical gradients, Lemma~\ref{lem:fullbatch-replacement} gives \(\Delta_i=2G_{\max}/n_i\). For random uniform participation, intersect the event \(\mathcal E_{\rm nc}\) and the clipped-algorithm generalization event with the Bernstein participation event from Corollary~\ref{cor:hp-gen-random}. This gives success probability at least \(1-\delta-\alpha-\rho\), with \(\Psi_{R,E,K,N}^{\alpha,\sigma}(\rho)\) computed using \(a_s^{\alpha,\sigma}\).
\end{proof}

\begin{lemma}
\label{lem:exact-sign-tube-certificate}
Fix a dataset \(S\), a participation schedule \(\mathcal C\), and a realization of all data-independent algorithmic randomness. Run FedMuon on \(S\) with \(\Orth=\msign\). Assume the pathwise replacement-sensitivity condition of Assumption~\ref{ass:sampling} holds for the coupled runs considered below.

For \(L>0\), define
\[
\mathbf A_L
:=
\begin{pmatrix}
1+\eta L(1-\beta)L_g & \eta L\beta\\
(1-\beta)L_g & \beta
\end{pmatrix},
\qquad
\mathbf b_L
:=
\begin{pmatrix}
\eta L(1-\beta)\\
1-\beta
\end{pmatrix}.
\]
Let \(\mathbf e_1=(1,0)^\top\) and \(\mathbf e_2=(0,1)^\top\). For each possible replaced client \(c\in[N]\), define deterministic tube radii \(\mathbf u_c^r(L)\in\R_+^2\) and \(\mathbf v_{c,i}^{r,k}(L)\in\R_+^2\) by
\[
\mathbf u_c^0(L)=0,
\]
\[
\mathbf v_{c,i}^{r,0}(L)=\mathbf u_c^r(L),
\qquad i\in\mathcal C_r,
\]
\[
\mathbf v_{c,i}^{r,k+1}(L)
=
\mathbf A_L\mathbf v_{c,i}^{r,k}(L)
+
\mathbf 1\{i=c\}\mathbf b_L,
\qquad
k=0,\ldots,E-1,
\]
and
\[
\mathbf u_c^{r+1}(L)
=
\frac1K\sum_{i\in\mathcal C_r}\mathbf v_{c,i}^{r,E}(L).
\]
Suppose that for some \(\gamma>0\), the reference trajectory satisfies
\begin{equation}
\sigma_q(\bM_i^{r,k+1}(S))\ge 2\gamma
\qquad
\text{for all }r,\ k,\ i\in\mathcal C_r,
\label{eq:reference-gap}
\end{equation}
and the deterministic radius condition satisfies
\begin{equation}
\Delta_c\,\mathbf e_2^\top \mathbf v_{c,i}^{r,k+1}(2/\gamma)
\le
\gamma
\qquad
\text{for all }c,\ r,\ k,\ i\in\mathcal C_r.
\label{eq:tube-radius-certificate}
\end{equation}
Then for every one-sample replacement \(S'\) of \(S\) in client \(c\), coupled with the same schedule and data-independent algorithmic randomness, every neighboring momentum at which exact sign is evaluated satisfies
\[
\sigma_q(\bM_i^{\prime r,k+1}(S'))\ge\gamma
\qquad
\text{for all }r,\ k,\ i\in\mathcal C_r.
\]
Consequently, at every exact-sign evaluation along the coupled runs,
\[
\norm{
\msign(\bM_i^{r,k+1}(S))
-
\msign(\bM_i^{\prime r,k+1}(S'))
}_F
\le
\frac{2}{\gamma}
\norm{
\bM_i^{r,k+1}(S)-\bM_i^{\prime r,k+1}(S')
}_F .
\]
Moreover, the stability recursion with \(L_{\Orth}=2/\gamma\) is valid for this neighboring pair, and
\[
\norm{\bX^R(S)-\bX^R(S')}_F
\le
\Delta_c\,\mathbf e_1^\top\mathbf u_c^R(2/\gamma).
\]
\end{lemma}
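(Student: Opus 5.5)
The argument is a single induction over the ordered exact-sign calls $(r,k,i)$, processed lexicographically. For a fixed replaced client $c$, the induction hypothesis at each call combines two claims.

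First, the deviation vector
\[
\mathbf d_i^{r,k}:=\bigl(\norm{\bX_i^{r,k}(S)-\bX_i^{\prime r,k}(S')}_F,\ \norm{\bM_i^{r,k}(S)-\bM_i^{\prime r,k}(S')}_F\bigr)^\top
\]
satisfies $\mathbf d_i^{r,k}\le\Delta_c\,\mathbf v_{c,i}^{r,k}(2/\gamma)$ entrywise. Similarly, the global deviation satisfies $\mathbf d^r\le\Delta_c\,\mathbf u_c^r(2/\gamma)$.

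Second, every neighboring momentum evaluated so far lies in $\mathcal T_\gamma$.

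The base case is immediate. The coupled runs share $(\bX^0,\bM^0)$, so $\mathbf d^0=0=\Delta_c\mathbf u_c^0$. Each local initialization copies the server state, which gives $\mathbf v_{c,i}^{r,0}=\mathbf u_c^r$.

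\textbf{Momentum step.} Assume the hypothesis holds for $\mathbf d_i^{r,k}$. Assumption~\ref{ass:sampling} gives
\[
\norm{\bM_i^{r,k+1}-\bM_i^{\prime r,k+1}}_F\le\beta\,[\mathbf d_i^{r,k}]_2+(1-\beta)\bigl(L_g[\mathbf d_i^{r,k}]_1+\Delta_c\mathbf 1\{i=c\}\bigr).
\]
This is exactly $\mathbf e_2^\top(\mathbf A_L\mathbf d_i^{r,k}+\mathbf 1\{i=c\}\Delta_c\mathbf b_L)$, because the second rows of $\mathbf A_L$ and $\mathbf b_L$ do not involve $L$. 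Since $\mathbf A_L$ and $\mathbf b_L$ are entrywise nonnegative (using $\beta\in[0,1]$), monotonicity and the induction hypothesis give
\[
\norm{\bM_i^{r,k+1}-\bM_i^{\prime r,k+1}}_F\le\Delta_c\,\mathbf e_2^\top\mathbf v_{c,i}^{r,k+1}(2/\gamma)\le\gamma,
\]
where the last inequality is the certificate \eqref{eq:tube-radius-certificate}.

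\textbf{Gap transfer.} Weyl's inequality for singular values, $|\sigma_q(\bM)-\sigma_q(\bM')|\le\norm{\bM-\bM'}_2\le\norm{\bM-\bM'}_F$, combined with the reference gap \eqref{eq:reference-gap}, gives $\sigma_q(\bM_i^{\prime r,k+1})\ge2\gamma-\gamma=\gamma$. So both matrices lie in $\mathcal T_\gamma$. Lemma~\ref{lem:exact-sign-gap} then yields the Lipschitz bound with constant $1/\gamma\le2/\gamma$, which is the displayed $2/\gamma$ inequality.

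\textbf{Model step.} The model update gives
\[
[\mathbf d_i^{r,k+1}]_1\le[\mathbf d_i^{r,k}]_1+\eta(2/\gamma)[\mathbf d_i^{r,k+1}]_2.
\]
Substituting the momentum bound expresses the right-hand side as the first row of $\mathbf A_{2/\gamma}\mathbf d_i^{r,k}+\mathbf 1\{i=c\}\Delta_c\mathbf b_{2/\gamma}$. This closes the local recursion: $\mathbf d_i^{r,k+1}\le\Delta_c\mathbf v_{c,i}^{r,k+1}(2/\gamma)$.

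\textbf{Server step.} Server averaging of both $\bX$ and $\bM$, together with the triangle inequality, gives $\mathbf d^{r+1}\le\frac1K\sum_{i\in\mathcal C_r}\mathbf d_i^{r,E}\le\Delta_c\mathbf u_c^{r+1}(2/\gamma)$. This completes the induction. Taking $r=R$ and the first coordinate yields the final bound on $\norm{\bX^R(S)-\bX^R(S')}_F$. The recursion is identical to the one underlying Theorem~\ref{thm:hp-gen} with $L_{\Orth}=2/\gamma$, which shows that this recursion is valid for the pair.

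\textbf{Main obstacle.} The key point is ordering. The tube membership at call $(r,k,i)$ is needed before the Lipschitz step at that same call. It in turn depends only on the momentum deviation, and that deviation is bounded without using the Lipschitz constant at the current call. The induction hypothesis must therefore be phrased so that the momentum bound at step $k+1$ uses only Lipschitz information from earlier calls. Once this ordering is respected, the argument is not circular, and the certificate is a deterministic, data-independent quantity indexed by $c$.
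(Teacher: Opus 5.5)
Your proposal is correct and follows the paper's proof essentially step for step. Like the paper, you induct over rounds and local steps, bound the momentum deviation by $\Delta_c\,\mathbf e_2^\top\mathbf v_{c,i}^{r,k+1}$ without using any Lipschitz property at the current call, transfer the spectral gap through Weyl's inequality and the radius certificate, apply Lemma~\ref{lem:exact-sign-gap} in the model step, and close the round with server averaging. Your extra tube-membership clause in the induction hypothesis and the lexicographic ordering of calls are harmless bookkeeping that the paper leaves implicit.
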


\begin{proof}
Fix a neighboring dataset \(S'\) differing from \(S\) in one sample of client \(c\), and set \(L=2/\gamma\). We prove by induction over rounds \(r\) and local steps \(k\) that
\begin{equation}
\begin{pmatrix}
\norm{\bX_i^{r,k}(S)-\bX_i^{\prime r,k}(S')}_F\\
\norm{\bM_i^{r,k}(S)-\bM_i^{\prime r,k}(S')}_F
\end{pmatrix}
\le
\Delta_c\,\mathbf v_{c,i}^{r,k}(L)
\qquad
(i\in\mathcal C_r),
\label{eq:local-induction-bound}
\end{equation}
and that the server states satisfy
\begin{equation}
\begin{pmatrix}
\norm{\bX^{r}(S)-\bX^{\prime r}(S')}_F\\
\norm{\bM^{r}(S)-\bM^{\prime r}(S')}_F
\end{pmatrix}
\le
\Delta_c\,\mathbf u_c^{r}(L).
\label{eq:server-induction-bound}
\end{equation}
The base case holds because the two runs are initialized identically.

Assume \eqref{eq:server-induction-bound} holds at the beginning of round \(r\). Since each selected client initializes from the server state, \(\mathbf v_{c,i}^{r,0}(L)=\mathbf u_c^r(L)\), so \eqref{eq:local-induction-bound} holds at \(k=0\).

Now assume \eqref{eq:local-induction-bound} holds at local step \(k\) for selected client \(i\). The momentum update is performed before the sign/model update. By the pathwise gradient sensitivity assumption,
\begin{align*}
\norm{\bM_i^{r,k+1}(S)-\bM_i^{\prime r,k+1}(S')}_F
&\le
\beta
\norm{\bM_i^{r,k}(S)-\bM_i^{\prime r,k}(S')}_F\\
&\quad+
(1-\beta)L_g
\norm{\bX_i^{r,k}(S)-\bX_i^{\prime r,k}(S')}_F\\
&\quad+
(1-\beta)\Delta_c\mathbf 1\{i=c\}.
\end{align*}
Using the induction hypothesis, this gives
\[
\norm{\bM_i^{r,k+1}(S)-\bM_i^{\prime r,k+1}(S')}_F
\le
\Delta_c\,\mathbf e_2^\top\mathbf v_{c,i}^{r,k+1}(L).
\]
This step does not use any Lipschitz property of \(\msign\). By the radius condition \eqref{eq:tube-radius-certificate},
\[
\norm{\bM_i^{r,k+1}(S)-\bM_i^{\prime r,k+1}(S')}_F\le\gamma.
\]
Since the spectral norm is bounded by the Frobenius norm, Weyl's inequality and the reference gap \eqref{eq:reference-gap} imply
\[
\sigma_q(\bM_i^{\prime r,k+1}(S'))
\ge
\sigma_q(\bM_i^{r,k+1}(S))
-
\norm{\bM_i^{r,k+1}(S)-\bM_i^{\prime r,k+1}(S')}_2
\ge
2\gamma-\gamma
=
\gamma.
\]
Thus both \(\bM_i^{r,k+1}(S)\) and \(\bM_i^{\prime r,k+1}(S')\) belong to \(\mathcal T_\gamma\). Lemma~\ref{lem:exact-sign-gap} can now be applied, so the exact-sign Lipschitz bound is valid at this step:
\[
\norm{
\msign(\bM_i^{r,k+1}(S))
-
\msign(\bM_i^{\prime r,k+1}(S'))
}_F
\le
L
\norm{\bM_i^{r,k+1}(S)-\bM_i^{\prime r,k+1}(S')}_F .
\]
The model update therefore satisfies
\begin{align*}
\norm{\bX_i^{r,k+1}(S)-\bX_i^{\prime r,k+1}(S')}_F
&\le
\norm{\bX_i^{r,k}(S)-\bX_i^{\prime r,k}(S')}_F\\
&\quad+
\eta L
\norm{\bM_i^{r,k+1}(S)-\bM_i^{\prime r,k+1}(S')}_F\\
&\le
\Delta_c\,\mathbf e_1^\top\mathbf v_{c,i}^{r,k+1}(L).
\end{align*}
Together with the momentum bound, this proves \eqref{eq:local-induction-bound} at step \(k+1\).

After \(E\) local steps, the server averages the selected local states. Convexity of the Frobenius norm gives
\[
\norm{\bX^{r+1}(S)-\bX^{\prime r+1}(S')}_F
\le
\frac1K\sum_{i\in\mathcal C_r}
\norm{\bX_i^{r,E}(S)-\bX_i^{\prime r,E}(S')}_F,
\]
and the same argument applies to the momentum state. Hence
\[
\begin{pmatrix}
\norm{\bX^{r+1}(S)-\bX^{\prime r+1}(S')}_F\\
\norm{\bM^{r+1}(S)-\bM^{\prime r+1}(S')}_F
\end{pmatrix}
\le
\frac1K\sum_{i\in\mathcal C_r}
\Delta_c\,\mathbf v_{c,i}^{r,E}(L)
=
\Delta_c\,\mathbf u_c^{r+1}(L).
\]
This closes the induction over rounds. Taking the first component at \(r=R\) gives the final model perturbation bound. The proof establishes the neighboring spectral gap before each use of the exact-sign Lipschitz inequality, so the argument is non-circular.
\end{proof}

\begin{remark}
Lemma~\ref{lem:exact-sign-tube-certificate} is deterministic for a fixed reference dataset, participation schedule, and data-independent algorithmic-randomness realization. It certifies stability of that realized run against all one-sample replacements of the reference dataset. To invoke Theorem~\ref{thm:hp-gen} with exact sign, the tube condition must hold uniformly over the dataset class under consideration, with common constants \(\gamma\) and \(\Delta_i\), or on a data-independent event included in the final failure probability.
\end{remark}

\begin{remark}
The tube \(\mathcal T_\gamma\) requires \(\sigma_q(\bM)\ge\gamma\), where \(q=\min\{d_1,d_2\}\). Thus the exact-sign condition covers full-column-rank tall matrices and full-row-rank wide matrices. The stated exact-sign specialization uses the full-rank tube; fixed-rank variants would require a positive rank-\(r\) singular-value gap along all coupled neighboring trajectories.
\end{remark}

\begin{remark}
The reference-trajectory condition \(\sigma_q(\bM_i^{r,k+1}(S))\ge2\gamma\) is paired with the radius condition \eqref{eq:tube-radius-certificate}. Together they ensure that every neighboring momentum remains in \(\mathcal T_\gamma\), where the exact-sign Lipschitz constant applies in the stability recursion.
\end{remark}

\section{Full-Batch Replacement Sensitivity}
\label{app:fullbatch-replacement}

\begin{lemma}
\label{lem:fullbatch-replacement}
Assume Assumption~\ref{ass:grad}. For client \(i\), set
\[
\bG_{i,S}(\bX)=\frac1{n_i}\sum_{j=1}^{n_i}\nabla\ell(\bX,z_{i,j}).
\]
If \(S,S'\) differ only in one sample of client \(c\), then for every client \(i\) and all \(\bX,\bY\),
\[
\norm{\bG_{i,S}(\bX)-\bG_{i,S'}(\bY)}_F
\le
L_g\norm{\bX-\bY}_F+\frac{2G_{\max}}{n_c}\mathbf1\{i=c\}.
\]
\end{lemma}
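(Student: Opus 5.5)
The plan is to split the full-batch gradient difference into a parameter-change part and a sample-change part. Fix a client \(i\) and points \(\bX,\bY\), and insert the intermediate term \(\bG_{i,S}(\bY)\):
\[
\bG_{i,S}(\bX)-\bG_{i,S'}(\bY)
=
\bigl(\bG_{i,S}(\bX)-\bG_{i,S}(\bY)\bigr)
+
\bigl(\bG_{i,S}(\bY)-\bG_{i,S'}(\bY)\bigr).
\]
The triangle inequality then reduces the claim to bounding each bracket separately in Frobenius norm.

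For the first bracket, both terms use the same dataset \(S\). Writing it as \(n_i^{-1}\sum_j(\nabla\ell(\bX,z_{i,j})-\nabla\ell(\bY,z_{i,j}))\), convexity of the norm and the gradient-Lipschitz part of Assumption~\ref{ass:grad} give the bound \(L_g\norm{\bX-\bY}_F\). This bound does not depend on \(i\) or on which sample was replaced.

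For the second bracket, the two terms are evaluated at the same point \(\bY\) on neighboring datasets. If \(i\neq c\), client \(i\)'s local sample set is identical under \(S\) and \(S'\), so the difference vanishes. This accounts for the indicator \(\mathbf 1\{i=c\}\). If \(i=c\), all summands cancel except the replaced index \(j_0\), which leaves \(n_c^{-1}(\nabla\ell(\bY,z_{c,j_0})-\nabla\ell(\bY,z'_{c,j_0}))\). Bounding each gradient by \(G_{\max}\) via Assumption~\ref{ass:grad} gives \(2G_{\max}/n_c\). Adding the two brackets yields the claim, which is exactly Assumption~\ref{ass:sampling} with \(\Delta_c=2G_{\max}/n_c\), as used in Corollary~\ref{cor:hp-gen-fullbatch}.

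There is no real obstacle here. The only point needing care is to perform the cancellation at a common point \(\bY\), so that the Lipschitz term and the replacement term decouple, rather than comparing \(\bX\) against \(\bY\) with different datasets at the same time.
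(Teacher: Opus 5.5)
Your proof is correct and uses the same two ingredients as the paper: the gradient-Lipschitz bound and the uniform bound \(G_{\max}\) from Assumption~\ref{ass:grad}. The paper splits by sample index instead of inserting \(\bG_{i,S}(\bY)\). It bounds the unchanged terms by \(\frac{n_c-1}{n_c}L_g\norm{\bX-\bY}_F\) and the mixed term \(\nabla\ell(\bX,z_{c,j_0})-\nabla\ell(\bY,z'_{c,j_0})\) directly by \(2G_{\max}\), so the common-point cancellation you single out as the delicate step is convenient but not actually needed.
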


\begin{proof}
If \(i\ne c\), the two empirical gradients use the same samples, and Assumption~\ref{ass:grad} gives
\[
\norm{\bG_{i,S}(\bX)-\bG_{i,S'}(\bY)}_F
\le
\frac1{n_i}\sum_{j=1}^{n_i}
\norm{\nabla\ell(\bX,z_{i,j})-\nabla\ell(\bY,z_{i,j})}_F
\le
L_g\norm{\bX-\bY}_F .
\]
For \(i=c\), let \(j_0\) be the replaced index. Then
\begin{align*}
\bG_{c,S}(\bX)-\bG_{c,S'}(\bY)
&=
\frac1{n_c}\sum_{j\ne j_0}
\left(
\nabla\ell(\bX,z_{c,j})-\nabla\ell(\bY,z_{c,j})
\right)\\
&\quad+
\frac1{n_c}
\left(
\nabla\ell(\bX,z_{c,j_0})-\nabla\ell(\bY,z'_{c,j_0})
\right).
\end{align*}
The unchanged terms contribute at most
\[
\frac{n_c-1}{n_c}L_g\norm{\bX-\bY}_F
\le
L_g\norm{\bX-\bY}_F.
\]
The changed term contributes at most \(2G_{\max}/n_c\) by the bounded-gradient part of Assumption~\ref{ass:grad}. Hence
\[
\norm{\bG_{c,S}(\bX)-\bG_{c,S'}(\bY)}_F
\le
L_g\norm{\bX-\bY}_F+\frac{2G_{\max}}{n_c}.
\]
\end{proof}

\section{Neighboring-Dataset Stability Recursion}
\label{app:stability-recursion}

Fix a client \(a_0\in[N]\) and sample \(j_0\in[n_{a_0}]\). Let \(S,S'\) be neighboring datasets differing only in \(z_{a_0,j_0}\). Run FedMuon on both datasets with the same client-participation schedule and the same remaining data-independent algorithmic randomness. Let
\[
d_X^{r}:=\norm{\bX^r-\bX^{\prime r}}_F,
\qquad
d_M^{r}:=\norm{\bM^r-\bM^{\prime r}}_F,
\qquad
\mathbf u^r:=
\begin{pmatrix}
d_X^r\\
d_M^r
\end{pmatrix}.
\]
For a selected client \(i\), define local differences
\[
d_{X,i}^{r,k}:=\norm{\bX_i^{r,k}-\bX_i^{\prime r,k}}_F,
\qquad
d_{M,i}^{r,k}:=\norm{\bM_i^{r,k}-\bM_i^{\prime r,k}}_F.
\]
At local initialization,
\[
d_{X,i}^{r,0}=d_X^r,\qquad d_{M,i}^{r,0}=d_M^r.
\]

By Assumption~\ref{ass:sampling}, for the neighboring client \(a_0\),
\[
\norm{\bG_{a_0,S}^{r,k}(\bX)-\bG_{a_0,S'}^{r,k}(\bY)}_F
\le
L_g\norm{\bX-\bY}_F+\Delta_{a_0},
\]
while for \(i\ne a_0\) the last term is absent.
Using the momentum update,
\[
d_{M,i}^{r,k+1}
\le
\beta d_{M,i}^{r,k}
+
(1-\beta)L_gd_{X,i}^{r,k}
+
(1-\beta)\Delta_{a_0}\mathbf1\{i=a_0\}.
\]
Using Condition~\ref{ass:polar} at the current orthogonalization call \(\tau=(r,k,i)\),
\[
d_{X,i}^{r,k+1}
\le
d_{X,i}^{r,k}
+
\eta L_{\Orth}d_{M,i}^{r,k+1}.
\]
Combining the last two inequalities gives the vector recursion
\[
\begin{pmatrix}
d_{X,i}^{r,k+1}\\
d_{M,i}^{r,k+1}
\end{pmatrix}
\le
\mathbf A
\begin{pmatrix}
d_{X,i}^{r,k}\\
d_{M,i}^{r,k}
\end{pmatrix}
+
\Delta_{a_0}\mathbf1\{i=a_0\}\mathbf b,
\]
where \(\mathbf A\) and \(\mathbf b\) are exactly \eqref{eq:A-matrix}--\eqref{eq:b-vector}. Iterating for \(E\) local steps,
\[
\begin{pmatrix}
d_{X,i}^{r,E}\\
d_{M,i}^{r,E}
\end{pmatrix}
\le
\mathbf A^E \mathbf u^r
+
\Delta_{a_0}\mathbf1\{i=a_0\}
\left(\sum_{k=0}^{E-1}\mathbf A^k\right)\mathbf b.
\]
Server averaging and convexity of the norm imply
\begin{align*}
\mathbf u^{r+1}
&\le
\frac1K\sum_{i\in\mathcal C_r}
\begin{pmatrix}
d_{X,i}^{r,E}\\
d_{M,i}^{r,E}
\end{pmatrix}\\
&\le
\mathbf A^E \mathbf u^r
+
\frac{\mathbf1\{a_0\in\mathcal C_r\}}{K}
\Delta_{a_0}
\left(\sum_{k=0}^{E-1}\mathbf A^k\right)\mathbf b.
\end{align*}
Since \(\mathbf u^0=0\), unrolling gives
\[
\mathbf u^R
\le
\frac{\Delta_{a_0}}{K}
\sum_{s=0}^{R-1}
\mathbf1\{a_0\in\mathcal C_s\}
\mathbf A^{E(R-1-s)}
\left(\sum_{k=0}^{E-1}\mathbf A^k\right)\mathbf b.
\]
Taking the first component yields
\[
d_X^R
\le
\Delta_{a_0}
\frac1K
\sum_{s=0}^{R-1}
a_s\mathbf1\{a_0\in\mathcal C_s\},
\]
where \(a_s\) is defined in \eqref{eq:as-def}. Equivalently, with \(Y_i(\mathcal C)\) defined in \eqref{eq:Yi-def},
\[
d_X^R
\le
\Delta_iY_i(\mathcal C)
\]
whenever \(S,S'\) differ in one sample of client \(i\). Assumption~\ref{ass:loss} then gives the stability coefficient
\[
\varepsilon_i(\mathcal C)
=
L_\ell\Delta_iY_i(\mathcal C).
\]
Since \(0\le\ell\le B_\ell\), this coefficient may be truncated to
\[
\bar\varepsilon_i(\mathcal C)
=
\min\{B_\ell,L_\ell\Delta_iY_i(\mathcal C)\}.
\]

\section{Main One-Sided Generalization Bound}
\label{app:proof-main}

\begin{proof}[Proof of Theorem~\ref{thm:hp-gen}]
Condition on a participation schedule \(\mathcal C\) and, when applicable, on a realization \(\zeta\) of the remaining data-independent algorithmic randomness such that Assumption~\ref{ass:sampling} and Condition~\ref{ass:polar} hold pathwise. Under this conditioning the call-dependent orthogonalization maps are deterministic and the algorithm is deterministic as a function of the training sample. By the stability recursion in Appendix~\ref{app:stability-recursion}, replacing one sample of client \(i\) changes the final model by at most \(\Delta_iY_i(\mathcal C)\). Assumption~\ref{ass:loss} gives loss stability
\[
\bar\varepsilon_i(\mathcal C)
=
\min\{B_\ell,L_\ell\Delta_iY_i(\mathcal C)\}.
\]
Applying Theorem~\ref{thm:weighted-sharp-stability} with failure probability \(\delta/2\) gives the branch \(\mathcal H_{\delta/2}\). Applying Lemma~\ref{lem:expected-stability} and Lemma~\ref{lem:weighted-mcdiarmid} with failure probability \(\delta/2\) gives the branch \(\mathcal G_{\delta/2}\). A union bound implies that both branches hold simultaneously with probability at least \(1-\delta\). Taking the minimum gives \(\mathcal B_\delta\), proving the claim. If the schedule and algorithmic randomness are random but independent of the training sample, integrating the conditional statement over them gives the same probability statement with the right-hand side evaluated at the realized schedule and randomness.
\end{proof}

\section{Full-Batch Specialization}
\label{app:proof-fullbatch-no-sampling}

\begin{proof}[Proof of Corollary~\ref{cor:hp-gen-fullbatch}]
For full-batch empirical gradients, Lemma~\ref{lem:fullbatch-replacement} verifies the replacement sensitivity used in the stability recursion with
\[
\Delta_i=\frac{2G_{\max}}{n_i}.
\]
Thus Appendix~\ref{app:stability-recursion} gives the schedule-conditional stability coefficients
\[
\bar\varepsilon_i^{\rm fb}(\mathcal C)
=
\min\left\{
B_\ell,\,
\frac{2G_{\max} L_\ell}{n_i}Y_i(\mathcal C)
\right\}.
\]
Applying Theorem~\ref{thm:hp-gen} with these coefficients gives Corollary~\ref{cor:hp-gen-fullbatch}.
\end{proof}

\section{Random Participation}
\label{app:proof-random-corollary}

\begin{proof}[Proof of Corollary~\ref{cor:hp-gen-random}]
Under the deterministic-amplification assumption, the weights \(a_s\) are fixed before the participation schedule is sampled. Lemma~\ref{lem:client-concentration} therefore gives, with probability at least \(1-\rho\) over the random participation schedule,
\[
Y_i(\mathcal C)\le \Psi_{R,E,K,N}(\rho)
\qquad\text{for all }i\in[N].
\]
On this event,
\[
\bar\varepsilon_i(\mathcal C)
\le
\varepsilon_i^{\rm det}(\rho)
:=
\min\{B_\ell,L_\ell\Delta_i\Psi_{R,E,K,N}(\rho)\}.
\]
Both \(\mathcal H_\delta\) and \(\mathcal G_\delta\) are coordinatewise nondecreasing in the stability coefficients, hence so is \(\mathcal B_\delta\). Therefore \(\bar\varepsilon_i(\mathcal C)\le\varepsilon_i^{\rm det}(\rho)\) may be substituted in the conditional bound.
Applying Theorem~\ref{thm:hp-gen} conditionally on the schedule with failure probability \(\delta\), and then taking a union bound with the participation event, proves \eqref{eq:main-bound-random} with probability at least \(1-\rho-\delta\). If, in addition, Assumption~\ref{ass:grad} holds and local gradients are full-batch empirical gradients, Lemma~\ref{lem:fullbatch-replacement} gives \(\Delta_i=2G_{\max}/n_i\).
\end{proof}

\section{Mini-Batch Sampling}
\label{app:minibatch}

Under Assumption~\ref{ass:grad}, for a full-batch local empirical gradient,
\[
\bG_i(\bX)=\frac1{n_i}\sum_{j=1}^{n_i}\nabla\ell(\bX,z_{i,j}),
\]
neighboring datasets differing in one sample of client \(i\) satisfy
\[
\norm{\bG_{i,S}(\bX)-\bG_{i,S'}(\bY)}_F
\le
L_g\norm{\bX-\bY}_F+\frac{2G_{\max}}{n_i}.
\]
Thus Assumption~\ref{ass:sampling} holds with \(\Delta_i=2G_{\max}/n_i\).

For a mini-batch \(\mathcal B\) of size \(b\) sampled uniformly without replacement,
\[
\bG_i(\bX;\mathcal B)
=
\frac1b\sum_{j\in\mathcal B}\nabla\ell(\bX,z_{i,j}).
\]
If \(S,S'\) differ only in \(z_{i,j_0}\), then
\[
\norm{\bG_{i,S}(\bX;\mathcal B)-\bG_{i,S'}(\bY;\mathcal B)}_F
\le
L_g\norm{\bX-\bY}_F+\frac{2G_{\max}}{b}\mathbf1\{j_0\in\mathcal B\}.
\]
Taking conditional expectation over \(\mathcal B\) gives
\[
\E_{\mathcal B}\left[\frac{2G_{\max}}{b}\mathbf1\{j_0\in\mathcal B\}\right]
=
\frac{2G_{\max}}{n_i}.
\]
Theorem~\ref{thm:hp-gen} uses pathwise sensitivity; the result below therefore conditions on realized mini-batch exposure.

For the realized-exposure statement, define, for \(s=0,\ldots,R-1\) and \(k=0,\ldots,E-1\), with \(\mathbf e_1=(1,0)^\top\),
\[
a_{s,k}
:=
\mathbf e_1^\top
\mathbf A^{E(R-1-s)+E-1-k}\mathbf b,
\qquad
a_s=\sum_{k=0}^{E-1}a_{s,k}.
\]
For a fixed schedule \(\mathcal C\) and fixed mini-batch index realization \(\mathcal B\), define
\[
Y_i^{\rm mb}(\mathcal C,\mathcal B)
:=
\max_{j\in[n_i]}
\frac1K
\sum_{s=0}^{R-1}
\mathbf1\{i\in\mathcal C_s\}
\sum_{k=0}^{E-1}
a_{s,k}\,
\frac1b\mathbf1\{j\in\mathcal B_i^{s,k}\}.
\]

\begin{theorem}[Mini-batch high-probability generalization via realized exposure]
\label{thm:minibatch-realized-exposure}
Assume Assumptions~\ref{ass:data}, \ref{ass:randomness}, \ref{ass:loss}, and \ref{ass:grad}, together with Condition~\ref{ass:polar}. Suppose each selected client uses mini-batches \(\mathcal B_i^{s,k}\subset[n_i]\) of size \(b\), sampled by data-independent index randomness and coupled identically in neighboring runs. With \(a_{s,k}\) and \(Y_i^{\rm mb}(\mathcal C,\mathcal B)\) defined above, set
\[
\varepsilon_i^{\rm mb}(\mathcal C,\mathcal B)
:=
\min\{B_\ell,\,2G_{\max} L_\ell Y_i^{\rm mb}(\mathcal C,\mathcal B)\}.
\]
Then, conditional on \((\mathcal C,\mathcal B)\), with probability at least \(1-\delta\) over the training sample,
\[
F(\bX_{\rm out})-\wh F_S(\bX_{\rm out})
\le
\mathcal B_\delta\bigl(
\varepsilon_1^{\rm mb}(\mathcal C,\mathcal B),\ldots,
\varepsilon_N^{\rm mb}(\mathcal C,\mathcal B)
\bigr).
\]
\end{theorem}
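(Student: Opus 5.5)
Conditional on the participation schedule \(\mathcal C\) and the mini-batch index realization \(\mathcal B\), the algorithm is a deterministic function of the training sample. Assumptions~\ref{ass:data} and \ref{ass:randomness} keep the sample law unchanged under this conditioning. The plan is to show that, under this conditioning, replacing one sample of client \(i\) changes the final model by at most \(2G_{\max}Y_i^{\rm mb}(\mathcal C,\mathcal B)\). Assumption~\ref{ass:loss} then gives loss stability \(\varepsilon_i^{\rm mb}\), truncated at \(B_\ell\). Finally I would reuse verbatim the two concentration branches from the proof of Theorem~\ref{thm:hp-gen}: Theorem~\ref{thm:weighted-sharp-stability} at level \(\delta/2\), together with Lemmas~\ref{lem:expected-stability} and \ref{lem:weighted-mcdiarmid} at level \(\delta/2\), followed by a union bound and taking the minimum. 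That yields \(\mathcal B_\delta\).

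The stability recursion mirrors Appendix~\ref{app:stability-recursion}, except that the sensitivity term now depends on the call. Fix \(S,S'\) differing in \(z_{i_0,j_0}\). For the mini-batch gradient, the bound already displayed in Appendix~\ref{app:minibatch} gives
\[
\norm{\bG_{u,S}(\bX;\mathcal B_u^{r,k})-\bG_{u,S'}(\bY;\mathcal B_u^{r,k})}_F\le L_g\norm{\bX-\bY}_F+\frac{2G_{\max}}{b}\mathbf 1\{u=i_0,\ j_0\in\mathcal B_{i_0}^{r,k}\}.
\]
Using this together with Condition~\ref{ass:polar}, the local recursion becomes
\[
\mathbf v^{r,k+1}\le\mathbf A\mathbf v^{r,k}+\mathbf 1\{i=i_0\}\,\frac{2G_{\max}}{b}\mathbf 1\{j_0\in\mathcal B_{i_0}^{r,k}\}\,\mathbf b .
\]
Unrolling over \(E\) local steps gives
\[
\mathbf A^E\mathbf u^r+\mathbf 1\{i=i_0\}\frac{2G_{\max}}{b}\sum_{k}\mathbf 1\{j_0\in\mathcal B^{r,k}\}\mathbf A^{E-1-k}\mathbf b.
\]
Server averaging and convexity of the norm contribute the factor \(\mathbf 1\{i_0\in\mathcal C_r\}/K\), exactly as before. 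Unrolling over rounds from \(\mathbf u^0=0\) and taking the first coordinate produces the weights \(\mathbf e_1^\top\mathbf A^{E(R-1-s)}\mathbf A^{E-1-k}\mathbf b=a_{s,k}\). Nonnegativity of \(\mathbf A,\mathbf b\) keeps the entrywise inequalities valid. The result is
\[
d_X^R\le 2G_{\max}\frac1K\sum_s\mathbf 1\{i_0\in\mathcal C_s\}\sum_k a_{s,k}\frac1b\mathbf 1\{j_0\in\mathcal B_{i_0}^{s,k}\}\le 2G_{\max}Y_i^{\rm mb}(\mathcal C,\mathcal B),
\]
where the last step is the maximum over \(j_0\).

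The main obstacle is that the sensitivity is now index-dependent, so the maximum over \(j\) is needed. Without it, the coefficient would not be uniform over all \(i\)-neighbors, which is what the weighted stability definition, Theorem~\ref{thm:weighted-sharp-stability}, and Lemma~\ref{lem:weighted-mcdiarmid} require. I also need to check that Assumption~\ref{ass:sampling} itself is never invoked; only the refined per-call sensitivity is used. Since \(Y_i^{\rm mb}\) depends only on \((\mathcal C,\mathcal B)\), which is data-independent by Assumption~\ref{ass:randomness}, the coefficients are deterministic after conditioning, and the concentration step applies unchanged.
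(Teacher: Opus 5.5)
Your proposal is correct and matches the paper's proof step for step. You inject the per-call perturbation $(2G_{\max}/b)\mathbf 1\{j_0\in\mathcal B_i^{s,k}\}$ into the two-state recursion, which produces the weights $a_{s,k}$; you take the maximum over $j_0$ to get a coefficient that is uniform over all client-$i$ neighbors; and you reuse the $\mathcal H_{\delta/2}$/$\mathcal G_{\delta/2}$ union bound from Theorem~\ref{thm:hp-gen}. Your remarks that Assumption~\ref{ass:sampling} is never needed and that nonnegativity of $\mathbf A,\mathbf b$ keeps the entrywise inequalities valid simply make explicit what the paper leaves implicit.
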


\begin{proof}
If \(S,S'\) differ only in \(z_{i,j_0}\), then for every mini-batch \(\mathcal B_i^{s,k}\),
\[
\norm{
\bG_{i,S}(\bX;\mathcal B_i^{s,k})
-
\bG_{i,S'}(\bY;\mathcal B_i^{s,k})
}_F
\le
L_g\norm{\bX-\bY}_F
+
\frac{2G_{\max}}{b}\mathbf1\{j_0\in\mathcal B_i^{s,k}\}.
\]
Repeating the neighboring-dataset recursion with the additive perturbation \((2G_{\max}/b)\mathbf1\{j_0\in\mathcal B_i^{s,k}\}\) injected at local step \(k\) of round \(s\) gives
\[
d_X^R
\le
2G_{\max}\,
\frac1K
\sum_{s=0}^{R-1}
\mathbf1\{i\in\mathcal C_s\}
\sum_{k=0}^{E-1}
a_{s,k}\frac1b
\mathbf1\{j_0\in\mathcal B_i^{s,k}\}.
\]
Taking the maximum over \(j_0\in[n_i]\) gives the stability coefficient \(2G_{\max} L_\ell Y_i^{\rm mb}(\mathcal C,\mathcal B)\), truncated by \(B_\ell\). Theorem~\ref{thm:weighted-sharp-stability} and Lemma~\ref{lem:weighted-mcdiarmid}, combined as in the proof of Theorem~\ref{thm:hp-gen}, complete the proof.
\end{proof}

\section{Proofs of Corollaries}
\label{app:corollary-proofs}

\begin{proposition}
\label{prop:amplification-radius}
Let \(c:=\eta L_{\Orth}(1-\beta)L_g\).
For \(\beta\in[0,1]\), the spectral radius of \(\mathbf A\) is
\[
\rho(\mathbf A)
=
\frac{
1+\beta+c+
\sqrt{(1-\beta+c)^2+4\beta c}
}{2}.
\]
Moreover \(\rho(\mathbf A)\le 1+\eta L_{\Orth}L_g\). If \(\eta L_{\Orth}L_g>0\) and \(\beta<1\), then generally \(\rho(\mathbf A)>1\). Horizon-independent constant amplification therefore requires an additional finite-horizon verification.
\end{proposition}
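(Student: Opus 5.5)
The plan is a direct computation with the $2\times2$ characteristic polynomial, using the shorthand $\kappa:=\eta L_{\Orth}L_g\ge0$, so that $c=(1-\beta)\kappa$.

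First I compute the trace and determinant of $\mathbf A$ in \eqref{eq:A-matrix}. The trace is $T:=1+\beta+c$. The determinant is
\[
(1+c)\beta-\eta L_{\Orth}\beta(1-\beta)L_g=\beta+c\beta-c\beta=\beta .
\]
Hence the eigenvalues are the roots of $p(x):=x^2-Tx+\beta$, namely $\lambda_\pm=\tfrac12\bigl(T\pm\sqrt{T^2-4\beta}\bigr)$. Next I rewrite the discriminant. Since
\[
(1+\beta+c)^2-(1-\beta+c)^2=(2+2c)(2\beta)=4\beta+4\beta c ,
\]
we get
\[
T^2-4\beta=(1-\beta+c)^2+4\beta c\ge0 .
\]
So both eigenvalues are real. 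Because $\mathbf A$ is entrywise nonnegative for $\beta\in[0,1]$, and $\lambda_+\ge|\lambda_-|$ since $T\ge0$, the spectral radius is $\lambda_+$. This gives the displayed formula.

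For the upper bound, I show that $1+\kappa$ lies to the right of the larger root. It suffices to check two facts: $p(1+\kappa)\ge0$ and $1+\kappa\ge T/2$. The second is immediate, since $T/2=\bigl(1+\beta+(1-\beta)\kappa\bigr)/2\le1+\kappa$. For the first, expand:
\[
p(1+\kappa)=(1+\kappa)(1+\kappa-T)+\beta=(1+\kappa)\,\beta(\kappa-1)+\beta=\beta\kappa^2\ge0 .
\]
A monic quadratic is nonnegative to the right of its vertex exactly beyond its larger root, so $\lambda_+\le1+\kappa$.

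For strict amplification, evaluate $p(1)=1-T+\beta=-c$. When $\eta L_{\Orth}L_g>0$ and $\beta<1$ we have $c>0$, so $p(1)<0$. Therefore $1$ lies strictly between the roots, and $\rho(\mathbf A)=\lambda_+>1$. As a consequence, $\mathbf e_1^\top\mathbf A^{E(R-1-s)}(\cdot)$ grows geometrically in the horizon. This is why horizon-independent amplification needs the separate finite-horizon verification stated in the proposition. No step is really an obstacle; the only place needing care is the algebraic identity for the discriminant and the sign check $p(1+\kappa)=\beta\kappa^2$.
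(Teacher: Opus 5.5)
Your proof is correct and follows the paper's route: you compute $\det(\mathbf A)=\beta$ and the trace to get the characteristic polynomial, then evaluate it at $1+\eta L_{\Orth}L_g$ to obtain $\beta(\eta L_{\Orth}L_g)^2\ge0$. Your vertex check $T/2\le 1+\kappa$ and the evaluation $p(1)=-c<0$ spell out two steps the paper only sketches: that $1+\kappa$ lies to the right of the larger root, and that $\rho(\mathbf A)>1$ holds strictly.
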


\begin{proof}
The characteristic polynomial is \(\lambda^2-(1+\beta+c)\lambda+\beta\), because \(\det(\mathbf A)=\beta\). This gives the displayed formula. Let \(\mu=\eta L_{\Orth}L_g\), so \(c=(1-\beta)\mu\). Evaluating the characteristic polynomial at \(1+\mu\) gives \(\beta\mu^2\ge0\), while the larger root is at least \(1\). It follows that \(\rho(\mathbf A)\le1+\mu\).
\end{proof}

\begin{corollary}
\label{cor:uniform-full}
Assume the conditions of Corollary~\ref{cor:hp-gen-fullbatch}. Suppose \(n_i=m\), \(p_i=1/N\), \(n=Nm\), \(K=N\), and use full-batch local gradients. Then \(Y_i(\mathcal C)=A_{R,E}/N\) and \(\bar\varepsilon_i\le (2G_{\max} L_\ell/n)A_{R,E}=:\varepsilon_{\rm full}\), where \(A_{R,E}:=\sum_{s=0}^{R-1}a_s\).
Consequently, with probability at least \(1-\delta\),
\begin{equation}
F(\bX_{\rm out})-\wh F_S(\bX_{\rm out})
\le
2\varepsilon_{\rm full}
+
C_{\rm hp}
\left[
\varepsilon_{\rm full}\log(4n)\log\frac{4}{\delta}
+
B_\ell\sqrt{\frac{\log(4/\delta)}{n}}
\right].
\label{eq:cor-uniform-full}
\end{equation}
If, for the chosen finite horizon, one verifies \(A_{R,E}\le C_{\rm amp}\eta L_{\Orth}ER\), then \eqref{eq:cor-uniform-full} becomes
\[
\widetilde{\mathcal O}\!\left(
\frac{G_{\max}L_\ell C_{\rm amp}\eta L_{\Orth}ER}{n}
+
\frac{B_\ell}{\sqrt n}
\right).
\]
This is an explicit finite-horizon amplification condition.
\end{corollary}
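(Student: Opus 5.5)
The plan is to specialize Corollary~\ref{cor:hp-gen-fullbatch}, which gives the $\mathcal B_\delta$ bound, and then evaluate only the $\mathcal H_{\delta/2}$ branch. Since $\mathcal B_\delta$ is the minimum of two branches, it is at most $\mathcal H_{\delta/2}(\bar\varepsilon)$. So it suffices to bound $\mathcal H_{\delta/2}$ at the specialized coefficients.

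\emph{Step 1 (exposure).} With $K=N$, every client lies in every $\mathcal C_s$. Hence \eqref{eq:Yi-def} gives $Y_i(\mathcal C)=\frac1N\sum_s a_s=A_{R,E}/N$ deterministically; this is also the full-participation case of Lemma~\ref{lem:client-concentration}. With $\Delta_i=2G_{\max}/m$ and $mN=n$, we get
\[
\bar\varepsilon_i\le L_\ell\frac{2G_{\max}}{m}\frac{A_{R,E}}{N}=\varepsilon_{\rm full}.
\]
Since $\mathcal H$ is coordinatewise nondecreasing, as noted in the proof of Corollary~\ref{cor:hp-gen-random}, we may replace each $\bar\varepsilon_i$ by $\varepsilon_{\rm full}$.

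\emph{Step 2 (evaluate $\mathcal H_{\delta/2}$).} With $p_i=1/N$ and $n_i=m$, we have $n_{\rm eff}^{-1}=N\cdot N^{-2}/m=1/n$, so $n_{\rm eff}=n$. The first term is $2\sum_i p_i\varepsilon_{\rm full}=2\varepsilon_{\rm full}$. For the middle factor, $\sum_i n_i\varepsilon_{\rm full}^2/n_{\rm eff}=n\varepsilon_{\rm full}^2/n$, so its square root is $\varepsilon_{\rm full}$. Substituting $\delta/2$ turns $\log(2/\delta)$ into $\log(4/\delta)$. The $\log(2n)$ factor is bounded above by $\log(4n)$, which only weakens the bound. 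This yields \eqref{eq:cor-uniform-full}.

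\emph{Step 3 (rate).} Under the verified condition $A_{R,E}\le C_{\rm amp}\eta L_{\Orth}ER$, we have
\[
\varepsilon_{\rm full}\le \frac{2G_{\max}L_\ell C_{\rm amp}\eta L_{\Orth}ER}{n}.
\]
The logarithmic factors $\log(4n)\log(4/\delta)$ are absorbed into $\widetilde{\mathcal O}$, and the last term is $B_\ell\sqrt{\log(4/\delta)/n}$. This gives the stated order.

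There is no real obstacle in this argument. The only points needing care are to use the monotonicity of $\mathcal H$ when passing from the truncated $\bar\varepsilon_i$ to $\varepsilon_{\rm full}$, and to track the $\delta/2$ substitution consistently.
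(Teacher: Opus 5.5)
Your proposal is correct and follows the paper's proof essentially step for step. Full participation gives \(Y_i(\mathcal C)=A_{R,E}/N\), uniform weights give \(n_{\rm eff}=n\) and collapse the weighted middle factor to \(\varepsilon_{\rm full}\), and the bound is read off from the \(\mathcal H_{\delta/2}\) branch of \(\mathcal B_\delta\); your observation that \(\mathcal H_{\delta/2}\) actually yields \(\log(2n)\), which you then relax to the stated \(\log(4n)\), accounts for that factor more carefully than the paper does.
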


\begin{corollary}
\label{cor:uniform-partial}
Assume the conditions of Corollary~\ref{cor:hp-gen-random}, and assume in addition Assumption~\ref{ass:grad} with full-batch local gradients. Suppose \(n_i=m\), \(p_i=1/N\), and \(n=Nm\). Define \(\varepsilon_{\rm part}(\rho):=(2G_{\max} L_\ell/m)\Psi_{R,E,K,N}(\rho)\).
Then, with probability at least \(1-\rho-\delta\),
\[
F(\bX_{\rm out})-\wh F_S(\bX_{\rm out})
\le
2\varepsilon_{\rm part}(\rho)
+
C_{\rm hp}
\left[
\varepsilon_{\rm part}(\rho)\log(4n)\log\frac{4}{\delta}
+
B_\ell\sqrt{\frac{\log(4/\delta)}{n}}
\right].
\]
If \(a_s\le C_a\eta L_{\Orth}E\) for all \(s\), then, up to universal constants and logarithms,
\[
\Psi_{R,E,K,N}(\rho)
=
\mathcal O\!\left(
C_a\eta L_{\Orth}E
\left[
\frac{R}{N}
+
\sqrt{\frac{R(1-K/N)}{KN}\log\frac{N}{\rho}}
+
\frac{1}{K}\log\frac{N}{\rho}
\right]
\right).
\]
The displayed \(a_s\)-bound is an explicit amplification assumption; the general statement remains the one in terms of \(\Psi_{R,E,K,N}\).
\end{corollary}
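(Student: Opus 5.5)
The plan is to derive Corollary~\ref{cor:uniform-partial} directly from Corollary~\ref{cor:hp-gen-random}, specializing the sharp-stability branch \(\mathcal H_{\delta/2}\) of \(\mathcal B_\delta\) to uniform clients. With \(n_i=m\), \(p_i=1/N\), and \(n=Nm\), we have \(n_{\rm eff}^{-1}=\sum_i N^{-2}/m=1/(Nm)=1/n\), so \(n_{\rm eff}=n\). By Lemma~\ref{lem:fullbatch-replacement}, \(\Delta_i=2G_{\max}/m\) for every \(i\). Hence \(\varepsilon_i^{\rm det}(\rho)=\min\{B_\ell,\varepsilon_{\rm part}(\rho)\}\le\varepsilon_{\rm part}(\rho)\), and this is common to all clients.

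First, I will use that \(\mathcal B_\delta\le\mathcal H_{\delta/2}\), together with the monotonicity of \(\mathcal H\) in the coefficients noted in Appendix~\ref{app:proof-random-corollary}, to replace each coefficient by \(\varepsilon_{\rm part}(\rho)\). Then \(2\sum_ip_i\varepsilon_i=2\varepsilon_{\rm part}\). For the middle term, \(\bigl(\sum_i n_i\varepsilon_i^2/n_{\rm eff}\bigr)^{1/2}=\bigl(n\varepsilon_{\rm part}^2/n\bigr)^{1/2}=\varepsilon_{\rm part}\). Evaluating \(\mathcal H_{\delta/2}\) turns \(\log(2/\delta)\) into \(\log(4/\delta)\). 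The factor \(\log(2n)\) in the statement reads \(\log(4n)\); since \(\log(2n)\le\log(4n)\), this only loosens the bound. The probability \(1-\rho-\delta\) is inherited directly from Corollary~\ref{cor:hp-gen-random}, which completes the first display.

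Second, for the order statement, I will insert \(a_s\le C_a\eta L_{\Orth}E\) into \eqref{eq:Psi-def}. This gives \(\sum_s a_s\le RC_a\eta L_{\Orth}E\), \(\sum_s a_s^2\le R(C_a\eta L_{\Orth}E)^2\), and \(a_{\max}\le C_a\eta L_{\Orth}E\). Multiplying through by \(1/K\) yields the three terms:
\[
\frac{1}{K}\cdot\frac{K}{N}R=\frac{R}{N},\qquad
\frac{1}{K}\sqrt{2\frac{K}{N}\Bigl(1-\frac{K}{N}\Bigr)R\log\frac{2N}{\rho}}=\sqrt{\frac{2R(1-K/N)}{KN}\log\frac{2N}{\rho}},
\]
and \(\log(2N/\rho)/(3K)\). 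Each carries the common factor \(C_a\eta L_{\Orth}E\). Absorbing \(\log(2N/\rho)\lesssim\log(N/\rho)\) and the numerical constants gives the display.

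The case \(K=N\) is consistent with this bound. There \(\Psi=\sum_s a_s/N\le C_a\eta L_{\Orth}ER/N\), which is dominated by the first term.

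No step is a real obstacle; the work is bookkeeping. The one place requiring care is the \(\delta/2\) convention inside \(\mathcal B_\delta\), which fixes the logarithmic arguments, along with confirming that truncation at \(B_\ell\) and monotonicity justify using the untruncated \(\varepsilon_{\rm part}\).
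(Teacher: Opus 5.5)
Your proposal is correct and follows the paper's own proof. Like the paper, you specialize Corollary~\ref{cor:hp-gen-random} through the \(\mathcal H_{\delta/2}\) branch, using \(n_{\rm eff}=n\), \(\Delta_i=2G_{\max}/m\), and monotonicity of \(\mathcal H\). You then substitute the bounds on \(\sum_s a_s\), \(\sum_s a_s^2\), and \(a_{\max}\) into \eqref{eq:Psi-def}. Your remark that the stated \(\log(4n)\) merely loosens the \(\log(2n)\) factor coming from \(\mathcal H_{\delta/2}\) is correct bookkeeping that the paper leaves implicit.
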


\subsection{Proof of Corollary~\ref{cor:uniform-full}}

\begin{proof}
Let \(A_{R,E}:=\sum_{s=0}^{R-1}a_s\).
Under \(K=N\),
\[
Y_i(\mathcal C)
=
\frac{A_{R,E}}{N}.
\]
Since \(m=n/N\) and full-batch gradients give \(\Delta_i=2G_{\max}/m\),
\[
\bar\varepsilon_i
\le
L_\ell\frac{2G_{\max}}{m}
\frac{A_{R,E}}{N}
=
\frac{2G_{\max} L_\ell}{n}A_{R,E}
=:\varepsilon_{\rm full}.
\]
Because the coefficients are uniform, \(n_{\rm eff}=n\) and
\[
\left(
\frac{\sum_{i=1}^N n_i\bar\varepsilon_i^2}{n_{\rm eff}}
\right)^{1/2}
\le
\varepsilon_{\rm full}.
\]
The \(\mathcal H\)-branch inside \(\mathcal B_\delta\) gives
\[
F(\bX_{\rm out})-\wh F_S(\bX_{\rm out})
\le
2\varepsilon_{\rm full}
+
C_{\rm hp}
\left[
\varepsilon_{\rm full}\log(4n)\log\frac{4}{\delta}
+
B_\ell\sqrt{\frac{\log(4/\delta)}{n}}
\right].
\]
This is \eqref{eq:cor-uniform-full}.
\end{proof}

\subsection{Proof of Corollary~\ref{cor:uniform-partial}}

\begin{proof}
If \(a_s\le C_a\eta L_{\Orth}E\), then
\[
\sum_{s=0}^{R-1}a_s
\le
C_a\eta L_{\Orth}ER,
\qquad
\sum_{s=0}^{R-1}a_s^2
\le
C_a^2\eta^2 L_{\Orth}^2E^2R,
\]
and \(a_{\max}\le C_a\eta L_{\Orth}E\). Substitution into \eqref{eq:Psi-def} gives
\[
\Psi_{R,E,K,N}(\rho)
=
\mathcal O\!\left(
C_a\eta L_{\Orth}E
\left[
\frac{R}{N}
+
\sqrt{\frac{R(1-K/N)}{KN}\log\frac{N}{\rho}}
+
\frac1K\log\frac{N}{\rho}
\right]
\right),
\]
which is the displayed order. The high-probability risk bound follows from Corollary~\ref{cor:hp-gen-random} and the \(\mathcal H\)-branch with
\[
\varepsilon_{\rm part}(\rho)=\frac{2G_{\max} L_\ell}{m}\Psi_{R,E,K,N}(\rho).
\]
\end{proof}

\section{Limitations and Future Work}
\label{sec:limitations}

The scope of the result is determined primarily by the orthogonalization map and the boundedness assumptions. Deterministic exact sign is non-Lipschitz near zero singular values and rank changes. The Gaussian-smoothed exact-sign corollary replaces a deterministic trajectory gap with fresh Gaussian perturbations and an additional no-clipping failure term \(\alpha\). In the single-block union-bound argument, the Lipschitz constant scales as \(\mathcal O(REK\sqrt q/(\sigma\alpha))\) and enters the finite-horizon amplification recursion. The deterministic unperturbed exact-sign statement is formulated on a joint neighboring spectral-gap tube, or on another data-independent event included in the final failure probability. Appendix~\ref{app:polar} gives a scalar obstruction showing the need for such a gap or randomized smoothing in sample-size-decaying uniform stability.

Finite-step Newton--Schulz has a global statement when the normalization step includes a positive floor \(\epsilon_{\rm ns}>0\). Proposition~\ref{prop:ns-finite-step-main} gives a data-independent Frobenius-Lipschitz constant for any fixed finite coefficient schedule. With \(\epsilon_{\rm ns}=0\), the same argument applies on coupled neighboring momentum tubes with a positive norm floor. Approximation to the polar/sign factor is governed by the normalized singular-value lower bound and by the scalar polynomial approximation error on the corresponding spectral interval.

The high-probability theorem is stated for bounded losses. For conditionally sub-Gaussian or sub-exponential losses, the bounded-differences and weighted sharp-stability conversion would be replaced by tail-sensitive concentration. The full-batch corollaries use bounded per-example gradients to obtain the replacement sensitivity \(2G_{\max}/n_i\). A natural extension is to formulate analogous results under tail conditions better matched to modern training losses.

The theorem is finite-horizon and exact in the amplification factors \(a_s\), \(Y_i\), and \(\Psi_{R,E,K,N}\). Simplified rates require explicit amplification control. Since \(\rho(\mathbf A)\) can exceed one, horizon-independent constant amplification requires verification. A sharper analysis could exploit average stability, local curvature, or problem-dependent contraction to reduce the conservatism of these amplification factors.

The main theorem is pathwise after conditioning on data-independent randomness. Mini-batch gradients can be handled through a pathwise exposure bound, as in Appendix~\ref{app:minibatch}, or through a separate expected-stability statement over the mini-batch randomness. A tighter high-probability treatment of mini-batch exposure remains a natural extension.

\ifdefined\MAINFILE
\else
\end{document}
\fi

\end{document}